\documentclass{article}
\usepackage[preprint]{neurips_2025}
\usepackage[utf8]{inputenc} 
\usepackage[T1]{fontenc}    
\usepackage{hyperref}       
\usepackage{url}            
\usepackage{booktabs}       
\usepackage{amsfonts}       
\usepackage{nicefrac}       
\usepackage{microtype}      


\PassOptionsToPackage{svgnames}{xcolor}

\usepackage{times}
\usepackage{microtype}
\usepackage{graphicx}
\usepackage{multirow}
\usepackage{rotating}
\usepackage{floatrow}
\usepackage{pifont}
\usepackage[svgnames]{xcolor} 
\usepackage{hyperref}

\usepackage{amsmath}
\usepackage{amssymb}
\usepackage{amsthm}
\usepackage{mathtools}
\usepackage[capitalize,noabbrev]{cleveref}

\newtheorem{theorem}{Theorem}[section]

\newtheorem{lemma}[theorem]{Lemma}

\newtheorem{definition}[theorem]{Definition}
\newtheorem{assumption}[theorem]{Assumption}

%

\newcommand{\cmark}{\textcolor{DarkGreen}{\checkmark}}
\newcommand{\xtimes}{\textcolor{DarkRed}{\texttimes}}

\title{Convergence of TD(0) under Polynomial Mixing with Nonlinear Function Approximation}

\author{%
  Anupama Sridhar \\
  Stanford University\\
  \And
  Alexander Johansen \\
  Stanford University \\
}

\begin{document}

\maketitle

\begin{abstract}
Temporal Difference Learning (TD(0)) is fundamental in reinforcement learning, yet its finite-sample behavior under non-i.i.d. data and nonlinear approximation remains unknown.
We provide the first high-probability, finite-sample analysis of vanilla TD(0) on polynomially mixing Markov data, assuming only Hölder continuity and bounded generalized gradients.
This breaks with previous work, which often requires subsampling, projections, or instance-dependent step-sizes.
Concretely, for mixing exponent $\beta > 1$, Hölder continuity exponent $\gamma$, and step-size decay rate $\eta \in (1/2, 1]$, we show that, with high probability,
\[
\| \theta_t - \theta^* \| \leq C(\beta, \gamma, \eta)\, t^{-\beta/2} + C'(\gamma, \eta)\, t^{-\eta\gamma}
\]
after $t = \mathcal{O}(1/\varepsilon^2)$ iterations.
These bounds match the known i.i.d. rates and hold even when initialization is nonstationary. Central to our proof is a novel discrete-time coupling that bypasses geometric ergodicity, yielding the first such guarantee for nonlinear TD(0) under realistic mixing.
\end{abstract}


\section{Introduction}
Temporal Difference (TD) learning~\cite{sutton1988learning,sutton2018reinforcement} is a core algorithm in reinforcement learning, widely used to estimate value functions from partial trajectories in a Markov decision process (MDP). Unlike Monte Carlo methods, TD(0) enables online, low-variance updates by bootstrapping future predictions.
TD(0) also underpins widely-used algorithms such as Q-learning~\cite{watkins1992q}, which updates action-value estimates, and Proximal Policy Optimization (PPO)~\cite{schulman2017proximal}, where it supports advantage estimation via temporal differences.

Despite its empirical success, the theoretical understanding of TD(0) under realistic assumptions remains limited, especially when function approximation is nonlinear and data is temporally correlated. While classic TD learning has been extensively analyzed in the linear and tabular settings~\cite{tsitsiklis1996analysis,borkar2000ode,lakshminarayanan2018linear, bhandari18, dalal2018finite, li2024high, samsonov24}, many of these results rely on strong assumptions such as i.i.d.\ sampling, geometric mixing, or projections onto convex sets. In contrast, practical applications often involve Markovian data with long-range dependencies and the use of unmodified TD(0) with instance-independent step sizes.

Recent work has attempted to relax these constraints by considering geometrically ergodic chains or dropping samples to induce near-independence~\cite{li2024high, samsonov24,durmus2024finite}. However, these methods require runtime access to idealized parameters, restrict the algorithm to carefully tuned conditions, or sacrifice sample efficiency. Thus, the convergence behavior of TD(0) in realistic regimes—namely, nonlinear function approximation, polynomial mixing, and full trajectory use without projection or subsampling—remains an open question.

To address this, we analyze TD(0) under a standard update rule with polynomially mixing data, fixed step sizes, and no algorithmic changes. We prove high-probability convergence and characterize the rate via the mixing exponent and approximation class. Our analysis uses a discrete-time measure-based approach with a novel block decomposition and coupling argument to manage dependencies.

Specifically, we consider the TD(0) update rule:
\[
\theta_k \gets \theta_{k-1} + \alpha_k \left( r_k + \gamma f_{\theta_{k-1}}(s_k) - f_{\theta_{k-1}}(s_{k-1}) \right) \nabla_\theta f_{\theta_{k-1}}(s_{k-1}),
\]
where \( f_\theta : \mathcal{S} \to \mathbb{R} \) is a parameterized value function and \( \theta_k \in \mathbb{R}^d \) denotes the parameters at iteration \( k \). We study both the linear case, where \( f_\theta(s) = \phi(s)^\top \theta \), and general nonlinear approximators satisfying mild regularity conditions. Our main result establishes that, under polynomial mixing and Hölder continuity, TD(0) converges at a rate matching the known i.i.d.\ lower bounds when the mixing is fast, and degrades gracefully otherwise.

\paragraph{Notations.}
We denote the state space by \( \mathcal{S} \subset \mathbb{R}^n \), and use \( \phi: \mathcal{S} \to \mathbb{R}^d \) to denote a feature map in the linear setting. Let \( \theta_k \in \mathbb{R}^d \) be the TD iterate at time \( k \), and let \( \{s_k, r_k\}_{k \ge 0} \) be the trajectory generated by a fixed policy \( \pi \) on an MDP. For any function \( f: \mathcal{S} \to \mathbb{R} \), we define the squared \( \mu \)-norm as \( \|f\|^2 := \mathbb{E}_{s \sim \mu}[f(s)^2] \), where \( \mu \) is the stationary distribution of the chain. In our setting, the chain may not start from \( \mu \), and convergence is measured in expectation over the trajectory.

We use \( \delta_k := r_k + \gamma f_{\theta_{k-1}}(s_k) - f_{\theta_{k-1}}(s_{k-1}) \) to denote the TD error at time \( k \), and assume \( |r_k| \le R_{\max} \) almost surely. The learning rate is denoted \( \alpha_k = \alpha_0 k^{-\eta} \) for some \( \eta \in (1/2,1] \). For nonlinear approximation, we assume that \( f_\theta \) is \( \gamma \)-Hölder in \( \theta \) and has generalized gradient \( \nabla_\theta f_\theta(s) \) uniformly bounded by a constant \( G \). A stochastic process \( \{Z_k\} \) is said to be \( \beta \)-mixing with exponent \( \beta > 1 \) if its \(\beta\)-mixing coefficient satisfies \( \beta(t) \le C t^{-\beta} \) for all \( t \ge 1 \).

We define high-probability convergence in terms of events with probability at least \( 1 - \delta \), and use \( \tilde{O}(\cdot) \) to hide logarithmic factors. We write \( f(t) \lesssim g(t) \) if there exists a universal constant \( C \) such that \( f(t) \le C g(t) \), and \( f(t) \asymp g(t) \) if both \( f(t) \lesssim g(t) \) and \( g(t) \lesssim f(t) \) hold.

\section{Related Work}
Theoretical analysis of TD(0) has long focused on linear approximation under i.i.d. data or strong mixing assumptions. Here, we review recent efforts to extend guarantees to more realistic settings and highlight how our work differs.

\cite{samsonov24} provide high-probability bounds for TD(0) with linear approximation under geometrically mixing chains, using a data-dropping strategy and requiring the mixing constant to set the step size. While a key step forward, this deviates from standard TD(0), which processes data sequentially. In contrast, our approach uses the full data stream, avoids geometric ergodicity, and extends to nonlinear models with Hölder continuity and bounded generalized gradients~\cite{clarke1990optimization}.

\cite{li2024high} analyze TD(0) and its off-policy variant (TDC) under i.i.d. sampling. Their focus is on the stability of random matrix products and deriving sample complexity bounds for different step-size schedules. Their setting excludes Markovian data and requires problem-dependent tuning for convergence. We operate under Markovian trajectories with polynomial mixing and prove convergence under universal, instance-independent step sizes.

\cite{patil2023finite} study the finite-time behavior of TD(0) under tail-averaging and regularization, showing that such modifications yield optimal convergence rates in expectation and with high probability. Their analysis still assumes projection and focuses on the linear case. In contrast, our results apply to last-iterate TD(0) without projections, for both linear and nonlinear models.

\cite{asadi2024td} examine TD learning through an optimization lens, proving convergence under specific loss functions and geometric mixing. They assume instance-dependent step sizes and relies on algorithmic modifications, including data dropping. We avoid such assumptions and deliver convergence bounds for unmodified TD(0) under polynomial mixing and universal learning rates.

\paragraph{Comparison Summary.} Table~\ref{tab:wide_table} summarizes the key differences between our work and prior results across algorithmic assumptions, data conditions, and function classes. Our method is the only one to establish high-probability convergence for last-iterate TD(0) with nonlinear approximation under polynomial mixing, while avoiding projections and subsampling.

\begin{table*}[t]
    \makebox[\linewidth][l]{
    \renewcommand{\arraystretch}{1.2}
    \begin{tabular}{lccccccc}
        \toprule
        \textbf{Authors} & B. (\citeyear{bhandari18}) 
        & P.~(\citeyear{patil2023finite}) & L.~(\citeyear{li2024high}) & S.~(\citeyear{samsonov24}) & \textbf{This paper} \\
        \midrule
        \textbf{Algorithm type} & P-R 
        & P-R & P-R & P-R & Last iterate \\
        \textbf{Step size schedule} & $1/\sqrt{t}$ 
        & constant $\alpha$ & constant $\alpha$ & constant $\alpha$ & $1/t^\eta, \eta \in \left(\tfrac{1}{2},1\right]$ \\
        \textbf{Universal step size} & \cmark 
        & \cmark & \xtimes & \cmark & \cmark \\
        \textbf{Markovian data} & \cmark 
        & \cmark & \xtimes & \cmark & \cmark \\
        \textbf{High-order bounds} & \xtimes 
        & \cmark & \cmark & \cmark & \cmark \\
        \textbf{No projection} & \xtimes 
        & \xtimes & \cmark & \cmark & \cmark \\
        \textbf{Mixing properties} & Geometric 
        & Geometric & \xtimes & Geometric & Polynomial \\
        \textbf{Function class} & Linear 
        & Linear & Linear & Linear & Nonlinear \\
        \textbf{No data dropping} & \cmark 
        & \cmark & \cmark & \xtimes & \cmark \\
        \bottomrule
    \end{tabular}
    \caption{Comparison of convergence results for TD(0) under different assumptions and algorithmic settings. P-R: Polyak–Ruppert averaging~\cite{polyak1992acceleration}.}
    \label{tab:wide_table}}
\end{table*}

\paragraph{Polynomial vs. Geometric Mixing.}
Geometric mixing assumes exponential decay of temporal correlations, which allows prior work to treat samples as effectively independent after short gaps. However, many real-world environments- including those with sparse rewards, bottlenecks, or partial observability- mix only at polynomial rates.~\cite{doukhan1994mixing}
\section{Problem Setup}

We study the convergence of TD(0) learning with general (including nonlinear) function approximation in a Markov Reward Process (MRP) induced by a stationary policy. Below, we clearly introduce the formal setting, the algorithmic update rule, and intuitively justify our assumptions.

\subsection{Markov Reward Process (MRP)}
We consider a Markov Decision Process (MDP) defined by the tuple \((\mathcal{S}, \mathcal{A}, P, R, \gamma)\), where \(\mathcal{S}\) is the (possibly infinite) state space, \(\mathcal{A}\) the action space, \(P(s' \mid s, a)\) the transition probability, \(R(s, a) \in [-R_{\max}, R_{\max}]\) the bounded reward function, and \(\gamma \in [0,1)\) the discount factor.

Given a stationary policy \(\pi(a \mid s)\), we derive an induced Markov Reward Process (MRP) with state transition kernel and reward function given by:
\[
P_{\pi}(s' \mid s) = \sum_{a \in \mathcal{A}} \pi(a \mid s) P(s' \mid s, a), \quad R_{\pi}(s) = \sum_{a \in \mathcal{A}} \pi(a \mid s) R(s,a).
\]

\begin{assumption}[Bounded Rewards]\label{ass:bounded-rewards}
We assume that the reward function is uniformly bounded: \(\vert R(s, a) \vert \leq R_{\max}\) for all \(s \in \mathcal{S}, a \in \mathcal{A}\). This assumption simplifies our theoretical analysis by ensuring rewards remain within a fixed range.
\end{assumption}

The value function \(V^{\pi}: \mathcal{S} \to \mathbb{R}\) associated with policy \(\pi\) is intuitively the expected discounted sum of future rewards when starting from state \(s\):
\[
V^{\pi}(s) = \mathbb{E}\left[\sum_{t=0}^{\infty} \gamma^t r_{t+1} \mid s_0 = s\right],
\]
where rewards \(r_{t+1}\) satisfy \(r_{t+1} \sim R(s_t, a_t)\) with actions \(a_t \sim \pi(\cdot \mid s_t)\).

\subsection{TD(0) with General Function Approximation}
We approximate the value function \(V^\pi\) using a parameterized function \(f_{\theta}(s)\) with parameters \(\theta \in \mathbb{R}^d\). At time step \(t\), TD(0) updates parameters to reduce the error between predicted and actual returns:
\[
\theta_{t+1} = \theta_t + \alpha_t \delta_t \nabla_{\theta} f_{\theta_t}(s_t), \quad \delta_t = r_{t+1} + \gamma f_{\theta_t}(s_{t+1}) - f_{\theta_t}(s_t),
\]
where \(\delta_t\) (the TD error) measures how much our prediction differs from the observed data, and \(\alpha_t > 0\) controls the learning rate.

\subsection{Fixed Point and Stationary Solutions}

Ideally, parameter updates converge to a stable point, \(\theta^*\), at which the expected update no longer moves the parameters:
\begin{equation}\label{eq:stationary-solution}
    \mathbb{E}_{\mu}\left[\delta_t \nabla_{\theta} f_{\theta^*}(s_t)\right] = 0,
\end{equation}
where \(\mu\) is the stationary distribution of the Markov chain induced by \(\pi\).
\begin{assumption}[Existence of a Stationary Solution]\label{ass:td-fixed-point}
We assume there exists \(\theta^*\) satisfying \ref{eq:stationary-solution}.
\end{assumption}
Intuitively, \(\theta^*\) represents a stable point where updates have no systematic bias, meaning the approximation neither improves nor worsens in expectation. While this does not imply exact recovery of \(V^\pi\), it ensures that \(\theta^*\) approximates \(V^\pi\) in expectation under the stationary distribution, allowing any global minimum to qualify as a valid solution.

We define the parameter error as \(e_t = \theta_t - \theta^*\) and analyze convergence by measuring how closely our approximation \(f_{\theta_t}\) approaches the true value function \(V^\pi\) in a mean squared sense.
\section{Stability and Mixing Conditions}
\label{sec:stability-mixing}

In reinforcement learning, data is typically collected from trajectories of a Markov process. As a result, the sample points are temporally correlated, violating the standard i.i.d.\ assumption often used in statistical learning theory. To account for these dependencies, we introduce two structural assumptions that together ensure the stability and regularity of the data-generating process.

\subsection{Polynomial Mixing}

We begin by quantifying the temporal dependence of the Markov chain. Specifically, we assume that correlations between distant observations decay at a polynomial rate. This assumption is a variant of strong mixing conditions from the probability literature. In particular, it corresponds to a \emph{covariance decay} condition derived from $\alpha$-mixing.

We follow the formulation in Bradley (2005), which connects $\alpha$-mixing with bounds on function covariance (see Eq.~1.1 and Theorem 4.2, p.~12). This form is also used in recent reinforcement learning theory (e.g., \cite{mohri2018foundations, srikant2019finite}).

\begin{assumption}[Polynomial Mixing]
\label{ass:polynomial-mixing}
Let $\{x_t\}_{t \geq 0}$ denote a Markov chain on a state space $\mathcal{S}$ induced by a fixed policy $\pi$. There exist constants $C > 0$ and $\beta > 1$ such that for all bounded measurable functions $f, g : \mathcal{S} \to \mathbb{R}$ and for all $t, k \geq 0$,
\[
|\mathbb{E}[f(x_t)g(x_{t+k})] - \mathbb{E}[f(x_t)]\mathbb{E}[g(x_{t+k})]| \leq C k^{-\beta}.
\]
\end{assumption}

This condition provides a direct handle on temporal correlations and reflects a realistic regime for many practical RL environments, particularly those exhibiting long-range dependencies. While stronger notions such as geometric or exponential mixing imply faster decay, polynomial mixing is sufficient for our analysis and imposes weaker and more realistic constraints on the dynamics.

\subsection{Lyapunov Drift Condition}

To guarantee stability of the Markov chain and to prevent it from drifting into unbounded regions of the state space, we impose a Lyapunov drift condition. This is a standard tool in the analysis of Markov processes and ensures recurrence toward a stable region.

\begin{assumption}[Lyapunov Drift]
\label{ass:drift-condition}
There exists a function $V: \mathcal{S} \to [1, \infty)$, a measurable function $W: \mathcal{S} \to \mathbb{R}_+$, and constants $\lambda > 0$, $b < \infty$ such that for all $t \geq 0$,
\[
\mathbb{E}[V(x_{t+1}) \mid x_t] \leq V(x_t) - \lambda W(x_t) + b.
\]
\end{assumption}

Intuitively, this condition requires that the expected “potential” or “energy” of the process, as measured by $V(x)$, tends to decrease over time except for a bounded slack term. It ensures that the chain revisits well-behaved regions frequently and remains stable over long horizons.

\subsection{Polynomial Ergodicity}

Together, Assumptions~\ref{ass:polynomial-mixing} and~\ref{ass:drift-condition} yield a useful consequence: polynomial ergodicity. That is, the distribution of the Markov chain converges to its stationary distribution at a polynomial rate, even when starting from an arbitrary initial state.

\begin{lemma}[Polynomial Ergodicity]
\label{lemma:polynomial-ergodicity}
Suppose Assumptions~\ref{ass:polynomial-mixing} and~\ref{ass:drift-condition} hold. Then the Markov chain $\{x_t\}$ is polynomially ergodic. In particular, there exists a constant $C > 0$ such that for all $x_0 \in \mathcal{S}$ and measurable sets $A \subseteq \mathcal{S}$,
\[
|\mathbb{P}(x_t \in A \mid x_0) - \pi(A)| \leq C(1 + V(x_0)) t^{-\beta}, \quad \text{for all } t \geq 1,
\]
where $\pi$ denotes the unique stationary distribution of the chain.
\end{lemma}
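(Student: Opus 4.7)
The plan is a coupling argument that combines the Lyapunov drift (Assumption~\ref{ass:drift-condition}) with the polynomial mixing hypothesis (Assumption~\ref{ass:polynomial-mixing}). Construct two synchronous copies $\{X_t\}$ and $\{Y_t\}$ of the chain on a common probability space, with $X_0 = x_0$ and $Y_0 \sim \pi$; both marginally follow $P_\pi$. By the coupling inequality, $|\mathbb{P}(x_t \in A \mid x_0) - \pi(A)| \le \mathbb{P}(T > t)$, where $T$ is the first meeting time of the two chains. It therefore suffices to show $\mathbb{P}(T > t) \lesssim (1 + V(x_0))\, t^{-\beta}$.

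Next I would use the drift condition to control return times to a ``small set.'' Fix a sublevel set $C = \{x : V(x) \le K\}$ with $K$ chosen so that $\lambda W(x) - b$ is bounded below by a positive constant outside $C$. Iterating the drift inequality and applying optional stopping to the supermartingale $V(X_{t \wedge \tau_C}) + c\,(t \wedge \tau_C)$ gives $\mathbb{E}_{x_0}[\tau_C] \lesssim 1 + V(x_0)$. To obtain polynomial moments, I would apply the same scheme to fractional powers $V^\alpha$ in the style of Jarner--Roberts and Douc--Fort--Moulines--Soulier, yielding $\mathbb{E}_{x_0}[\tau_C^{\beta}] \lesssim 1 + V(x_0)$ and, via the strong Markov property, tight tail bounds on excursion times away from $C$.

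Within $C$, I would use the polynomial mixing hypothesis to establish a quantitative minorization. Specializing Assumption~\ref{ass:polynomial-mixing} to bounded indicator functions and invoking Theorem~4.2 of Bradley (2005) to pass from covariance decay to total variation yields, for $x \in C$,
\[
\|P^k(x, \cdot) - \pi\|_{TV} \le C' k^{-\beta}.
\]
Hence whenever $X_s$ and $Y_s$ are simultaneously in $C$, the maximal coupling of their $k$-step kernels succeeds with probability at least $1 - O(k^{-\beta})$. Combining this with the return-time moment bounds from the drift, the number of independent coupling trials before time $t$ is at least of order $t$ (up to logarithmic factors), each failing with probability $O(k^{-\beta})$ after suitably balancing $k$ against $t$, and the resulting product gives $\mathbb{P}(T > t) \lesssim (1 + V(x_0))\, t^{-\beta}$.

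The main obstacle is synchronizing the visit times to $C$: both chains must meet $C$ simultaneously for a coupling attempt, and compounding the per-trial failure probability with the polynomial-rate return times (rather than geometric ones) requires careful balancing to avoid degrading the exponent $\beta$. A related subtlety is extracting the $(1 + V(x_0))$ factor precisely, which forces one to track moments of order $\beta$ on excursion times via fractional Lyapunov functions rather than the simpler geometric drift toolkit. Once these ingredients are in place, the coupling inequality delivers the claimed rate.
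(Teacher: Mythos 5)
Your route is genuinely different from the paper's, and considerably more substantive. The paper's appendix proof never actually derives the pointwise bound: its Step~4 simply asserts that $|P^t(x_0,A)-\pi(A)| \le C(1+V(x_0))\,t^{-\beta}$ holds ``under the polynomial mixing assumption,'' citing chapters of Meyn and Tweedie, and then integrates this over the initial distribution and takes a supremum over $A$. In other words, the paper cites the conclusion rather than proving it (and along the way silently strengthens Assumption~\ref{ass:drift-condition} to a polynomial drift $\mathbb{E}[V(x_{t+1})\mid x_t=x] \le V(x) - cV(x)^d + b$). Your coupling-inequality / return-time-moment / minorization scheme is the standard Jarner--Roberts and Douc--Fort--Moulines--Soulier machinery, and it is the correct way to actually establish a statement of this form from drift plus mixing.

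That said, two steps in your outline do not follow from the assumptions as stated. First, Assumption~\ref{ass:drift-condition} only posits $\mathbb{E}[V(x_{t+1})\mid x_t] \le V(x_t) - \lambda W(x_t) + b$ for an arbitrary nonnegative measurable $W$; nothing ties $W$ to $V$ or to the exponent $\beta$. Your claim $\mathbb{E}_{x_0}[\tau_C^{\beta}] \lesssim 1+V(x_0)$ requires a quantitative drift of the form $PV \le V - cV^{1-1/\beta} + b\,\mathbf{1}_C$ (or an equivalent $(V,\phi)$-modulated condition); with the assumption as written, return-time moments of order $\beta$ cannot be extracted, and the exponent in your final bound would be whatever the drift happens to supply, not the $\beta$ appearing in the mixing assumption. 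Second, Assumption~\ref{ass:polynomial-mixing} is a covariance-decay ($\alpha$-mixing type) statement about unconditional two-time distributions; upgrading it to the pointwise-in-$x$ total-variation bound $\|P^k(x,\cdot)-\pi\|_{TV} \le C'k^{-\beta}$ uniformly over $x \in C$ is not licensed by Bradley's covariance inequalities, which run in the opposite direction (from mixing coefficients to covariance bounds) and are averaged over the initial state. Both gaps are inherited from the paper's assumption set rather than introduced by you --- the paper's own proof papers over exactly these points --- but as written your argument would not close without strengthening the hypotheses.
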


\begin{proof}
This follows from the standard drift condition + coupling argument framework; see \cite{meyn2012markov}, Chapter 5/Chapter 14-16 and \cite{bradley2005basic} for details. A complete proof is provided in Appendix~\ref{app:proof-polynomial-ergodicity}.
\end{proof}

\section{Handling Data Dependencies}
\label{sec:handling-data-dependencies}

The data used in reinforcement learning algorithms is typically generated by a Markov process, introducing temporal dependencies between observations. These dependencies can pose challenges for theoretical analysis, especially when deriving finite-sample bounds. To rigorously address this, we adopt tools from the theory of mixing processes. In particular, we exploit the polynomial ergodicity established in Section~\ref{sec:stability-mixing} to construct approximately independent blocks, and we develop a coupling framework to quantify the impact of these dependencies.

\subsection{Block Decomposition via Polynomial Ergodicity}

A standard technique for handling dependent sequences is to partition the trajectory into non-overlapping blocks. When the underlying process exhibits sufficient mixing, the correlation between blocks decays as their temporal separation increases. The following lemma quantifies this under the polynomial ergodicity assumption.

\begin{lemma}[Dependent Blocks]
\label{lemma:dependent-blocks}
Let $\{x_t\}_{t=1}^\infty$ be a Markov chain satisfying polynomial ergodicity with rate $\beta > 1$ as defined in Lemma~\ref{lemma:polynomial-ergodicity}. Partition the sequence into non-overlapping blocks of size $b$:
\[
B_k = (x_{(k-1)b + 1}, \ldots, x_{kb}), \quad k = 1, 2, \ldots, \left\lfloor \frac{n}{b} \right\rfloor.
\]
Then for all $k \geq 2$, the distance between the conditional and marginal dist. of block $B_k$ satisfies:
\[
\left\| \mathbb{P}(B_k \mid B_{k-1}) - \mathbb{P}(B_k) \right\|_{\mathrm{TV}} \leq C b^{-\beta},
\]
where $C > 0$ is a constant depending on the mixing parameters of the chain.
\end{lemma}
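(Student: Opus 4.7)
The plan is to reduce the block-level TV distance to a single-state conditioning problem via the Markov property, and then upgrade the marginal polynomial ergodicity from \Cref{lemma:polynomial-ergodicity} to a joint distributional bound by means of a coalescent coupling.

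First, since the chain is Markov, conditioning on the full block $B_{k-1}$ is equivalent to conditioning only on its terminal state $x_{(k-1)b}$, so $\mathbb{P}(B_k \mid B_{k-1}) = \mathbb{P}(B_k \mid x_{(k-1)b})$. Writing the marginal as $\mathbb{P}(B_k) = \int \mathbb{P}(B_k \mid x_{(k-1)b} = y)\, d\mu_{(k-1)b}(y)$ and using convexity of total variation, the desired bound reduces to comparing block laws generated by two initial distributions for $x_{(k-1)b}$: a Dirac mass at the observed value and the stationary measure $\pi$ (up to the cost of $\mu_{(k-1)b}$ being close to $\pi$, which \Cref{lemma:polynomial-ergodicity} already controls at rate at least $b^{-\beta}$).

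Second, I would construct a coalescent coupling $(X_t, Y_t)$ on a common probability space with $X_0 = x_{(k-1)b}$ and $Y_0 \sim \pi$, using Nummelin splitting on the small sets implicit in \Cref{ass:drift-condition}. Whenever both chains simultaneously enter a minorization set, they merge and remain identical thereafter. Standard polynomial drift theory (cf.~\cite{meyn2012markov}, Chapters 14--15) then yields a coupling-time tail $\mathbb{P}(\tau > b) \lesssim (1 + V(x_{(k-1)b}))\, b^{-\beta}$, matching the rate in \Cref{lemma:polynomial-ergodicity}. By the coupling inequality applied to the \emph{joint} law of the paths, this bounds the TV distance between the entire block distributions---not merely their terminal marginals---because after $\tau$ the two trajectories agree identically. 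Taking expectation of $V$ against $\mu_{(k-1)b}$, which is uniformly bounded in $V$-moment under the drift condition, converts the state-dependent estimate into the uniform constant $C$ in the claimed bound.

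The main obstacle, where my approach must be careful, is precisely this lifting from a marginal statement to a joint one: a naive data-processing argument bounds the joint TV by the first-coordinate marginal TV $\|P(x_{(k-1)b},\cdot)-\pi\|_{\mathrm{TV}}$, which does not vanish as $b\to\infty$. Coalescence via Nummelin splitting sidesteps this issue, since paths that have already merged contribute zero to the joint TV, and the probability of non-coalescence within the block is exactly what polynomial drift controls at rate $b^{-\beta}$. Verifying that the splitting construction applies under only the paper's mild mixing and drift assumptions, and handling the possibly nonstationary $\mu_{(k-1)b}$ uniformly in $k$, are the remaining technical checks.
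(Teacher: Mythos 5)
Your overall strategy mirrors the paper's: reduce the conditioning on $B_{k-1}$ to conditioning on its terminal state $x_{(k-1)b}$ via the Markov property, and then invoke polynomial ergodicity of the $b$-step kernel. You go further than the paper in two respects: you correctly observe that the marginal bound $\|P^b(x,\cdot)-\pi\|_{\mathrm{TV}}\le C(1+V(x))b^{-\beta}$ does not by itself control the total variation distance between the \emph{joint} laws of the block, and you propose a Nummelin-splitting coalescent coupling to lift the marginal statement to a pathwise one. That diagnosis is a genuine insight — the paper's own proof silently identifies the law of the whole block $(x_{(k-1)b+1},\ldots,x_{kb})$ given $x_{(k-1)b}$ with the single-state kernel $P^b(x_{(k-1)b},\cdot)$, which is exactly the conflation you flag.

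However, your fix does not close the gap. The block $B_k$ begins at time $(k-1)b+1$, immediately after the conditioning point. In your coupling with $X_0=x_{(k-1)b}$ and $Y_0\sim\pi$, the two path segments $(X_1,\ldots,X_b)$ and $(Y_1,\ldots,Y_b)$ coincide only on the event $\{\tau\le 1\}$: if the chains coalesce at some time $1<\tau\le b$, the early coordinates of the two blocks still differ, and those coordinates are part of the block law. The coupling inequality therefore yields $\|\mathbb{P}(B_k\mid B_{k-1})-\mathbb{P}(B_k)\|_{\mathrm{TV}}\le\mathbb{P}(\tau>1)$, which is essentially the one-step total variation distance $\|P(x_{(k-1)b},\cdot)-\pi\|_{\mathrm{TV}}$ and does not decay as $b\to\infty$. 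The tail bound $\mathbb{P}(\tau>b)\lesssim(1+V(x))b^{-\beta}$ controls the block TV only if there is a gap of length on the order of $b$ between the conditioning $\sigma$-algebra and the start of the block — i.e., for non-adjacent blocks with $|k-j|\ge 2$, or in a Bernstein-type big-block/small-block scheme where alternating blocks are discarded to create the separation. Your machinery would prove the lemma in that modified form (and correctly recovers Lemma~\ref{lemma:polynomial-ergodicity} for the terminal marginal), but as written it does not establish the stated $Cb^{-\beta}$ bound for adjacent blocks; to be fair, neither does the paper's own argument, which has the same defect without acknowledging it.
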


\begin{proof}
Since the Markov chain is polynomially ergodic, the marginal distribution of $x_{kb}$ given $x_{(k-1)b}$ is close in total variation to the stationary distribution, with a gap decaying as $b^{-\beta}$. By the Markov property, the distribution of $B_k$ conditioned on $B_{k-1}$ depends only on $x_{(k-1)b}$, and hence inherits this decay. Full details are provided in Appendix~\ref{app:proof-block-independence-approximation}.
\end{proof}

This result allows us to treat well-separated blocks as approximately independent, which is key to obtaining concentration bounds under weak dependence.

\subsection{Coupling Argument for Dependent Sequences}
Coupling techniques offer an alternative way to control dependencies. The idea is to construct two Markov chains on a common probability space such that their discrepancy over time can be bounded in terms of the mixing rate.

\begin{lemma}[Coupling Under Polynomial Ergodicity]
\label{lemma:coupling-argument-polynomial-mixing}
Let $\{x_t\}$ and $\{y_t\}$ be two Markov chains satisfying polynomial ergodicity with mixing rates $\alpha_x$ and $\alpha_y$ respectively. There exists a coupling such that for all $t \geq 1$,
\[
\mathbb{P}(x_t \neq y_t) \leq C t^{-\beta},
\]
where $\beta = \min\{\alpha_x, \alpha_y\}$ and $C > 0$ is a constant.
\end{lemma}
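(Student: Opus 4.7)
The plan is to combine the two ergodicity estimates via the triangle inequality in total variation and then realize the resulting TV bound as a coupling through a standard coalescing construction. Concretely, since both chains are polynomially ergodic by Lemma~\ref{lemma:polynomial-ergodicity}, there exist constants $C_x, C_y$ such that $\|\mathbb{P}(x_t \in \cdot \mid x_0) - \pi(\cdot)\|_{\mathrm{TV}} \le C_x(1+V(x_0))\, t^{-\alpha_x}$ and the analogous bound holds for $\{y_t\}$ with exponent $\alpha_y$. The triangle inequality then yields
\[
\|\mathcal{L}(x_t) - \mathcal{L}(y_t)\|_{\mathrm{TV}} \le C_x(1+V(x_0))\,t^{-\alpha_x} + C_y(1+V(y_0))\,t^{-\alpha_y} \le C\, t^{-\beta},
\]
with $\beta = \min\{\alpha_x,\alpha_y\}$ and $C$ absorbing the initial-state factors.

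Next I would upgrade this distributional bound into a pathwise coupling. The key tool is the coupling inequality: for any coupling $(\tilde x_t, \tilde y_t)$ of the marginals, $\mathbb{P}(\tilde x_t \ne \tilde y_t) \ge \|\mathcal{L}(x_t) - \mathcal{L}(y_t)\|_{\mathrm{TV}}$, with equality for the maximal coupling. I would construct an explicit coupling by iterating the following rule at each step: sample $(\tilde x_{t+1}, \tilde y_{t+1})$ from the maximal coupling of the one-step transition kernels given the current states, and once the two chains coincide, let them evolve jointly (the coalescing/sticky coupling). This defines a valid joint law whose marginals agree with $\{x_t\}$ and $\{y_t\}$, and in which $\mathbb{P}(\tilde x_t \ne \tilde y_t)$ is dominated by the TV gap between the marginal laws, hence by $C t^{-\beta}$.

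The main obstacle I anticipate is the usual subtlety in coupling arguments: the one-step maximal coupling does not automatically achieve the maximal coupling at time $t$, so the inequality $\mathbb{P}(\tilde x_t \ne \tilde y_t) \le \|\mathcal{L}(x_t)-\mathcal{L}(y_t)\|_{\mathrm{TV}}$ is not immediate from stepwise maximality. I would circumvent this by invoking the classical result (see \cite{meyn2012markov}, Ch.~19, and \cite{bradley2005basic}) that for any two Markov chains with the same transition kernel (or comparable kernels) a coalescing maximal coupling exists whose meeting-time tail is controlled by the TV distance between the marginals, and then applying our bound above. A second, minor obstacle is the dependence on the initial Lyapunov value $V(x_0) + V(y_0)$; I would absorb this into the constant $C$ by appealing to Assumption~\ref{ass:drift-condition}, which guarantees $\mathbb{E}[V(x_0)] < \infty$ and hence boundedness in the regime of interest. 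The resulting bound $\mathbb{P}(\tilde x_t \ne \tilde y_t) \le C t^{-\beta}$ then gives the claim.
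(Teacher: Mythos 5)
Your proposal is essentially correct, but it takes a genuinely different route from the paper. The paper partitions time into blocks of size $b$, bounds the total variation gap $\|P^b(x_{(k-1)b},\cdot)-\pi(\cdot)\|_{\mathrm{TV}}\le C_0 b^{-\alpha}(1+V_{\max})$ at each block boundary, sums these bounds over the $t/b$ blocks to get $\mathbb{P}(\tau>t)\le C\,t\,b^{-\alpha-1}$, and then chooses $b=t$ to recover the rate $t^{-\alpha}$. You instead bound $\|\mathcal{L}(x_t)-\mathcal{L}(y_t)\|_{\mathrm{TV}}$ directly by the triangle inequality through $\pi$ and then realize that bound as a pathwise coupling via a coalescing maximal coupling. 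Your route is arguably cleaner and more rigorous: the paper's aggregation step (summing TV gaps across blocks to bound the probability of never coupling) is heuristic, and its choice $b=t$ collapses the block structure to a single block, at which point the argument is really just the one-shot TV bound you wrote down. The classical result you need to close your argument is Goldstein's maximal coupling (see also Lindvall or Thorisson rather than Ch.~19 of \cite{meyn2012markov}): for two copies of a chain with a common transition kernel there is a coalescing coupling whose meeting time satisfies $\mathbb{P}(T>t)=\|\mathcal{L}(x_t)-\mathcal{L}(y_t)\|_{\mathrm{TV}}$, precisely because for a fixed kernel the TV distance between the laws of the futures from time $t$ equals the TV distance between the time-$t$ marginals. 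With that citation your anticipated obstacle is fully resolved. One caveat you should make explicit: your triangle inequality implicitly assumes both chains share the same stationary distribution $\pi$; if $\alpha_x\neq\alpha_y$ came from genuinely different kernels with $\pi_x\neq\pi_y$, the bound would pick up an irreducible $\|\pi_x-\pi_y\|_{\mathrm{TV}}$ term. The paper's appendix confirms the intended setting is two instances of the same chain, so your argument goes through, but the hypothesis should be stated.
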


\begin{proof}
The proof follows from a standard synchronous coupling argument, where both chains are driven by shared randomness. Under polynomial ergodicity, the transition kernels converge in total variation, allowing us to bound the marginal mismatch probability. Details are in Appendix~\ref{app:proof-coupling-argument-polynomial-mixing}.
\end{proof}

\subsection{Covariance Bounds Across Blocks}

We now quantify the effect of temporal dependence in terms of cross-block covariance. This bound will be used to control the second moment of empirical averages, a crucial step in obtaining concentration results.

\begin{lemma}[Covariance Between Blocks]
\label{lemma:covariance-between-blocks}
Let $\{x_t\}_{t=1}^\infty$ be a Markov chain satisfying polynomial ergodicity with rate $\beta > 1$. Define block sums $Y_k = \sum_{t=(k-1)b + 1}^{kb} x_t$. Then for any $k \neq j$,
\[
|\mathrm{Cov}(Y_k, Y_j)| \leq C b^2 |k - j|^{-\beta},
\]
where $C > 0$ is a constant depending on the mixing parameters.
\end{lemma}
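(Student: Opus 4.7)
The plan is to expand the block covariance as a double sum of pairwise covariances and apply the polynomial mixing assumption term by term. Explicitly, writing $B_k = \{(k-1)b+1,\ldots,kb\}$ and similarly for $B_j$, bilinearity gives
\[
\mathrm{Cov}(Y_k,Y_j) \;=\; \sum_{t\in B_k}\sum_{s\in B_j}\mathrm{Cov}(x_t,x_s),
\]
so there are exactly $b^2$ summands to control. I will treat each coordinate $x_t$ as a bounded measurable observable so that Assumption~\ref{ass:polynomial-mixing} applies directly with some constant $C_0$ (any scaling from the boundedness is absorbed into $C_0$); this matches the standing bounded-reward/state setting of the paper.

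Next I would apply polynomial mixing pairwise: for $t\neq s$,
\[
|\mathrm{Cov}(x_t,x_s)| \;\leq\; C_0\,|t-s|^{-\beta}.
\]
Assuming WLOG $k>j$, every $t\in B_k$ and $s\in B_j$ satisfies $t-s \geq (k-1)b+1 - jb = (k-j-1)b + 1$. I then split on the distance between blocks. For adjacent blocks ($k-j=1$) the separation may be as small as $1$, so I bound each pairwise covariance by the constant $C_0$ and sum the $b^2$ terms to obtain $|\mathrm{Cov}(Y_k,Y_j)| \leq C_0 b^2$, which matches the target $C_0 b^2\,|k-j|^{-\beta}$ at $|k-j|=1$. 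For $k-j\geq 2$, I use $(k-j-1)b+1 \geq \tfrac{1}{2}(k-j)b$ (valid since $k-j\geq 2$), so every pairwise covariance is at most $C_0\bigl(\tfrac{(k-j)b}{2}\bigr)^{-\beta} = 2^\beta C_0\, b^{-\beta}(k-j)^{-\beta}$. Summing over the $b^2$ index pairs yields
\[
|\mathrm{Cov}(Y_k,Y_j)| \;\leq\; 2^\beta C_0\, b^{2-\beta}(k-j)^{-\beta} \;\leq\; 2^\beta C_0\, b^2 |k-j|^{-\beta},
\]
since $\beta>0$ makes $b^{-\beta}\leq 1$. Taking $C := 2^\beta C_0$ gives the claim.

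The main obstacle is a mild one: ensuring that Assumption~\ref{ass:polynomial-mixing}, stated for bounded measurable $f,g$, can be invoked as a raw covariance bound on $x_t$ itself. I would handle this by appealing to the bounded-rewards/state convention used throughout the paper (Assumption~\ref{ass:bounded-rewards} and the surrounding discussion), which lets me take $f,g$ equal to the identity on a bounded range without loss. A secondary subtlety is the boundary case $|k-j|=1$: the naive estimate $|t-s|\geq 1$ looks wasteful, and one could in principle sharpen it by grouping pairs with the same lag and exploiting summability of $\sum_d d^{-\beta}$ for $\beta>1$, but since the stated bound $Cb^2|k-j|^{-\beta}$ is already saturated by the adjacent case under the naive count, no such refinement is needed for the lemma as written.
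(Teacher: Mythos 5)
Your proof follows essentially the same route as the paper's: expand $\mathrm{Cov}(Y_k,Y_j)$ by bilinearity into the $b^2$ pairwise covariances, apply the polynomial mixing bound to each pair using the inter-block separation $|t-s| \ge (k-j-1)b+1$, and sum to get $C\,b^{2}|k-j|^{-\beta}$ (in fact $C\,b^{2-\beta}|k-j|^{-\beta}$). Your explicit case split at $|k-j|=1$ is actually cleaner than the paper's own handling, which passes through the inequality $(b(m-1))^{-\beta} \le (bm)^{-\beta}$ — degenerate at $m=1$ and pointing the wrong way for $m\ge 2$ — so your version quietly repairs a rough spot rather than introducing any gap.
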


\begin{proof}
This result follows by decomposing the covariance into pairwise terms and applying the mixing condition to each. A full derivation is presented in Appendix~\ref{app:proof-covariance-between-blocks}.
\end{proof}

\subsection{Concentration Inequality for Dependent Data}

We conclude this section by stating a high-probability concentration result for polynomially mixing sequences. Although the data is dependent, the previous lemmas allow us to derive bounds analogous to those in the i.i.d.\ setting.

\begin{lemma}[Concentration Under Polynomial Ergodicity]
\label{theorem:concentration-block}
Let $\{x_t\}_{t=1}^n$ be a trajectory from a Markov chain satisfying polynomial ergodicity with rate $\beta > 1$. Assume each $x_t$ is uniformly bounded by $|x_t| \leq M$ and has variance at most $\sigma^2$. Then for any $\epsilon > 0$,
\[
\mathbb{P}\left(\left| \frac{1}{n} \sum_{t=1}^n x_t - \mathbb{E}[x_t] \right| > \epsilon \right) \leq 2 \exp\left( - \frac{n \epsilon^2}{2C_\beta} \right),
\]
where $C_\beta$ depends on $\beta$, $M$, and $\sigma^2$.
\end{lemma}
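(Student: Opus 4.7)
The plan is to adapt the classical Bernstein/Yu blocks technique to the polynomial mixing setting, leveraging the block decomposition (Lemma \ref{lemma:dependent-blocks}), the coupling construction (Lemma \ref{lemma:coupling-argument-polynomial-mixing}), and the cross-block covariance bound (Lemma \ref{lemma:covariance-between-blocks}). First I would center the variables as $\bar{x}_t = x_t - \mathbb{E}[x_t]$ and partition $\{1,\dots,n\}$ into $m = \lfloor n/b \rfloor$ consecutive blocks of length $b$ (to be chosen), with block sums $Y_k = \sum_{t \in B_k}\bar{x}_t$. Each $|Y_k| \leq 2Mb$, and by Lemma \ref{lemma:covariance-between-blocks}, $\mathrm{Var}(Y_k) \leq \sigma^2 b + C'_\beta$, where the correction term is finite because $\sum_j |j|^{-\beta}<\infty$ for $\beta>1$.

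Second, I would construct independent surrogate block sums $\{\tilde{Y}_k\}$ matching the one-block marginals of $\{Y_k\}$ by iterated maximal coupling of consecutive blocks: applying Lemma \ref{lemma:dependent-blocks} recursively yields a joint law under which $\mathbb{P}(Y_k \neq \tilde Y_k) \leq C b^{-\beta}$ for each $k$, and a union bound controls the total coupling failure probability by $C n b^{-(\beta+1)}$. On the good coupling event $\sum_k Y_k = \sum_k \tilde Y_k$, so Bernstein's inequality applied to the independent sum gives
\[
\mathbb{P}\!\left(\left|\tfrac{1}{n}\sum_k \tilde Y_k\right| > \epsilon\right) \leq 2\exp\!\left(-\frac{n \epsilon^2}{2(\sigma^2 + C'_\beta/b) + (4Mb/3)\epsilon}\right),
\]
which reduces to the sub-Gaussian form in the regime $\epsilon \lesssim \sigma^2/(Mb)$.

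The final step is to choose $b = b(n)$ that balances the Bernstein exponent against the coupling tail. Taking $b = \Theta((\log n)^{1/(\beta+1)})$ makes the coupling failure term negligible compared to the exponential and only inflates $C_\beta$ by a polylogarithmic factor, which I absorb into the constant in the final bound. \emph{Main obstacle.} The delicate part is precisely this balancing: because polynomial mixing yields only a power-law coupling cost, one cannot drive the failure probability to be exponentially small as in the geometrically mixing case, so the stated sub-Gaussian bound holds with $C_\beta$ absorbing $M^2$, $\sigma^2$, $\beta$, and a mild logarithmic inflation in $n$; a fully tight exponent in $\epsilon$ would require either a slightly different normalization or a range restriction $\epsilon \gtrsim n^{-\kappa(\beta)}$ for some small $\kappa(\beta)>0$. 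Everything else is a routine application of the block and coupling machinery already established.
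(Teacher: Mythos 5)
Your overall architecture (blocking, controlling within- and cross-block variance via Lemma~\ref{lemma:covariance-between-blocks}, then a Bernstein bound) is close in spirit to the paper's, but the coupling step you insert contains a quantitative error that breaks the argument. Your union bound gives a total coupling-failure probability of order $n b^{-(\beta+1)}$, yet you propose $b = \Theta((\log n)^{1/(\beta+1)})$, for which $n b^{-(\beta+1)} \asymp n/\log n \to \infty$: the bound on the bad event exceeds $1$ and the coupling is vacuous. To make the failure probability even tend to zero you need $b \gg n^{1/(\beta+1)}$, and then the Bernstein denominator term $Mb\epsilon$ is polynomially large in $n$, which destroys the claimed $\exp(-n\epsilon^2/2C_\beta)$ rate outside a restricted range of $\epsilon$. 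More fundamentally, under polynomial mixing the coupling route can never yield a purely exponential tail: for any choice of $b$ the final bound has the form $2\exp(-n\epsilon^2/C) + C n b^{-(\beta+1)}$, and the additive polynomially small term cannot be ``absorbed into $C_\beta$'' --- a constant inside the exponent rescales the exponential but cannot dominate an additive power-of-$n$ remainder. You partially acknowledge this in your ``main obstacle'' paragraph, but the conclusion you draw (that the stated bound survives after a polylogarithmic inflation of $C_\beta$) does not follow; what your method actually proves is the weaker two-term bound, or the stated bound only for $\epsilon \gtrsim n^{-\kappa(\beta)}$.

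For contrast, the paper's proof does not couple to independent surrogates at all: it bounds $\mathrm{Var}(S_n)$ directly, summing all cross-block covariances via Lemma~\ref{lemma:covariance-between-blocks} (summability for $\beta>1$ keeps the total at $O(n)$), and then applies a Bernstein-type inequality for dependent sequences to $S_n$ itself, so no additive coupling-failure term ever appears. If you wish to keep the coupling-based route, you must either state the result with the explicit additive $O(n b^{-(\beta+1)})$ term or restrict the admissible range of $\epsilon$; otherwise, replace the coupling step with a dependent-sequence Bernstein inequality applied directly to the blocked sum, which is the route the paper takes.
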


\begin{proof}
The proof applies a block Bernstein inequality for weakly dependent sequences, using the block decomposition and covariance bounds established above. See Appendix~\ref{app:proof-concentration-block}.
\end{proof}

\section{Error Decomposition}
\label{sec:error-decomposition}

We now analyze the evolution of the TD(0) algorithm by decomposing the error at iteration \( t \) into a sum of two components: a martingale term that captures the stochastic fluctuations due to randomness in the trajectory, and a remainder term that accounts for the approximation bias.

%
We rely on the following conditions, which are standard in the study of stochastic approximation.

\begin{assumption}[Step-Size Decay]
\label{ass:step-size-decay}
The learning rates \(\{\alpha_k\}\) satisfy
\[
\alpha_k = \frac{C_\alpha}{k^\eta}, \quad \sum_{k=1}^\infty \alpha_k = \infty, \quad \sum_{k=1}^\infty \alpha_k^2 < \infty,
\]
where \( C_\alpha > 0 \) is a constant and \( \eta \in (1/2, 1] \).
\end{assumption}

\begin{assumption}[Hölder Continuity]
\label{ass:holder-continuous-features}
The feature map \( \phi: \mathcal{S} \to \mathbb{R}^d \) is Hölder continuous with exponent \( \gamma \in (0, 1] \). That is, there exists \( C_\gamma > 0 \) such that for all \( s, s' \in \mathcal{S} \),
\[
\|\phi(s) - \phi(s')\| \leq C_\gamma \|s - s'\|^\gamma.
\]
\end{assumption}

\begin{assumption}[Bounded Features]
\label{ass:bounded-features}
There exists a constant \( C_\phi > 0 \) such that for all \( s \in \mathcal{S} \),
\[
\|\phi(s)\| \leq C_\phi.
\]
\end{assumption}

\subsection{Decomposing the TD Error}

Let \( \theta_t \) denote the parameter estimate at iteration \( t \), and let \( \theta^* \) be the fixed point of the projected Bellman operator. Define the TD error \( e_t := \theta_t - \theta^* \). We decompose \( e_t \) into a martingale term \( M_t \) and a bias term \( R_t \).

\begin{lemma}[Error Decomposition]
\label{theorem:td-error-decomposition}
Under Assumptions~\ref{ass:polynomial-mixing}–\ref{ass:bounded-features}, the TD error can be written as
\[
e_t = M_t + R_t,
\]
where \( M_t = \sum_{k=1}^t d_k \) is a martingale composed of a martingale difference sequence \( \{d_k\} \), and \( R_t \) is a remainder term that captures the bias due to approximation and data dependencies.
\end{lemma}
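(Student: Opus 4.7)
The plan is to unroll the TD(0) recursion and split each one-step increment into an innovation and a predictable drift with respect to the natural filtration. Writing the update as $\theta_k = \theta_{k-1} + \alpha_k g_k$ with $g_k := \delta_k \nabla_\theta f_{\theta_{k-1}}(s_{k-1})$, subtracting $\theta^*$ and iterating gives
\[
e_t = e_0 + \sum_{k=1}^t \alpha_k g_k,
\]
which is the anchor for the entire decomposition.

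Next I set $\mathcal{F}_{k-1} := \sigma(s_0, r_1, \ldots, s_{k-1}, r_{k-1})$, observe that $\theta_{k-1}$ is $\mathcal{F}_{k-1}$-measurable, and define
\[
d_k := \alpha_k \bigl( g_k - \mathbb{E}[g_k \mid \mathcal{F}_{k-1}] \bigr), \qquad M_t := \sum_{k=1}^t d_k.
\]
The martingale difference property $\mathbb{E}[d_k \mid \mathcal{F}_{k-1}] = 0$ holds by construction, and integrability of $d_k$ follows from a uniform bound on $\|g_k\|$ obtained by combining bounded rewards (Assumption~\ref{ass:bounded-rewards}), bounded features (Assumption~\ref{ass:bounded-features}), and Hölder continuity with bounded generalized gradients (Assumption~\ref{ass:holder-continuous-features}). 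Defining $R_t := e_0 + \sum_{k=1}^t \alpha_k\, \mathbb{E}[g_k \mid \mathcal{F}_{k-1}]$ then yields $e_t = M_t + R_t$ by direct telescoping, which is the claimed decomposition.

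To expose the structural meaning of $R_t$ advertised in the statement, I would further rewrite the predictable increment as
\[
\mathbb{E}[g_k \mid \mathcal{F}_{k-1}] = \bar h(\theta_{k-1}) + \bigl(\mathbb{E}[g_k \mid \mathcal{F}_{k-1}] - \bar h(\theta_{k-1})\bigr),
\]
where $\bar h(\theta) := \mathbb{E}_\mu[(r + \gamma f_\theta(s') - f_\theta(s))\nabla_\theta f_\theta(s)]$ and $\bar h(\theta^*) = 0$ by Assumption~\ref{ass:td-fixed-point}. The first summand is the \emph{approximation bias}, which vanishes as $\theta_{k-1} \to \theta^*$ through Hölder regularity of $\bar h$; the second is the \emph{Markovian bias}, controllable by Lemmas~\ref{lemma:coupling-argument-polynomial-mixing} and~\ref{lemma:covariance-between-blocks}. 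For the present lemma only the existence of a well-defined splitting is required, so the quantitative bounds are deferred.

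The main obstacle is establishing a uniform $L^1$ (ideally almost-sure) bound on $\|g_k\|$ despite the recursive and possibly non-stationary nature of $\theta_{k-1}$. I would handle this by a bootstrapping argument: a crude a priori bound on $\|\theta_k - \theta^*\|$ follows from $\sum_k \alpha_k^2 < \infty$ in Assumption~\ref{ass:step-size-decay} combined with the at-most-linear growth of $\|g_k\|$ in $\|\theta_{k-1} - \theta^*\|$ under the Hölder/bounded-gradient conditions, closed via a Grönwall-type iteration. Once that bound is in place, measurability and integrability of $d_k$ are immediate and the decomposition itself reduces to a one-line telescoping identity.
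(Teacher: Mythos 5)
Your proposal is correct and follows essentially the same route as the paper: unroll the recursion, split each increment into its conditionally centered part (defining $M_t$) and its conditional mean (absorbed with $e_0$ into $R_t$), and verify the martingale difference property. The one small difference is that you center the entire product $\delta_k \nabla_\theta f_{\theta_{k-1}}(s_{k-1})$ rather than only $\delta_k$ as the paper does, which is actually cleaner since it makes $\mathbb{E}[d_k \mid \mathcal{F}_{k-1}] = 0$ hold by construction without needing the gradient factor to be $\mathcal{F}_{k-1}$-measurable.
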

\begin{proof}
The TD(0) update rule is
\(
\theta_{t+1} = \theta_t + \alpha_t \delta_t \nabla_\theta V_\theta(x_t),
\)
where \( \delta_t = r_t + \gamma V_\theta(x_{t+1}) - V_\theta(x_t) \). Writing the recursion in terms of the error \( \theta_t - \theta^* \), and decomposing each TD update into a centered and a mean component, we obtain:
\[
e_t = \sum_{k=1}^t \alpha_k \left( \delta_k \nabla V_\theta(x_k) - \mathbb{E}[\delta_k \nabla V_\theta(x_k) \mid \mathcal{F}_{k-1}] \right) + \sum_{k=1}^t \alpha_k \mathbb{E}[\delta_k \nabla V_\theta(x_k) \mid \mathcal{F}_{k-1}],
\]
the first term is a martingale \( M_t \), and the second term is the remainder \( R_t \). Details are in~\ref{app:proof-TD-error}.
\end{proof}

\subsection{Variance Bound for the Martingale Term}

We now control the stochastic fluctuations of the TD error by bounding the second moment of the martingale component \( M_t \).

\begin{lemma}[Bounded Variance of Martingale Term]
\label{lemma:bounded-variance-martingale}
Under Assumptions~\ref{ass:step-size-decay}–\ref{ass:bounded-features} and the polynomial ergodicity condition (Lemma~\ref{lemma:polynomial-ergodicity}), the martingale term satisfies
\[
\mathbb{E}[\|M_t\|^2] \leq C t^{-\beta},
\]
for some constant \( C > 0 \) depending on the step-size schedule and mixing rate.
\end{lemma}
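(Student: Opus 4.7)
The natural starting point is to exploit martingale orthogonality: since $M_t = \sum_{k=1}^t d_k$ with $\mathbb{E}[d_k \mid \mathcal{F}_{k-1}] = 0$, the cross terms vanish and $\mathbb{E}[\|M_t\|^2] = \sum_{k=1}^t \mathbb{E}[\|d_k\|^2]$. The first step is therefore to reduce the proof to a per-step second-moment bound on each martingale increment $d_k = \alpha_k\bigl(\delta_k \nabla V_{\theta_{k-1}}(x_k) - \mathbb{E}[\delta_k \nabla V_{\theta_{k-1}}(x_k) \mid \mathcal{F}_{k-1}]\bigr)$.

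Next, I would bound $\mathbb{E}[\|d_k\|^2]$ by layering two ingredients. The crude bound comes from the boundedness assumptions: combining Assumption~\ref{ass:bounded-rewards}, Assumption~\ref{ass:bounded-features}, and the uniform gradient bound $G$, the random vector inside the centering is uniformly bounded, so $\|d_k\| \leq 2B\alpha_k$ almost surely and $\mathbb{E}[\|d_k\|^2] \leq 4B^2 \alpha_k^2 = O(k^{-2\eta})$. Summing this alone would only give an $O(1)$ bound, so a further refinement is needed.

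The sharpening comes from coupling $x_k$ to a stationary copy $x_k^\star$ via Lemma~\ref{lemma:coupling-argument-polynomial-mixing}, with mismatch probability at most $C k^{-\beta}$ from Lemma~\ref{lemma:polynomial-ergodicity}. At $\theta^*$ the conditional mean $\mathbb{E}_\mu[\delta_k \nabla V_{\theta^*}(x_k^\star) \mid \mathcal{F}_{k-1}] = 0$ by the fixed-point condition (Assumption~\ref{ass:td-fixed-point}), so the stationary version of the centered increment has variance controlled by $\|\theta_{k-1} - \theta^*\|^2$ through Hölder continuity (Assumption~\ref{ass:holder-continuous-features}) and boundedness of the gradient. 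The gap between the actual increment and its stationary counterpart inherits the $k^{-\beta}$ coupling error, giving $\mathbb{E}[\|d_k\|^2] \leq C\alpha_k^2 k^{-\beta}$ up to lower-order terms. Summing over $k=1,\dots,t$ and comparing to a tail integral produces the claimed $C t^{-\beta}$ rate.

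The main obstacle I expect is Step~3: converting the naive $O(\alpha_k^2)$ per-increment bound into one that decays like $k^{-\beta}$ rather than merely remaining bounded. Formally this requires simultaneously tracking $\|\theta_{k-1} - \theta^*\|$ and the coupling mismatch, so the argument is implicitly inductive — the variance bound at step $k$ depends on prior error control, which in turn invokes earlier variance bounds. Handling this self-reference without circularity, and keeping the constants uniform in $\eta$, $\gamma$, and $\beta$, will be the technical heart of the proof.
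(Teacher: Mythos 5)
Your overall skeleton (martingale orthogonality, a per-increment second-moment bound, then summation) is the same as the paper's, and you correctly identify that the crude bound $\mathbb{E}[\|d_k\|^2] \le 4B^2\alpha_k^2$ only yields $\mathbb{E}[\|M_t\|^2] = O(1)$. The paper closes this gap by simply \emph{asserting} that polynomial mixing gives $\mathbb{E}\bigl[(\delta_k - \mathbb{E}[\delta_k\mid\mathcal{F}_{k-1}])^2\bigr] \le C k^{-\beta}$; you try to actually justify that assertion via coupling, and this is where the argument breaks. The coupling of $x_k$ to a stationary copy, combined with the fixed-point condition $\mathbb{E}_\mu[\delta\,\nabla V_{\theta^*}] = 0$, controls the conditional \emph{mean} $\mathbb{E}[\delta_k\nabla V\mid\mathcal{F}_{k-1}]$ — i.e.\ the bias that lives in the remainder term $R_t$ — not the conditional \emph{variance} of the increment. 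The quantity $d_k$ is already centered, so $\mathbb{E}[\|d_k\|^2\mid\mathcal{F}_{k-1}]$ is the one-step conditional variance of $\delta_k\nabla V$, which is driven by the intrinsic randomness of the reward $r_k$ and the transition to $s_{k+1}$. This variance is generically $\Theta(1)$ even at $\theta = \theta^*$ and even under the stationary distribution (it is exactly the asymptotic noise covariance appearing in CLTs for TD). Hölder continuity tells you how this variance varies with $\theta$, but it does not make the baseline vanish, so the sharpened bound $\mathbb{E}[\|d_k\|^2] \le C\alpha_k^2 k^{-\beta}$ is not obtainable by this route.

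There is a second, independent problem with the final step that you share with the paper: even granting $\mathbb{E}[\|d_k\|^2] \le C\alpha_k^2 k^{-\beta}$, the quantity $\mathbb{E}[\|M_t\|^2] = \sum_{k=1}^t \mathbb{E}[\|d_k\|^2]$ is a nondecreasing partial sum of nonnegative terms starting at $k=1$; it is bounded below by $\mathbb{E}[\|d_1\|^2]$, a fixed positive constant, and therefore cannot decay like $t^{-\beta}$. A tail-integral comparison bounds $\sum_{k \ge t} k^{-(2\eta+\beta)}$, not $\sum_{k=1}^{t}$. Any correct version of this lemma must either restate the conclusion as $\mathbb{E}[\|M_t\|^2] = O(1)$ (or $O(\sum_k \alpha_k^2)$), or redefine $M_t$ with a normalization (e.g.\ weighting by a product of contraction factors) so that early increments are damped. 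Your instinct that the "technical heart" lies in converting $O(\alpha_k^2)$ into something summable to a decaying quantity is right — but the obstruction is not merely technical self-reference; as posed, the target inequality is inconsistent with the orthogonality identity you start from.
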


\begin{proof}
The result follows by applying the concentration inequality from Lemma~\ref{theorem:concentration-block} to each martingale increment and summing over time. Full derivation is in Appendix~\ref{app:proof-bounded-variance-martingale}.
\end{proof}

\subsection{Convergence of the Remainder Term}

We now bound the bias introduced by the remainder term \( R_t = e_t - M_t \), which arises from approximation error and non-i.i.d.\ structure.

\begin{lemma}[Remainder Term Decay]
\label{lemma:remainder-growth-term}
Under Assumptions~\ref{ass:holder-continuous-features}, \ref{ass:step-size-decay}, and Lemma~\ref{lemma:polynomial-ergodicity}, the $R_t$ satisfies
\[
\|R_t\| \leq O(t^{-\gamma/2}),
\]
where \( \gamma \in (0,1] \) is the Hölder exponent of the feature map.
\end{lemma}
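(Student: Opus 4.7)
The plan is to isolate the two sources of bias inside $R_t$ and control each one using the tools already established, then close a self-consistent bound by induction. From Lemma~\ref{theorem:td-error-decomposition},
\[
R_t = \sum_{k=1}^t \alpha_k\, \mathbb{E}\!\left[\delta_k \nabla_\theta V_{\theta_{k-1}}(x_k) \,\big|\, \mathcal{F}_{k-1}\right].
\]
Since the stationary fixed-point condition (Assumption~\ref{ass:td-fixed-point}) gives $\mathbb{E}_\mu[\delta(\theta^*)\nabla V_{\theta^*}] = 0$, I would add and subtract both $\mathbb{E}_\mu[\delta(\theta_{k-1})\nabla V_{\theta_{k-1}}]$ and $\mathbb{E}_\mu[\delta(\theta^*)\nabla V_{\theta^*}]$ inside each summand. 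This cleanly splits every term into a \emph{mixing bias} $A_k$, which captures the deviation between the conditional law of $x_k$ given $\mathcal{F}_{k-1}$ and the stationary measure $\mu$, and a \emph{drift bias} $B_k$, which captures the deviation of the mean-field drift at $\theta_{k-1}$ from its value at $\theta^*$.

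For the mixing bias $A_k$, I would combine the uniform boundedness of rewards and features (Assumptions~\ref{ass:bounded-rewards} and~\ref{ass:bounded-features}) with the polynomial ergodicity bound of Lemma~\ref{lemma:polynomial-ergodicity} to obtain $\|A_k\| \lesssim (1+V(x_0))\,k^{-\beta}$. For the drift bias $B_k$, Assumption~\ref{ass:holder-continuous-features} together with boundedness of $\phi$ implies that the map $\theta \mapsto \mathbb{E}_\mu[\delta(\theta)\nabla V_\theta]$ is $\gamma$-Hölder in a neighborhood of $\theta^*$, so $\|B_k\| \lesssim \|e_{k-1}\|^\gamma$. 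Plugging these in produces the recursive estimate
\[
\|R_t\| \;\lesssim\; \sum_{k=1}^t \alpha_k\, k^{-\beta} \;+\; \sum_{k=1}^t \alpha_k\, \|e_{k-1}\|^\gamma.
\]
Under Assumption~\ref{ass:step-size-decay} with $\eta \in (1/2,1]$ and $\beta > 1$, the first sum decays strictly faster than $t^{-\gamma/2}$, so it is absorbed.

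To close the second sum I would write $\|e_{k-1}\| \leq \|M_{k-1}\| + \|R_{k-1}\|$, control $\|M_{k-1}\|$ by Lemma~\ref{lemma:bounded-variance-martingale} (which furnishes an $O(k^{-\beta/2})$ tail via Markov's inequality), and treat $\|R_{k-1}\|$ by an inductive ansatz $\|R_{k-1}\| \lesssim k^{-\gamma/2}$. Substituting yields a discrete Gronwall-type inequality of the form $\|R_t\| \leq \kappa_1 t^{-c_1} + \kappa_2 \sum_{k=1}^t \alpha_k k^{-\gamma^2/2}$; the sublinearity $\gamma \leq 1$ of the Hölder exponent ensures the iterated bound does not degrade, and a direct integral comparison verifies that the sum calibrates to the target rate $O(t^{-\gamma/2})$.

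The main obstacle is precisely this bootstrapping step: the bound on $R_t$ depends on prior errors $\|e_{k-1}\|$, which themselves are governed by $R_{k-1}$, making the estimate inherently circular. Closing it requires a carefully chosen self-consistent ansatz together with tight control of the induction constants so they do not blow up along the trajectory; the freedom to use $\gamma \leq 1$ is essential here. A secondary subtlety is that the mixing bound governing $A_k$ is evaluated at the random state $x_{k-1}$, so one must ensure $(1+V(x_{k-1}))$ is uniformly integrable along the chain — this follows from Assumption~\ref{ass:drift-condition} applied iteratively, but must be verified to make the expectation of $\|A_k\|$ uniform in $k$.
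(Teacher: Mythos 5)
Your overall strategy coincides with the paper's: reduce $\|R_t\|$ to a recursion of the form $\|R_t\|\lesssim (\text{decaying term}) + \sum_{k\le t}\alpha_k\|e_k\|^\gamma$, split $e_k = M_k + R_k$, control $M_k$ through the variance bound of Lemma~\ref{lemma:bounded-variance-martingale}, and close with the self-consistent ansatz $\|R_k\|\lesssim k^{-\gamma/2}$. Your front end is actually more careful than the paper's: you explicitly add and subtract the stationary expectations to isolate a mixing-bias term $A_k$ (controlled by Lemma~\ref{lemma:polynomial-ergodicity} plus uniform integrability of $V(x_k)$ from Assumption~\ref{ass:drift-condition}) from a drift-bias term $B_k$ (controlled by H\"older continuity). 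The paper instead simply \emph{posits} the inequality $\|\mathbb{E}[\delta_k\mid\mathcal{F}_{k-1}]\nabla_\theta V_\theta(x_k)\|\le L\|\theta_k-\theta^*\|^\gamma$ inside the proof of Lemma~\ref{theorem:td-error-decomposition} without deriving it, so your decomposition supplies a justification the paper omits.

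However, the closing step you describe does not actually calibrate to the target rate, and this is a genuine gap --- one you share with, and inherit from, the paper's own argument. Substituting the ansatz $\|R_{k-1}\|\lesssim k^{-\gamma/2}$ into the second sum gives $\sum_{k=1}^{t}\alpha_k k^{-\gamma^2/2} = C_\alpha\sum_{k=1}^{t}k^{-\eta-\gamma^2/2}$. If $\eta+\gamma^2/2>1$ this sum converges to a strictly positive constant, so the resulting bound on $\|R_t\|$ is $O(1)$, not $O(t^{-\gamma/2})$; if $\eta+\gamma^2/2\le 1$ the sum grows like $t^{1-\eta-\gamma^2/2}$ (or $\log t$), and requiring this to be $O(t^{-\gamma/2})$ forces $\eta\ge 1+\tfrac{\gamma(1-\gamma)}{2}\ge 1$, which is incompatible except in the degenerate corner $\gamma=1$. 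The paper's proof hits the same wall: its step~10 derives $a_t\le C't^{-\gamma\beta/2}+LcD^\gamma\cdot\tfrac{2}{\gamma^2}$ and then demands this be at most $Dt^{-\gamma/2}$, which fails for large $t$ because the left side contains a nonvanishing constant. So ``a direct integral comparison verifies that the sum calibrates to the target rate'' is precisely the assertion that needs a new idea (e.g.\ exploiting a genuine contraction or negative-drift property of the expected update, rather than only the one-sided H\"older growth bound) --- neither your proposal nor the paper supplies it. A secondary, smaller issue: converting $\mathbb{E}\|M_k\|^2\le Ck^{-\beta}$ into a bound on $\|M_k\|$ valid simultaneously for all $k\le t$ via Markov's inequality costs a union bound over $k$, which inflates the constants polynomially in $t$ unless handled with a maximal inequality; the paper silently writes $\|M_k\|\le\sqrt{\mathbb{E}\|M_k\|^2}$, which is not a valid pointwise statement either.
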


\begin{proof}
This follows by bounding the cumulative bias introduced by the non-i.i.d.\ updates and applying smoothness properties of the feature representation. Full proof is in Appendix~\ref{app:proof-remainder-growth-term}.
\end{proof}

\section{TD(0): High-Probability Error Bounds for Linear Function Approximation}
\label{sec:hp-bounds-linear}

We now quantify the convergence behavior of TD(0) under linear function approximation. Our goal is to establish both high-order moment bounds and high-probability guarantees on the error \( \|e_t\| := \|\theta_t - \theta^*\| \), where \( \theta^* \) denotes the fixed point of the projected Bellman operator.

We assume the error decomposition established in Lemma~\ref{theorem:td-error-decomposition}, along with the step-size schedule, bounded feature map, polynomial mixing, and concentration inequalities developed in earlier sections.

\subsection{Setup: Linear Function Approximation}

We consider value function approximators of the form
\[
f_\theta(s) = W x(s) + b,
\]
where \( x(s) \in \mathbb{R}^d \) is a feature representation of state \( s \), and \( \theta = (W, b) \) are the learnable parameters. The TD(0) update rule adjusts \( \theta \) using sample trajectories from a Markov process.

\subsection{High-Order Moment Bounds}

We first establish upper bounds on the moments of the TD error. These bounds characterize the expected decay rate of the error under polynomial ergodicity and smooth function approximation.

\begin{theorem}[High-Order Moment Bounds]
\label{theorem:hom-linear-bounds}
Let \( \{ \theta_t \} \) be the TD(0) iterates with linear function approximation. Suppose Assumptions~\ref{ass:polynomial-mixing}–\ref{ass:bounded-features} and the step-size schedule in Assumption~\ref{ass:step-size-decay} hold. Then for any \( p \geq 2 \), there exist constants \( C, C' > 0 \) such that
\[
\mathbb{E}[\|\theta_t - \theta^*\|^p]^{1/p} \leq C t^{-\beta/2} + C' \alpha_t^{\gamma/p},
\]
where \( \beta \) is the mixing rate and \( \gamma \in (0,1] \) is the Hölder continuity exponent.
\end{theorem}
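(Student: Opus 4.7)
The natural approach is to start from the error decomposition of Lemma~\ref{theorem:td-error-decomposition}, $e_t = M_t + R_t$, apply Minkowski's inequality to split the $L^p$ norm of the error, and then control each summand separately using the tools already developed. Concretely, I would write
\[
\mathbb{E}[\|e_t\|^p]^{1/p} \leq \mathbb{E}[\|M_t\|^p]^{1/p} + \mathbb{E}[\|R_t\|^p]^{1/p},
\]
and aim to show that the first summand contributes the $C t^{-\beta/2}$ mixing term, while the second contributes the $C' \alpha_t^{\gamma/p}$ bias term.

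\textbf{Martingale moments.} The main task is to upgrade the $L^2$ bound of Lemma~\ref{lemma:bounded-variance-martingale} to $L^p$ without losing the $t^{-\beta/2}$ rate. I would invoke the Burkholder--Davis--Gundy inequality (or, equivalently, Rosenthal's inequality for martingales) to reduce $\mathbb{E}[\|M_t\|^p]$ to the $p/2$-th moment of the predictable quadratic variation $\langle M\rangle_t = \sum_{k\le t} \mathbb{E}[\|d_k\|^2 \mid \mathcal{F}_{k-1}]$. Each increment $d_k = \alpha_k(\delta_k \nabla f_{\theta_{k-1}}(s_k) - \mathbb{E}[\cdot \mid \mathcal{F}_{k-1}])$ is uniformly bounded thanks to bounded rewards, bounded features, and the uniformly bounded generalized gradient, so I can control $\|d_k\|_\infty$ deterministically. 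Combining the deterministic $\ell^\infty$ bound on each increment with the variance estimate from Lemma~\ref{lemma:bounded-variance-martingale} and re-summing the $\alpha_k^2$ weights under Assumption~\ref{ass:step-size-decay} yields $\mathbb{E}[\|M_t\|^p]^{1/p} \lesssim t^{-\beta/2}$.

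\textbf{Remainder in $L^p$.} The almost-sure bound $\|R_t\| \lesssim t^{-\gamma/2}$ from Lemma~\ref{lemma:remainder-growth-term} is coarser than what the claimed $\alpha_t^{\gamma/p}$ rate requires, so I would refine the analysis by working directly with the conditional-expectation part of the decomposition. The key observation is that each conditional bias $\mathbb{E}[\delta_k \nabla f_{\theta_{k-1}}(s_k) \mid \mathcal{F}_{k-1}]$ splits into (i) a drift toward $\theta^*$ arising from the mean-field update and (ii) a perturbation that, thanks to the Hölder assumption on the parameterization, scales like $\alpha_k^\gamma$ on the $p$-th moment (by a Hölder-in-expectation argument combined with polynomial ergodicity of Lemma~\ref{lemma:polynomial-ergodicity}). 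Summing this per-step bias against the stable drift via a discrete Grönwall / convolution argument in the spirit of standard stochastic approximation then telescopes to $\mathbb{E}[\|R_t\|^p]^{1/p} \lesssim \alpha_t^{\gamma/p}$.

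\textbf{Main obstacle.} I expect the hard step to be the $L^p$ control of the martingale $M_t$ under polynomial (rather than geometric) mixing: Rosenthal-type martingale inequalities are standard, but ensuring that the quadratic variation inherits the mixing rate $t^{-\beta}$ rather than a weaker $t^{-\beta/2}$ or logarithmic correction requires carefully re-using the block covariance bound of Lemma~\ref{lemma:covariance-between-blocks} inside the $p/2$-th moment estimate. A secondary difficulty is ensuring that the refined $L^p$ remainder bound does not pick up an additional factor that scales with $t$ when integrating the per-step Hölder perturbation against the drift; this will require the step-size condition $\eta > 1/2$ to guarantee summability of the relevant series.
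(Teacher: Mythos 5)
Your plan follows the main-text sketch (split $e_t = M_t + R_t$, bound each piece in $L^p$), but the paper's actual appendix proof takes a different route that never bounds $M_t$ and $R_t$ separately: it subtracts $\theta^*$ from the one-step update, applies Minkowski to obtain the scalar recursion $E_{t+1} \le E_t + \alpha_t\,C_x\,C_p^{1/p}\bigl(1 + E_t^{\gamma}\bigr)$ with $E_t := \mathbb{E}[\|\theta_t-\theta^*\|^p]^{1/p}$, where the factor $1+E_t^{\gamma}$ comes from the high-order TD-error bound of Lemma~\ref{lemma:ho-error-bounds}, and then closes an induction on the ansatz $E_t \le C t^{-\beta/2} + C'\alpha_t^{\gamma/p}$. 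In particular, the exponent $\gamma/p$ in the theorem is produced by taking $p$-th roots inside that induction; your outline asserts $\mathbb{E}[\|R_t\|^p]^{1/p} \lesssim \alpha_t^{\gamma/p}$ but never identifies where the $1/p$ in the exponent would come from, and the per-step "Hölder perturbation of size $\alpha_k^{\gamma}$" you describe would naturally sum to something of order $\alpha_t^{\gamma}$ or $t^{-\gamma/2}$ (as in Lemma~\ref{lemma:remainder-growth-term}), not $\alpha_t^{\gamma/p}$.

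The more serious gap is the martingale step. For any martingale $M_t=\sum_{k\le t} d_k$ and any $p\ge 1$, conditional Jensen gives $\mathbb{E}\|M_s\|^p = \mathbb{E}\bigl\|\mathbb{E}[M_t\mid\mathcal{F}_s]\bigr\|^p \le \mathbb{E}\|M_t\|^p$ for $s\le t$, so $t\mapsto \mathbb{E}[\|M_t\|^p]^{1/p}$ is \emph{non-decreasing} and bounded below by $\mathbb{E}[\|d_1\|^p]^{1/p}$, a fixed positive constant unless the first increment is degenerate. Hence no Burkholder--Davis--Gundy or Rosenthal argument can deliver $\mathbb{E}[\|M_t\|^p]^{1/p} \lesssim t^{-\beta/2}$: concretely, the Rosenthal bound you invoke contains the additive term $\sum_{k\le t}\mathbb{E}\|d_k\|^p \gtrsim \alpha_1^p$, which does not vanish as $t\to\infty$, and the predictable variation $\sum_{k\le t}\alpha_k^2\,\mathbb{E}[\,\cdot\mid\mathcal{F}_{k-1}]$ is likewise a nondecreasing partial sum whose first term is order one. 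Your step would only go through if the early increments were themselves $O(t^{-\beta/2})$, which they are not. (This objection equally afflicts Lemma~\ref{lemma:bounded-variance-martingale} as stated, which is presumably why the paper's detailed proof avoids the $M_t+R_t$ decomposition and instead extracts the $t^{-\beta/2}$ term from the induction ansatz rather than from a moment inequality on $M_t$.) To repair your route you would need either to reweight the martingale (e.g., analyze $\prod_{j=k+1}^t(1-c\alpha_j)\,d_k$-type products so that early increments are damped), or to adopt the paper's recursion-plus-induction scheme.
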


\begin{proof}
The proof proceeds by bounding the martingale and remainder terms in the error decomposition \( \theta_t - \theta^* = M_t + R_t \). The martingale term is controlled via block-based concentration inequalities and discrete Grönwall arguments, while the remainder term is handled using the Hölder continuity of the feature map and the decay of the step sizes. Full details appear in Appendix~\ref{app:proof-hom-linear-bounds}.
\end{proof}

\subsection{High-Probability Bounds}

We now strengthen the moment bounds to obtain finite-time guarantees that hold with high probability. These bounds provide insight into the typical trajectory behavior of TD(0) over finite horizons.

\begin{theorem}[High-Probability Error Bound]
\label{theorem:hp-linear-bounds}
Under the same assumptions as Theorem~\ref{theorem:hom-linear-bounds}, the TD(0) iterates satisfy the following with probability at least \( 1 - \delta \):
\[
\|\theta_t - \theta^*\| \leq C_\delta t^{-\beta/2} + C'_\delta \alpha_t^\gamma,
\]
where \( C_\delta, C'_\delta > 0 \) are constants depending logarithmically on \( 1/\delta \).
\end{theorem}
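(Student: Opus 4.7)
The plan is to upgrade the moment bounds of Theorem~\ref{theorem:hom-linear-bounds} into high-probability statements by combining the error decomposition with a Markov-style conversion at a well-chosen moment order, backed up by a direct martingale concentration argument tailored to the mixing setting. First, I would invoke Lemma~\ref{theorem:td-error-decomposition} to write $\theta_t - \theta^* = M_t + R_t$, then apply the triangle inequality to reduce the problem to bounding $\|M_t\|$ and $\|R_t\|$ separately. The remainder is handled by Lemma~\ref{lemma:remainder-growth-term}, which is a deterministic (pathwise) estimate; rewriting its $t^{-\gamma/2}$ decay in terms of the step-size schedule $\alpha_t = C_\alpha t^{-\eta}$ produces the $\alpha_t^\gamma$ contribution stated in the theorem.

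Second, to control the stochastic term $M_t$, I would apply Markov's inequality to the $p$-th moment bound from Theorem~\ref{theorem:hom-linear-bounds} at the choice $p = \Theta(\log(1/\delta))$. The standard inequality
\[
\mathbb{P}\bigl(\|M_t\| > e A_p\bigr) \leq e^{-p} \leq \delta,
\]
where $A_p$ is the $p$-th moment bound, converts the polynomial tail into a $\log(1/\delta)$ dependence absorbed into $C_\delta$ and $C'_\delta$. As a cross-check (and to sharpen constants), I would also give a direct Freedman-type argument for $M_t$: the variance bound $\mathbb{E}[\|M_t\|^2] \le C t^{-\beta}$ from Lemma~\ref{lemma:bounded-variance-martingale} plus a per-step bound on the martingale differences $d_k$ via Assumption~\ref{ass:bounded-features} yield a Bernstein-style tail $\|M_t\| \lesssim t^{-\beta/2}\sqrt{\log(1/\delta)}$ on blocks made nearly independent through Lemma~\ref{lemma:dependent-blocks}. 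A union bound over the two tail events then produces the claimed estimate with probability at least $1 - \delta$.

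The main obstacle will be the martingale concentration step, because the TD(0) increments are driven by Markovian rather than i.i.d.\ data, so classical Freedman/Azuma bounds cannot be invoked directly. My plan is to group the increments into blocks of size $b_t$ using Lemma~\ref{lemma:dependent-blocks}, replace each block by an independent copy using the coupling of Lemma~\ref{lemma:coupling-argument-polynomial-mixing} (paying a bias of order $b_t^{-\beta}$), apply the block Bernstein inequality of Lemma~\ref{theorem:concentration-block} to the resulting near-independent sum, and control the cross-block covariance via Lemma~\ref{lemma:covariance-between-blocks}. The delicate bookkeeping is choosing $b_t$ as a function of $t$, $\eta$, $\beta$, and $\gamma$ so that the coupling bias, the block concentration tail, and the step-size decay all balance to produce precisely the $t^{-\beta/2}$ rate without contaminating the $\alpha_t^\gamma$ remainder contribution; this is where most of the technical work lives, and I would isolate it as a lemma before assembling the final bound.
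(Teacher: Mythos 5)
Your proposal follows essentially the same route as the paper: decompose $\theta_t - \theta^* = M_t + R_t$ via Lemma~\ref{theorem:td-error-decomposition}, concentrate the martingale term through a block/Azuma--Freedman argument built on the variance bound of Lemma~\ref{lemma:bounded-variance-martingale} and the blocking and coupling lemmas, bound $R_t$ deterministically through H\"older continuity, and finish with a union bound. The only caveat is your secondary ``cross-check'' via Markov's inequality at $p = \Theta(\log(1/\delta))$: Theorem~\ref{theorem:hom-linear-bounds} bounds the full error rather than $M_t$ alone, and its second term $\alpha_t^{\gamma/p}$ degrades toward a constant as $p$ grows, so that route would not recover the claimed $\alpha_t^\gamma$ dependence --- but since your primary Freedman-type argument carries the proof, this does not affect the conclusion.
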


\begin{proof}
We analyze each component of the error decomposition. For the martingale term \( M_t \), we apply a blockwise Azuma–Hoeffding inequality based on the variance bound from Lemma~\ref{lemma:bounded-variance-martingale}, obtaining
\[
\mathbb{P}(\|M_t\| > \epsilon) \leq 2 \exp\left(-\frac{\epsilon^2}{C t^\beta}\right).
\]
Setting \( \epsilon = C_\delta t^{-\beta/2} \) yields the desired bound with probability at least \( 1 - \delta/2 \). For the remainder term \( R_t \), the Hölder continuity assumption and polynomial ergodicity ensure that \( \|R_t\| \leq C'_\delta \alpha_t^\gamma \) w.h.p. A union bound over both components completes the proof. Full details in Appendix~\ref{app:proof-hp-linear-bounds}.
\end{proof}

\section{TD(0): High-Probability Error Bounds for Nonlinear Function Approximation}
\label{sec:hp-bounds-nonlinear}
We extend our analysis for nonlinear value functions: by using Clarke subgradients instead of standard derivatives, we prove convergence assuming only mild Hölder continuity of the value function.

\subsection{Setup: Generalized Gradients and Subgradient Regularity}

We consider a value function approximation \( f_\theta \) parameterized by \( \theta \in \mathbb{R}^d \), where \( f_\theta \) may be non-differentiable on a measure-zero set. In such cases, we rely on the framework of generalized gradients.

\begin{definition}[Generalized Gradient]
\label{def:generalized-gradients}
Let \( f: \mathbb{R}^d \to \mathbb{R} \) be a locally Lipschitz function. The generalized gradient at point \( \theta \in \mathbb{R}^d \) is defined as:
\[
\partial f(\theta) := \left\{ v \in \mathbb{R}^d : f(\theta') \geq f(\theta) + v^\top(\theta' - \theta), \; \forall \theta' \in \mathbb{R}^d \right\}.
\]
\end{definition}

We make the following assumptions on regularity and boundedness.

\begin{assumption}[Subgradient Hölder Continuity]
\label{ass:subgrad-holder}
For each state \( x \), the subgradient map \( \theta \mapsto F(\theta, x) \) is Hölder continuous with exponent \( \gamma \in (0,1] \) a.e., and \( \mathcal{N} \subset \mathbb{R}^d \) is a measure-zero set:
\[
\|F(\theta, x) - F(\theta', x)\| \leq L \|\theta - \theta'\|^\gamma, \quad \text{for all } \theta, \theta' \notin \mathcal{N},
\]
\end{assumption}

\begin{assumption}[Bounded Subgradients]
\label{ass:bounded-subgrad}
There exists a constant \( G > 0 \) s.t. \(\forall \theta \in \mathbb{R}^d \) and \(\forall x \in \mathcal{S} \),
\[
\|F(\theta, x)\| \leq G.
\]
\end{assumption}
By assuming bounded, Hölder‐continuous subgradients, we enable a broad class of nonlinear models.

\subsection{Error Bound for Nonlinear TD(0)}

We now state the High-probability convergence with Hölder-continuous, bounded subgradients.

\begin{theorem}[High-Probability Error Bound: Nonlinear Case]
\label{theorem:bounds-non-linear-td}
Suppose Assumptions~\ref{ass:step-size-decay}, \ref{ass:subgrad-holder}, and \ref{ass:bounded-subgrad} hold, and the Markov chain satisfies polynomial ergodicity (Lemma~\ref{lemma:polynomial-ergodicity}). Then with probability at least \( 1 - \delta \), the TD(0) iterates with generalized subgradient updates satisfy:
\[
\| \theta_t - \theta^* \| \leq C_\delta t^{-\beta/2} + C'_\delta \alpha_t^\gamma,
\]
where \( \beta > 1 \) is the mixing rate, \( \gamma \in (0,1] \) is the Hölder exponent, and the constants \( C_\delta, C'_\delta \) depend logarithmically on \( 1/\delta \).
\end{theorem}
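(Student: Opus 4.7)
\medskip

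\noindent\textbf{Proof proposal.} The plan is to mirror the linear-case argument of Theorem~\ref{theorem:hp-linear-bounds}, but with the ordinary gradient $\nabla_\theta f_\theta$ replaced by a measurable selection $F(\theta,x)\in\partial f_\theta(x)$, and to pay the cost of non-differentiability only through a measure-zero exceptional set that the chain almost surely avoids. Concretely, I would first extend the error decomposition of Lemma~\ref{theorem:td-error-decomposition} to this setting: writing $e_t=\theta_t-\theta^*$ and using the subgradient update $\theta_{t+1}=\theta_t+\alpha_t\delta_t F(\theta_t,x_t)$, I would centre each increment around its conditional expectation given $\mathcal{F}_{t-1}$, obtaining $e_t=M_t+R_t$ where $M_t=\sum_{k\le t}\alpha_k\big(\delta_k F(\theta_k,x_k)-\mathbb{E}[\delta_k F(\theta_k,x_k)\mid\mathcal{F}_{k-1}]\big)$ is a martingale and $R_t$ collects the conditional means. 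The Assumption~\ref{ass:td-fixed-point} stationarity condition needs to be reinterpreted as holding for some measurable selection $F(\theta^*,\cdot)$; since $\mathcal{N}$ has Lebesgue measure zero and $\mu$ is absolutely continuous (which follows from the drift-and-mixing regime of Section~\ref{sec:stability-mixing}), visits to $\mathcal{N}$ contribute zero measure to the expectation, so the martingale/remainder decomposition remains well-defined almost surely.

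Next I would control the martingale term. Because $\|F(\theta,x)\|\le G$ (Assumption~\ref{ass:bounded-subgrad}) and $|\delta_k|\le R_{\max}+2G\|\theta_k\|$ is uniformly bounded after a stability argument (obtained by iterating the update inside the region where $\|e_t\|$ is controlled, bootstrapped from the bounded step-size tail), each martingale difference $d_k$ is bounded and has variance controlled by Lemma~\ref{lemma:bounded-variance-martingale}. Applying the blockwise Bernstein/Azuma inequality from Lemma~\ref{theorem:concentration-block} on the martingale increments exactly as in the linear proof yields $\mathbb{P}(\|M_t\|>C_\delta t^{-\beta/2})\le\delta/2$, with $C_\delta$ logarithmic in $1/\delta$.

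For the remainder term $R_t$, I would adapt the proof of Lemma~\ref{lemma:remainder-growth-term}, but now invoking Assumption~\ref{ass:subgrad-holder} (Hölder continuity of $F(\cdot,x)$) in place of the Hölder continuity of $\phi$. The conditional-mean increments split into (i) a bias from the gap between $\mathbb{P}(x_k\mid\mathcal{F}_{k-1})$ and the stationary $\mu$, which is controlled by the $t^{-\beta}$ polynomial-ergodicity rate of Lemma~\ref{lemma:polynomial-ergodicity}, and (ii) a drift contribution of the form $\mathbb{E}_\mu[\delta_k(F(\theta_k,x_k)-F(\theta^*,x_k))]$, which is bounded by $L\|\theta_k-\theta^*\|^\gamma$. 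Combining these with the discrete Grönwall argument used in the linear case, and noting that $\alpha_k=C_\alpha k^{-\eta}$, produces the deterministic bound $\|R_t\|\le C'_\delta\alpha_t^\gamma$ with high probability.

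Finally I would conclude by a union bound over the two events. The main obstacle is not any single step above but the joint handling of (a) measurability and well-posedness of the subgradient update whenever $\theta_k\in\mathcal{N}$, (b) the \emph{a priori} boundedness of $\|\theta_t\|$ (needed to make the martingale differences uniformly bounded, since we have dropped projections), and (c) verifying that Lemmas~\ref{lemma:bounded-variance-martingale}--\ref{lemma:remainder-growth-term}, originally stated with $\nabla_\theta f$, transfer verbatim once $F(\theta,x)$ is substituted. Items (a)--(c) are each standard but must be executed simultaneously; I expect (b) to dominate the effort, since it requires a stopping-time argument or an inductive high-probability invariant $\|\theta_t-\theta^*\|\le C$ that is then refined by the very bound we are proving.
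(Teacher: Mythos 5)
Your proposal follows essentially the same route as the paper's proof: the same martingale-plus-remainder decomposition, block-based martingale concentration under polynomial mixing (the paper invokes a Freedman/Tropp bound where you invoke Lemma~\ref{theorem:concentration-block}, but both rest on the same block-independence approximation), a Hölder-plus-induction (Grönwall) bound on the remainder, and a final union bound. The one obstacle you expect to dominate --- a priori boundedness of the iterates, item (b) --- is dissolved in the paper by fiat rather than by a stopping-time argument, since there Assumption~\ref{ass:bounded-subgrad} is applied to the entire update direction $F(\theta,x) = -\delta_t \nabla_\theta f_\theta(s)$ (TD error included) rather than to the subgradient of $f_\theta$ alone.
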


\begin{proof}
We decompose the error as \( \theta_t - \theta^* = M_t + R_t \), where:
\begin{itemize}
    \item \(M_t\) is a martingale term arising from stochastic updates. Under polynomial mixing and bounded subgradients, Freedman's inequality yields \(\|M_t\| = O(t^{-\beta/2})\) with high probability.
    \item \( R_t \) captures the approximation bias. Using the Hölder continuity of the subgradient mapping and the decaying step size \( \alpha_t = C_\alpha t^{-\eta} \), we obtain \( \|R_t\| = O(t^{-\eta \gamma}) \).
\end{itemize}
A union bound over the two components yields the desired h.p. error bound. Detailed in~\ref{app:proof-bounds-non-linear-td-merged}.
\end{proof}

\section{Conclusion and Discussion.}
We have established the first high‐probability, finite‐sample convergence guarantees for vanilla TD(0) under nonlinear function approximation and merely polynomial‐mixing data. Our analysis shows that, despite long‐range temporal dependencies, TD(0) attains an $\mathcal{O}(1/\varepsilon^2)$ sample complexity, matching the classical i.i.d.\ setting up to constants depending on the mixing exponent and gradient bounds.
This bridges a critical gap between theory and practice, offering rigorous support for TD methods in partially observable environments, tasks with sparse rewards, or other settings where geometric ergodicity fails. 

Future work may focus on:
\begin{enumerate}
    \item The impact of even weaker mixing conditions (e.g., sub‐polynomial or logarithmic mixing) on finite‐sample rates.
    \item Eliminating the Hölder‐continuity assumption on the function class, perhaps via alternative smoothness or regularization conditions.
    \item Designing step‐size schedules that adapt dynamically to observed mixing properties for tighter, data‐dependent bounds.
    \item Extending our coupling framework to Deep ReLU Networks and quantify how architectural depth and width influence convergence.
\end{enumerate}

\bibliographystyle{plainnat}
\bibliography{references}
\newpage
\appendix
\section{Appendix Polynomial Mixing conditions}
\subsection{Proof of Lemma \ref{lemma:polynomial-ergodicity}: Ergodicity Properties Under Polynomial Mixing}
\label{app:proof-polynomial-ergodicity}

\paragraph{Lemma~\ref{lemma:polynomial-ergodicity} (Polynomial Ergodicity)}
\textit{
Under the polynomial mixing assumption \ref{ass:polynomial-mixing} and the drift condition \ref{ass:drift-condition}, the Markov process satisfies:
\[
|\mathbb{P}(x_t \in A) - \pi(A)| \leq C (1 + V(x_0)) t^{-\beta},
\]
for some constant \( C > 0 \), where \( \pi \) is the stationary distribution, \( V: \mathcal{X} \to [1,\infty) \) is a Lyapunov function, and \( \beta > 1 \).
}

\begin{proof}
\vspace{0.5em}
\noindent\textbf{1) Assumptions: Lyapunov Function and Polynomial Drift Condition}

We begin by defining the Lyapunov function \( V: \mathcal{X} \to [1,\infty) \). We assume that the \emph{polynomial drift condition} holds, which is a standard assumption in the study of Markov processes. Specifically, there exist constants \( c > 0 \), \( d \in (0,1) \), and \( b > 0 \) such that for all \( x \in \mathcal{X} \):
\[
\mathbb{E}[V(x_{t+1}) \mid x_t = x] \leq V(x) - c V(x)^d + b.
\]
This condition ensures that the Lyapunov function \( V(x_t) \) decreases at a polynomial rate, which is crucial for establishing polynomial convergence of the Markov chain to its stationary distribution.

\vspace{0.5em}
\noindent\textbf{2) Total Variation Distance Representation}

The \emph{Total Variation (TV) distance} between the distribution of \( x_t \) and the invariant measure \( \pi \) is defined as:
\[
\|\mathbb{P}(x_t \in \cdot) - \pi(\cdot)\|_{TV} = \sup_{A \subseteq \mathcal{X}} |\mathbb{P}(x_t \in A) - \pi(A)|.
\]
To bound the TV distance, it suffices to control \( |\mathbb{P}(x_t \in A) - \pi(A)| \) uniformly over all measurable sets \( A \).

\vspace{0.5em}
\noindent\textbf{3) Bounding \( |\mathbb{P}(x_t \in A) - \pi(A)| \) for All Measurable Sets \( A \)}

Consider any measurable set \( A \subseteq \mathcal{X} \). We can express the difference as:
\[
|\mathbb{P}(x_t \in A) - \pi(A)| = \left| \int_{\mathcal{X}} P^t(x_0, A) \mathbb{P}(x_0) \, dx_0 - \int_{\mathcal{X}} \pi(A) \pi(dx_0) \right|.
\]
Assuming that the initial distribution is concentrated at \( x_0 \), i.e., \( \mathbb{P}(x_0) = \delta_{x_0} \), this simplifies to:
\[
|\mathbb{P}(x_t \in A) - \pi(A)| = |P^t(x_0, A) - \pi(A)|.
\]
To generalize for any initial distribution, we consider an arbitrary initial distribution \( \mathbb{P}(x_0) \). Applying the \emph{triangle inequality}, we obtain:
\[
|\mathbb{P}(x_t \in A) - \pi(A)| \leq \int_{\mathcal{X}} |P^t(x_0, A) - \pi(A)| \mathbb{P}(x_0) \, dx_0.
\]
This step utilizes the fact that the absolute difference of integrals is bounded by the integral of the absolute differences.

\vspace{0.5em}
\noindent\textbf{4) Applying the Polynomial Mixing Condition}

Under the \emph{polynomial mixing} assumption \ref{ass:polynomial-mixing} (as detailed in Chapters 14-16 of \cite{meyn2012markov}), there exist constants \( C > 0 \) and \( \beta > 1 \) such that for all \( x_0 \in \mathcal{X} \) and measurable sets \( A \):
\[
|P^t(x_0, A) - \pi(A)| \leq C(1 + V(x_0)) t^{-\beta}.
\]
Substituting this bound into the integral from Step 3, we obtain:
\[
|\mathbb{P}(x_t \in A) - \pi(A)| \leq \int_{\mathcal{X}} C(1 + V(x_0)) t^{-\beta} \mathbb{P}(x_0) \, dx_0 = C t^{-\beta} \int_{\mathcal{X}} (1 + V(x_0)) \mathbb{P}(x_0) \, dx_0.
\]
If the initial distribution is concentrated at \( x_0 \), the integral simplifies to:
\[
|\mathbb{P}(x_t \in A) - \pi(A)| \leq C(1 + V(x_0)) t^{-\beta}.
\]
Otherwise, for a general initial distribution, the bound becomes:
\[
|\mathbb{P}(x_t \in A) - \pi(A)| \leq C t^{-\beta} \left(1 + \mathbb{E}[V(x_0)]\right).
\]

\vspace{0.5em}
\noindent\textbf{5) Finalizing the Total Variation Bound}

Taking the supremum over all measurable sets \( A \) yields the \emph{Total Variation distance bound}:
\[
\|\mathbb{P}(x_t \in \cdot) - \pi(\cdot)\|_{TV} \leq C' (1 + V(x_0)) t^{-\beta},
\]
where \( C' \) is a constant that may differ from \( C \) but remains positive. This establishes that the Markov chain exhibits \emph{polynomial ergodicity} with a convergence rate of \( t^{-\beta} \).
\end{proof}
\subsection{Proof of Lemma~\ref{lemma:dependent-blocks}: Block Independence Approximation}
\label{app:proof-block-independence-approximation}

\paragraph{Lemma~\ref{lemma:dependent-blocks} (Block Independence)}
\textit{
Let \(\{x_t\}_{t=1}^\infty\) be a Markov chain satisfying the polynomial ergodicity condition as specified in Lemma \ref{lemma:polynomial-ergodicity} with rate \(\beta > 1\). Partition the sequence into non-overlapping blocks of size \(b\):
\[
B_k = (x_{(k-1)b + 1}, \ldots, x_{kb}), \quad k = 1, 2, \ldots, \left\lfloor \frac{n}{b} \right\rfloor.
\]
For a sufficiently large block size \(b\), the blocks \(\{B_k\}\) satisfy:
\[
\left\| \mathbb{P}(B_k \mid B_{k-1}) - \mathbb{P}(B_k) \right\|_{\text{TV}} \leq C b^{-\beta},
\]
where \(C > 0\) and \(\beta > 1\) are constants derived from the polynomial ergodicity condition.
}

\begin{proof}
\;\newline
\paragraph{1) Assumptions: Polynomial Mixing.}
By assumption, the Markov chain \( (x_t) \) satisfies a \emph{polynomial mixing} condition: there exist constants \( \alpha > 0 \) and \( C_0 > 0 \) such that for all bounded measurable functions \( f, g: \mathcal{X} \to \mathbb{R} \) and for all integers \( t \geq 0 \) and \( k \geq 1 \),
\[
  \Bigl|\,
    \mathbb{E}[\,f(x_t)\,g(x_{t+k})]
    \;-\;
    \mathbb{E}[\,f(x_t)]\,\mathbb{E}[\,g(x_{t+k})]
  \Bigr|
  \;\;\le\;\;
  C_0\,k^{-\alpha}.
\]
Intuitively, this condition implies that the dependence between \( x_t \) and \( x_{t+k} \) decays polynomially as \( k \) increases.

\paragraph{2) Partition into Blocks of Size \( b \).}
Define the blocks
\[
  B_i \;=\; (x_{(i-1)b+1}, \dots, x_{ib})
  \quad
  \text{for } i=1,2,\dots.
\]
Each block \( B_i \) consists of \( b \) consecutive states of the Markov chain. The goal is to show that consecutive blocks \( B_{i-1} \) and \( B_i \) are approximately independent in the total variation sense, with the dependence decaying as \( b^{-\alpha} \).

\paragraph{3) Expressing Conditional Distribution of \( B_i \) Given \( B_{i-1} \).}
Given the block \( B_{i-1} = (x_{(i-2)b+1}, \dots, x_{(i-1)b}) \), the distribution of \( B_i \) conditioned on \( B_{i-1} \) depends solely on the last state of \( B_{i-1} \), due to the Markov property. Specifically,
\[
  \mathbb{P}(B_i \mid B_{i-1}) = P^b(x_{(i-1)b}, \cdot),
\]
where \( P^b \) denotes the \( b \)-step transition kernel of the chain.

\paragraph{4) Relating Single-Step Mixing to Block-Level Mixing.}
To bound the total variation distance \( \Bigl\|\mathbb{P}(B_i \mid B_{i-1}) - \mathbb{P}(B_i)\Bigr\|_{TV} \), we proceed as follows:

\begin{enumerate}
  \item \textbf{Understanding \( \mathbb{P}(B_i) \):}  
    The unconditional distribution of \( B_i \) is given by
    \[
      \mathbb{P}(B_i) = \int_{\mathcal{X}} P^b(x, B_i) \, \pi(dx),
    \]
    where \( \pi \) is the invariant distribution of the chain. Since \( \pi \) is invariant, \( P^b(\pi, \cdot) = \pi \), ensuring that \( \mathbb{P}(B_i) \) aligns with the stationary behavior of the chain.

  \item \textbf{Bounding \( \Bigl\| P^b(x_{(i-1)b}, \cdot) - \pi(\cdot) \Bigr\|_{TV} \):}  
    By the polynomial mixing condition, for any \( x \in \mathcal{X} \),
    \[
      \Bigl\| P^b(x, \cdot) - \pi(\cdot) \Bigr\|_{TV}
      \;\le\;\;
      C_0\,b^{-\alpha} \cdot (1 + V(x)),
    \]
    where \( V(x) \) is a Lyapunov function ensuring that states with higher \( V(x) \) are "heavier" or more significant in some sense.

  \item \textbf{Applying the Mixing Bound to the Conditional Distribution:}  
  {Justification of the Total Variation Distance Inequality:}

By the Markov property, the distribution of block \( B_i \) given block \( B_{i-1} \) depends solely on the last state of \( B_{i-1} \), denoted as \( x_{(i-1)b} \). Therefore, we have:
\[
  \mathbb{P}(B_i \mid B_{i-1}) = P^b(x_{(i-1)b}, \cdot).
\]
Since \( \mathbb{P}(B_i) = \pi(\cdot) \) under the stationary distribution, the Total Variation Distance between the conditional and marginal distributions satisfies:
\[
  \Bigl\| \mathbb{P}(B_i \mid B_{i-1}) - \mathbb{P}(B_i) \Bigr\|_{TV}
  = \Bigl\| P^b(x_{(i-1)b}, \cdot) - \pi(\cdot) \Bigr\|_{TV}.
\]
Thus, the first inequality is an equality, and the subsequent inequality follows from applying the polynomial mixing condition:
\[
  \Bigl\| P^b(x_{(i-1)b}, \cdot) - \pi(\cdot) \Bigr\|_{TV}
  \;\le\;\;
  C_0\,b^{-\alpha} \cdot (1 + V(x_{(i-1)b})).
\]

    Given \( B_{i-1} \), the distribution of \( B_i \) is \( P^b(x_{(i-1)b}, \cdot) \). Therefore, the total variation distance between \( \mathbb{P}(B_i \mid B_{i-1}) \) and \( \mathbb{P}(B_i) \) can be bounded as
    \[
      \Bigl\| \mathbb{P}(B_i \mid B_{i-1}) - \mathbb{P}(B_i) \Bigr\|_{TV}
      \;=\;\;
      \Bigl\| P^b(x_{(i-1)b}, \cdot) - \pi(\cdot) \Bigr\|_{TV}
      \;\le\;\;
      C_0\,b^{-\alpha} \cdot (1 + V(x_{(i-1)b})).
    \]

  \item \textbf{Handling the Lyapunov Function \( V(x_{(i-1)b}) \):}  
    To ensure the bound is independent of \( b \), we need to control \( V(x_{(i-1)b}) \). This can be achieved by leveraging the Lyapunov condition, which ensures that \( V(x_t) \) does not grow unboundedly over time. Specifically, under the polynomial drift condition, \( \mathbb{E}[V(x_t)] \) remains bounded for all \( t \). Therefore, with high probability or in expectation, \( V(x_{(i-1)b}) \) can be bounded by a constant \( V_{\max} \).

    Thus, we have
    \[
      \Bigl\| \mathbb{P}(B_i \mid B_{i-1}) - \mathbb{P}(B_i) \Bigr\|_{TV}
      \;\le\;\;
      C_0\,b^{-\alpha} \cdot (1 + V_{\max})
      \;=\;\;
      C\,b^{-\alpha},
    \]
    where \( C = C_0 \cdot (1 + V_{\max}) \) is a constant independent of \( b \).

\end{enumerate}

\paragraph{5) Concluding the Total Variation Bound.}
Combining the above steps, we establish that for every \( i \geq 2 \),
\[
  \Bigl\|\mathbb{P}(B_i \mid B_{i-1}) - \mathbb{P}(B_i)\Bigr\|_{TV}
  \;\le\;\;
  C\,b^{-\alpha},
\]
where \( C \) is a constant independent of \( b \). This demonstrates that consecutive blocks \( B_{i-1} \) and \( B_i \) are approximately independent in the total variation sense, with the dependence decaying polynomially as \( b^{-\alpha} \).

\paragraph{6) Ensuring Uniformity Across All Blocks.}
Since the bound \( C\,b^{-\alpha} \) holds uniformly for any \( i \geq 2 \) and for all blocks of size \( b \), the lemma is proven. This uniform bound is crucial for subsequent analyses that rely on the approximate independence of blocks to apply concentration inequalities or martingale arguments.
\end{proof}
\subsection{Proof of Lemma \ref{lemma:coupling-argument-polynomial-mixing}: Coupling Argument under Polynomial Mixing}
\label{app:proof-coupling-argument-polynomial-mixing}
\paragraph{Lemma~\ref{lemma:coupling-argument-polynomial-mixing} (Coupling Argument under Polynomial Mixing)}
\textit{
Let \(\{x_t\}_{t=1}^\infty\) and \(\{y_t\}_{t=1}^\infty\) be two Markov chains satisfying polynomial ergodicity as per Theorem \ref{lemma:polynomial-ergodicity} with rates \(\alpha_x\) and \(\alpha_y\), respectively. Assume both chains satisfy the block independence condition from Lemma \ref{lemma:dependent-blocks}. There exists a coupling such that for all \(t \geq 1\),
\[
\mathbb{P}(x_t \neq y_t) \leq C t^{-\beta},
\]
where \(C > 0\) and \(\beta = \min\{\alpha_x, \alpha_y\}\).
}

\begin{proof}
\;\newline
\paragraph{Objective:}  
Construct a coupling of two instances of the Markov chain $(x_t)$, denoted by $(x_t)$ and $(y_t)$, such that the probability that they differ at time $t$ decays polynomially with $t$.
\paragraph{1) Utilizing the Markov Property}
By the \textbf{Markov property}, the distribution of block \( B_i \) given block \( B_{i-1} \) depends solely on the last state of \( B_{i-1} \), denoted as \( x_{(i-1)b} \). Therefore, we have:
\[
\mathbb{P}(B_i \mid B_{i-1}) = P^b(x_{(i-1)b}, \cdot),
\]
where \( P^b \) denotes the \( b \)-step transition kernel of the Markov chain.
\paragraph{2) Relating Total Variation Distance to Coupling Probability}
Under the assumption that the Markov chain is in its \textbf{stationary distribution} \( \pi \), the marginal distribution of any block \( B_i \) is:
\[
\mathbb{P}(B_i) = \pi(\cdot).
\]
Consequently, the Total Variation (TV) Distance between the conditional distribution \( \mathbb{P}(B_i \mid B_{i-1}) \) and the marginal distribution \( \mathbb{P}(B_i) \) satisfies:
\[
\Bigl\| \mathbb{P}(B_i \mid B_{i-1}) - \mathbb{P}(B_i) \Bigr\|_{TV}
= \Bigl\| P^b(x_{(i-1)b}, \cdot) - \pi(\cdot) \Bigr\|_{TV}.
\]
\textbf{Justification:}  
This equality holds because \( \mathbb{P}(B_i \mid B_{i-1}) \) is precisely the distribution \( P^b(x_{(i-1)b}, \cdot) \), and \( \mathbb{P}(B_i) \) is the stationary distribution \( \pi(\cdot) \). Therefore, the TV Distance between these two distributions is exactly the TV Distance between \( P^b(x_{(i-1)b}, \cdot) \) and \( \pi(\cdot) \).
\paragraph{3) Applying the Polynomial Mixing Condition}
Given the \textbf{polynomial mixing condition}, for any state \( x \in \mathcal{X} \) and block size \( b \), the TV Distance between the \( b \)-step transition probability \( P^b(x, \cdot) \) and the stationary distribution \( \pi(\cdot) \) satisfies:
\[
\Bigl\| P^b(x, \cdot) - \pi(\cdot) \Bigr\|_{TV}
\leq C_0\,b^{-\alpha} \cdot (1 + V(x)),
\]
where:
\begin{itemize}
    \item \( C_0 > 0 \) is a constant independent of \( b \) and \( x \),
    \item \( \alpha > 0 \) characterizes the rate of mixing,
    \item \( V(x) \) is a Lyapunov function ensuring that states with higher \( V(x) \) are appropriately weighted.
\end{itemize}
\paragraph{4) Bounding the Probability of Non-Coupling}
To bound \( \mathbb{P}(x_t \neq y_t) \), we analyze the coupling process over successive blocks of size \( b \).
\newline\textbf{Partitioning Time into Blocks:}
Divide the time horizon into non-overlapping blocks of size \( b \):
\[
  B_i \;=\; (x_{(i-1)b+1}, \dots, x_{ib})
  \quad
  \text{for } i=1,2,\dots.
\]
\newline\textbf{Coupling at Each Block Boundary:}  
At the end of each block \( k \), the TV Distance between the distributions of \( x_{kb} \) and \( y_{kb} \) is bounded by:
\[
\Bigl\| P^b(x_{(k-1)b}, \cdot) - \pi(\cdot) \Bigr\|_{TV} \leq C_0\,b^{-\alpha} \cdot (1 + V(x_{(k-1)b})).
\]
This follows directly from the polynomial mixing condition applied to each block transition.
\newline\textbf{Using the Lyapunov Function for Uniform Bounds:}  
The Lyapunov drift condition ensures that:
\[
\mathbb{E}\left[V(x_{kb})\right]
\leq \frac{K}{1 - \lambda}
\]
for constants \( K > 0 \) and \( 0 < \lambda < 1 \). This implies that, on average, \( V(x_{kb}) \) is bounded, allowing us to replace \( V(x_{(k-1)b}) \) with a constant bound \( V_{\max} \) in expectation:
\[
\Bigl\| P^b(x_{(k-1)b}, \cdot) - \pi(\cdot) \Bigr\|_{TV} \leq C_0\,b^{-\alpha} \cdot (1 + V_{\max}).
\]
\newline\textbf{Aggregating Over Blocks to Bound \( \mathbb{P}(\tau > t) \):}  
The probability that the chains have not coupled by time \( t \) can be bounded by summing the bounds over all preceding blocks:
\[
\mathbb{P}(\tau > t) \leq \sum_{k=1}^{t/b} \Bigl\| P^b(x_{(k-1)b}, \cdot) - \pi(\cdot) \Bigr\|_{TV} \leq C_0\,b^{-\alpha} \cdot (1 + V_{\max}) \cdot \frac{t}{b}.
\]
Simplifying, we obtain:
\[
\mathbb{P}(\tau > t) \leq C \cdot t \cdot b^{-\alpha -1},
\]
where \( C = C_0 (1 + V_{\max}) \).
\newline\textbf{Choosing Block Size \( b \) Appropriately:}  
To ensure that \( \mathbb{P}(\tau > t) \leq C t^{-\alpha} \), choose \( b = t^{\gamma} \) for an appropriate \( \gamma \) that satisfies:
\[
t \cdot b^{-\alpha -1} = t \cdot t^{-\gamma(\alpha +1)} = t^{1 - \gamma(\alpha +1)} \leq t^{-\alpha}.
\]
Solving for \( \gamma \), we require:
\[
1 - \gamma(\alpha +1) \leq -\alpha \implies \gamma(\alpha +1) \geq \alpha +1 \implies \gamma \geq 1.
\]
Therefore, setting \( \gamma = 1 \) (i.e., \( b = t \)) suffices:
\[
\mathbb{P}(\tau > t) \leq C t^{-\alpha}.
\]
\paragraph{5) Concluding the Coupling Bound}
Combining the above steps, we conclude that:
\[
\mathbb{P}(x_t \neq y_t) = \mathbb{P}(\tau > t) \leq C t^{-\alpha},
\]
where \( C > 0 \) is a constant determined by \( C_0 \), \( V_{\max} \), and other constants from the Lyapunov drift condition.
\paragraph{6) Conclusion:}
The constructed coupling ensures that the probability of the coupled chains \( x_t \) and \( y_t \) differing at time \( t \) decays polynomially with \( t \), as required by the lemma. This completes the proof of Lemma \ref{lemma:coupling-argument-polynomial-mixing}.
\end{proof}
\subsection{Proof of Lemma \ref{lemma:covariance-between-blocks}: Covariance Between Blocks under Polynomial Ergodicity}
\label{app:proof-covariance-between-blocks}

\vspace{0.5em}
\noindent\textbf{Lemma~\ref{lemma:covariance-between-blocks} (Covariance Between Blocks under Polynomial Ergodicity)}
\textit{
Let \(\{x_t\}_{t=1}^\infty\) be a Markov chain satisfying the polynomial ergodicity condition with rate \(\beta > 1\), as defined in Theorem \ref{lemma:polynomial-ergodicity}. Partition the sequence into non-overlapping blocks of size \(b\):
\[
B_k = (x_{(k-1)b + 1}, \ldots, x_{kb}), \quad k = 1, 2, \ldots, K,
\]
where \(K = \left\lfloor \frac{n}{b} \right\rfloor\). Let \(Y_k = \sum_{t=(k-1)b + 1}^{kb} x_t\) denote the sum of the \(k\)-th block.}

\textit{Then, for any two distinct blocks \(B_k\) and \(B_j\) with \(k \neq j\), the covariance between their sums satisfies:}
\[
|\text{Cov}(Y_k, Y_j)| \leq C b^2 |k - j|^{-\beta},
\]
\textit{where \(C > 0\) is a constant that depends on the polynomial ergodicity parameters of the Markov chain.}

\begin{proof}
\;\newline
We aim to show that for any two distinct blocks \(B_k\) and \(B_j\) with \(k \neq j\), the covariance between their sums \(Y_k\) and \(Y_j\) satisfies:
\[
|\text{Cov}(Y_k, Y_j)| \leq C b^2 |k - j|^{-\beta},
\]
under the assumption that the Markov chain \(\{x_t\}\) satisfies the polynomial ergodicity condition with rate \(\beta > 1\).

\paragraph{1. Polynomial Ergodicity Condition}

By definition, a Markov chain \(\{x_t\}\) is said to be \emph{polynomially ergodic} of order \(\beta > 1\) if there exists a constant \(C > 0\) such that for all \(t \geq 1\),
\[
\left\| \mathbb{P}(x_{t} \in \cdot \mid x_0 = x) - \pi(\cdot) \right\|_{\text{TV}} \leq C t^{-\beta},
\]
where \(\|\cdot\|_{\text{TV}}\) denotes the total variation distance, and \(\pi\) is the stationary distribution of the Markov chain.

Under this condition, the covariance between two variables separated by \(l\) time steps satisfies:
\[
|\text{Cov}(x_0, x_l)| \leq C' l^{-\beta},
\]
where \(C' > 0\) is a constant that depends on \(C\) and the boundedness of the variables \(x_t\).

\paragraph{2. Expressing Covariance Between Block Sums}

Consider two distinct blocks \(B_k\) and \(B_j\) with \(k < j\). The covariance between their sums is:
\[
\text{Cov}(Y_k, Y_j) = \text{Cov}\left(\sum_{t=1}^{b} x_{(k-1)b + t}, \sum_{s=1}^{b} x_{(j-1)b + s}\right).
\]
Expanding this, we obtain:
\[
\text{Cov}(Y_k, Y_j) = \sum_{t=1}^{b} \sum_{s=1}^{b} \text{Cov}(x_{(k-1)b + t}, x_{(j-1)b + s}).
\]

\paragraph{3. Bounding Covariance Between Individual Variables}

For each pair \((t, s)\), where \(1 \leq t, s \leq b\), the separation between \(x_{(k-1)b + t}\) and \(x_{(j-1)b + s}\) is:
\[
l_{t,s} = |(k-1)b + t - ((j-1)b + s)| = |(k - j)b + (t - s)|.
\]
Assuming without loss of generality that \(k > j\), we have:
\[
l_{t,s} = (k - j)b + (t - s).
\]
Since \(1 \leq t, s \leq b\), the term \((t - s)\) satisfies \(- (b - 1) \leq t - s \leq b - 1\). Therefore,
\[
l_{t,s} \geq (k - j)b - (b - 1) = b(k - j - 1) + 1.
\]
Let \(m = k - j\), which implies \(m \geq 1\) since \(k > j\). Then,
\[
l_{t,s} \geq b(m - 1) + 1.
\]
For \(m \geq 1\), this further implies:
\[
l_{t,s} \geq b(m - 1) + 1 \geq b(m - 1).
\]
Thus, for all \(t, s\),
\[
l_{t,s} \geq b(m - 1).
\]
Since \(m \geq 1\), we have \(l_{t,s} \geq b\).

Applying the polynomial ergodicity condition, the covariance between \(x_{(k-1)b + t}\) and \(x_{(j-1)b + s}\) satisfies:
\[
|\text{Cov}(x_{(k-1)b + t}, x_{(j-1)b + s})| \leq C' l_{t,s}^{-\beta} \leq C' [b(m - 1)]^{-\beta}.
\]
For simplicity and to maintain a uniform bound across all pairs \((t, s)\), we can further bound \(l_{t,s}\) by noting that \(m - 1 \geq \frac{m}{2}\) for \(m \geq 2\) (assuming \(m\) is sufficiently large). However, to accommodate all \(m \geq 1\), we can write:
\[
|\text{Cov}(x_{(k-1)b + t}, x_{(j-1)b + s})| \leq C' (b(m - 1))^{-\beta} \leq C' (b m)^{-\beta},
\]
since \(m - 1 \geq \frac{m}{2}\) for \(m \geq 2\) and \(b(m - 1) \geq b\) for \(m = 1\).

Thus, for all \(t, s\),
\[
|\text{Cov}(x_{(k-1)b + t}, x_{(j-1)b + s})| \leq C' (b m)^{-\beta}.
\]

\paragraph{4. Aggregating Covariances Across All Variable Pairs}

Summing the covariances over all \(t\) and \(s\), we obtain:
\[
|\text{Cov}(Y_k, Y_j)| \leq \sum_{t=1}^{b} \sum_{s=1}^{b} |\text{Cov}(x_{(k-1)b + t}, x_{(j-1)b + s})| \leq b^2 C' (b m)^{-\beta}.
\]
Simplifying the right-hand side:
\[
|\text{Cov}(Y_k, Y_j)| \leq C' b^2 b^{-\beta} m^{-\beta} = C' b^{2 - \beta} m^{-\beta}.
\]
To align with the lemma's statement, set \(C = C' b^{2 - \beta}\). Since \(b\) and \(\beta\) are constants, \(C\) remains a constant with respect to \(k\) and \(j\). Therefore,
\[
|\text{Cov}(Y_k, Y_j)| \leq C m^{-\beta} = C |k - j|^{-\beta},
\]
where \(m = |k - j|\).

\paragraph{5. Conclusion}

We have established that for any two distinct blocks \(B_k\) and \(B_j\),
\[
|\text{Cov}(Y_k, Y_j)| \leq C b^{2 - \beta} |k - j|^{-\beta}.
\]
Since \(C' > 0\), \(b \geq 1\), and \(\beta > 1\), the constant \(C = C' b^{2 - \beta}\) is positive. This concludes the proof of the lemma.
\end{proof}
\subsection{Proof of Theorem \ref{theorem:concentration-block}: Concentration Under Polynomial Ergodicity}
\label{app:proof-concentration-block}
\paragraph{Theorem~\ref{theorem:concentration-block} (Concentration Under Polynomial Ergodicity)}
\textit{
Let \(\{x_t\}_{t=1}^n\) be a sequence of random variables generated by a Markov chain satisfying polynomial ergodicity as per Theorem \ref{lemma:polynomial-ergodicity} with rate \(\beta > 1\). Assume that each \(x_t\) has bounded variance, i.e., \(\text{Var}(x_t) \leq \sigma^2\) for some constant \(\sigma^2 > 0\), and that the rewards are bounded, i.e., \(|x_t| \leq M\) almost surely for some constant \(M > 0\). Then, for any \(\epsilon > 0\),
\[
\mathbb{P}\left(\left|\frac{1}{n}\sum_{t=1}^n x_t - \mathbb{E}[x_t]\right| > \epsilon\right) \leq 2\exp\left(-\frac{n\epsilon^2}{2C_\beta}\right),
\]
where \(C_\beta\) depends on the mixing rate \(\beta\) and the bound \(M\).
}

\begin{proof}
We employ a block decomposition strategy and apply Bernstein's inequality \cite{vershynin2018high,merlevede2011bernstein} for dependent sequences. The proof proceeds as follows:

\textbf{1) Define the Blocks:}

Partition the sequence \(\{x_t\}_{t=1}^n\) into non-overlapping blocks of size \(b\):
\[
Y_k = \sum_{t=(k-1)b + 1}^{kb} x_t, \quad k = 1, 2, \ldots, \left\lfloor \frac{n}{b} \right\rfloor.
\]
Let
\[
S_n = \sum_{k=1}^{\left\lfloor \frac{n}{b} \right\rfloor} Y_k
\]
be the sum of the block sums. Here, \(S_n\) aggregates the contributions from each block, thereby facilitating the analysis of dependencies between blocks.

\textbf{2) Bounding \(|Y_k|\):}

Since each \(x_t\) is bounded by \(M\), the sum of \(b\) such variables satisfies:
\[
|Y_k| \leq \sum_{t=(k-1)b + 1}^{kb} |x_t| \leq bM.
\]
Thus, each block sum \(Y_k\) is bounded by \(bM\).

\textbf{3) Covariance Between Blocks:}

From Lemma \ref{lemma:covariance-between-blocks}, the covariance between blocks \(Y_k\) and \(Y_j\) satisfies:
\[
|\text{Cov}(Y_k, Y_j)| \leq C b^2 |k - j|^{-\beta}.
\]
This bound reflects the inter-block dependence, as intra-block dependencies are encapsulated within the definition of each block \(Y_k\).

The total covariance across all block pairs is:
\[
\sum_{k=1}^{\left\lfloor \frac{n}{b} \right\rfloor} \sum_{j \neq k} |\text{Cov}(Y_k, Y_j)| \leq C b^2 \sum_{k=1}^{\left\lfloor \frac{n}{b} \right\rfloor} \sum_{j \neq k} |k - j|^{-\beta}.
\]
Approximating the double sum by an integral for large \(n/b\),
\begin{align*}
\sum_{k=1}^{\left\lfloor \frac{n}{b} \right\rfloor} \sum_{j \neq k} |k - j|^{-\beta} &\approx 2 \sum_{k=1}^{\left\lfloor \frac{n}{b} \right\rfloor} \sum_{m=1}^{\left\lfloor \frac{n}{b} \right\rfloor - k} m^{-\beta}\\
&\leq \frac{2}{\beta - 1} \left(\left\lfloor \frac{n}{b} \right\rfloor^{1 - \beta} - 1\right).
\end{align*}
For \(\beta > 1\),
\[
\sum_{k=1}^{\left\lfloor \frac{n}{b} \right\rfloor} \sum_{j \neq k} |\text{Cov}(Y_k, Y_j)| \leq \frac{2 C b^2}{\beta - 1}.
\]
Thus, the variance of \(S_n\) is:
\begin{align*}
\text{Var}(S_n) &= \sum_{k=1}^{\left\lfloor \frac{n}{b} \right\rfloor} \text{Var}(Y_k) + \sum_{k=1}^{\left\lfloor \frac{n}{b} \right\rfloor} \sum_{j \neq k} \text{Cov}(Y_k, Y_j)\\
&\leq C n b + \frac{2 C b^2}{\beta - 1}.
\end{align*}
Here, \(\text{Var}(Y_k)\) is bounded by \(C b\) assuming \(\text{Var}(x_t) \leq \sigma^2\), and there are \(\left\lfloor n/b \right\rfloor\) such blocks.

\textbf{4) Choosing Block Size \(b\):}

To balance the variance and covariance contributions, select \(b = n^{1/(\beta + 1)}\). This choice ensures that:
\[
C n b = C n^{(\beta + 2)/(\beta + 1)} \quad \text{and} \quad \frac{2 C b^2}{\beta - 1} = \frac{2 C n^{2/(\beta + 1)}}{\beta - 1}.
\]
Given that \(\beta > 1\), \(2/(\beta + 1) < (\beta + 2)/(\beta + 1)\), implying that the variance term \(C n b\) dominates for large \(n\). However, for the purpose of the concentration bound, we require the total variance \(\text{Var}(S_n)\) to scale linearly with \(n\). Therefore, we adjust the block size to ensure that:
\[
C n b + \frac{2 C b^2}{\beta - 1} \leq C' n,
\]
where \(C'\) is a constant. This is achievable by selecting \(b = n^{1/(\beta + 1)}\), leading to:
\[
\text{Var}(S_n) \leq C n.
\]

\textbf{5) Application of Bernstein's Inequality:}

Given that each block sum \(Y_k\) is bounded by \(bM\) and has variance \(\text{Var}(Y_k) \leq C b\), we can apply Bernstein's inequality to \(S_n\). Bernstein's inequality for sums of bounded random variables states that for any \(\epsilon > 0\),
\[
\mathbb{P}\left(|S_n - \mathbb{E}[S_n]| > \epsilon n\right) \leq 2\exp\left(-\frac{\epsilon^2 n^2}{2 \text{Var}(S_n) + \frac{2}{3} M_Y \epsilon n}\right),
\]
where \(M_Y = bM\) bounds \(|Y_k|\).

Substituting \(\text{Var}(S_n) \leq C n\) and \(M_Y = bM = n^{1/(\beta + 1)} M\), the inequality becomes:
\begin{align*}
&\mathbb{P}\left(\left|\frac{1}{n} S_n - \mathbb{E}\left[\frac{S_n}{n}\right]\right| > \epsilon\right)\leq
2\exp\left(-\frac{\epsilon^2 n^2}{2 C n + \frac{2}{3} M n^{1/(\beta + 1)} \epsilon n}\right).
\end{align*}
Simplifying the denominator:
\[
2 C n + \frac{2}{3} M n^{1 + 1/(\beta + 1)} \epsilon = 2 C n + \frac{2}{3} M \epsilon n^{(\beta + 2)/(\beta + 1)}.
\]
Since \(\beta > 1\), \((\beta + 2)/(\beta + 1) > 1\), and for large \(n\), the term \(2 C n\) dominates the denominator. Thus, for sufficiently large \(n\),
\[
\mathbb{P}\left(\left|\frac{1}{n} S_n - \mathbb{E}\left[\frac{S_n}{n}\right]\right| > \epsilon\right) \leq 2\exp\left(-\frac{\epsilon^2 n}{2 C_\beta}\right),
\]
where \(C_\beta = C\) encapsulates the constants from the variance and block size selection.

\textbf{6) Final Concentration Bound:}

Combining the above steps, we obtain:
\[
\mathbb{P}\left(\left|\frac{1}{n}\sum_{t=1}^n x_t - \mathbb{E}[x_t]\right| > \epsilon\right) \leq 2\exp\left(-\frac{n \epsilon^2}{2 C_\beta}\right),
\]
where \(C_\beta\) depends on the mixing rate \(\beta > 1\) and the bound \(M\) on \(x_t\).

\textbf{7) Limitation for \(\beta \leq 1\):}

It is important to note that Theorem \ref{theorem:concentration-block} requires \(\beta > 1\) to ensure the convergence of the covariance sum and the validity of the concentration bound. In cases where \(\beta \leq 1\), the dependencies between blocks decay too slowly, leading to potential divergence in covariance terms and invalidating the concentration bound.

\textbf{Conclusion:}

This bound demonstrates that the empirical average \(\frac{1}{n}\sum_{t=1}^n x_t\) concentrates around its expectation \(\mathbb{E}[x_t]\) with high probability, governed by the mixing rate \(\beta > 1\). The block decomposition effectively handles inter-block dependencies, while intra-block dependencies are managed through the block size selection and the boundedness of \(x_t\).
\end{proof}
\subsection{Proof of Theorem \ref{theorem:td-error-decomposition}, TD Error Decomposition to Martingale and Remainder}
\label{app:proof-TD-error}
\paragraph{Theorem~\ref{theorem:td-error-decomposition} (TD Error Decomposition)}
\textit{
Under the assumptions of polynomial ergodicity \ref{lemma:polynomial-ergodicity}, step size decay \ref{ass:step-size-decay}, Hölder continuity of the feature mapping \ref{ass:holder-continuous-features}, and bounded feature vectors \ref{ass:bounded-features}, the TD error \( e_t \) can be decomposed as:
\[
e_t = M_t + R_t,
\]
where: \(M_t = \sum_{k=1}^{t} d_k\) is a martingale with \( \{d_k\} \) being a martingale difference sequence.
\( R_t \) is the remainder term capturing the systematic bias.
}
\begin{proof}
\;\newline
\paragraph{1. Assumptions:}
\begin{enumerate}
    \item \textbf{Step Size Conditions:} The learning rates \( \{\alpha_t\} \) satisfy \( \sum_{t=1}^\infty \alpha_t = \infty \) and \( \sum_{t=1}^\infty \alpha_t^2 < \infty \). 
    \item \textbf{Bounded Features:} \ref{ass:bounded-features} The gradient \( \nabla_\theta V_\theta(x_t) \) is uniformly bounded, i.e., there exists a constant \( C > 0 \) such that \( \|\nabla_\theta V_\theta(x_t)\| \leq C \) for all \( t \).
    \item \textbf{Markov Chain Properties:} The state sequence \( \{x_t\} \) forms an irreducible and aperiodic Markov chain with a unique stationary distribution.
    \item \textbf{Hölder Continuity} \ref{ass:holder-continuous-features}: The feature mapping \( f: S \to \mathbb{R}^d \) is Hölder continuous with exponent \( \gamma \in (0, 1] \) and constant \( C_\gamma > 0 \). Formally, for all \( s, s' \in S \), then we have \(\|f(s) - f(s')\| \leq C_\gamma \|s - s'\|^\gamma.\)
\end{enumerate}
The TD error can be decomposed as:
\[
\theta_t - \theta^* = M_t + R_t
\]
where \( M_t \) is a martingale and \( R_t \) is a remainder term.

\begin{definition}[Martingale]
\label{def:martingale}
A sequence of random variables \( \{M_t\}_{t \geq 0} \) adapted to a filtration \( \{\mathcal{F}_t\}_{t \geq 0} \) is called a \textbf{martingale} if it satisfies the martingale property:
\begin{align*}
\mathbb{E}[M_{t+1} | \mathcal{F}_t] &= M_t \forall t \geq 0\\
\mathbb{E}[|M_{t}|] &< \infty \quad \forall t \geq 0
\end{align*}
\end{definition}

\begin{definition}[Martingale Difference Sequence]
\label{def:martingale-difference-sequence}
A sequence of random variables \( \{d_t\}_{t \geq 1} \) adapted to a filtration \( \{\mathcal{F}_t\}_{t \geq 0} \) is called a \emph{martingale difference sequence} if:
\[
\mathbb{E}[d_t | \mathcal{F}_{t-1}] = 0 \quad \text{for all } t \geq 1
\]
\end{definition}

\;\newline
\paragraph{1) TD Update Rule:}
\[
\theta_{t+1} = \theta_t + \alpha_t \delta_t \nabla_\theta V_\theta(x_t)
\]
where \( \delta_t \) is the Temporal Difference (TD) error.

\paragraph{2) TD Error Expression:}
\[
\delta_t = r_t + \gamma V_\theta(x_{t+1}) - V_\theta(x_t)
\]
where \( r_t \) is the reward at time \( t \) and \( \gamma \) is the discount factor.

\paragraph{3) Decomposition by Subtracting \( \theta^* \):}
\begin{align*}
\theta_{t+1} - \theta^* &= \theta_t - \theta^* + \alpha_t \delta_t \nabla_\theta V_\theta(x_t) \\
&= \theta_t - \theta^* + \alpha_t (\delta_t - \mathbb{E}[\delta_t|\mathcal{F}_{t-1}])\nabla_\theta V_\theta(x_t) \\
&\quad + \alpha_t \mathbb{E}[\delta_t|\mathcal{F}_{t-1}]\nabla_\theta V_\theta(x_t)
\end{align*}

\paragraph{4) Iterative Summation:}
\[
\theta_t - \theta^* = (\theta_0 - \theta^*) + \sum_{k=1}^t \alpha_k (\delta_k - \mathbb{E}[\delta_k|\mathcal{F}_{k-1}])\nabla_\theta V_\theta(x_k) + \sum_{k=1}^t \alpha_k \mathbb{E}[\delta_k|\mathcal{F}_{k-1}]\nabla_\theta V_\theta(x_k)
\]
\[
= M_t + R_t
\]
where:
\[
M_t = \sum_{k=1}^t \alpha_k (\delta_k - \mathbb{E}[\delta_k|\mathcal{F}_{k-1}])\nabla_\theta V_\theta(x_k)
\]
is a martingale, and
\[
R_t = (\theta_0 - \theta^*) + \sum_{k=1}^t \alpha_k \mathbb{E}[\delta_k|\mathcal{F}_{k-1}]\nabla_\theta V_\theta(x_k)
\]
is the remainder term.

\paragraph{5) Martingale Property Verification:}
To verify that \( M_t \) is a martingale, we need to show that:
\[
\mathbb{E}[M_t | \mathcal{F}_{t-1}] = M_{t-1}
\]
where \( \mathcal{F}_{t-1} \) is the filtration representing all information up to time \( t-1 \).

\textbf{Computation:}
\begin{align*}
\mathbb{E}[M_t | \mathcal{F}_{t-1}] &= \mathbb{E}\left[ \sum_{k=1}^t \alpha_k (\delta_k - \mathbb{E}[\delta_k|\mathcal{F}_{k-1}])\nabla_\theta V_\theta(x_k) \Big| \mathcal{F}_{t-1} \right] \\
&= \sum_{k=1}^{t-1} \alpha_k (\delta_k - \mathbb{E}[\delta_k|\mathcal{F}_{k-1}])\nabla_\theta V_\theta(x_k) + \mathbb{E}\left[ \alpha_t (\delta_t - \mathbb{E}[\delta_t|\mathcal{F}_{t-1}])\nabla_\theta V_\theta(x_t) \Big| \mathcal{F}_{t-1} \right] \\
&= M_{t-1} + \alpha_t \nabla_\theta V_\theta(x_t) \mathbb{E}\left[ \delta_t - \mathbb{E}[\delta_t|\mathcal{F}_{t-1}] \Big| \mathcal{F}_{t-1} \right] \\
&= M_{t-1} + \alpha_t \nabla_\theta V_\theta(x_t) \left( \mathbb{E}[\delta_t | \mathcal{F}_{t-1}] - \mathbb{E}[\delta_t | \mathcal{F}_{t-1}] \right) \\
&= M_{t-1} + \alpha_t \nabla_\theta V_\theta(x_t) \times 0 \\
&= M_{t-1}
\end{align*}
Thus,
\begin{equation}\label{eq:martingale}
    M_t = \sum_{k=1}^t \alpha_k \left( \delta_k - \mathbb{E}[\delta_k | \mathcal{F}_{k-1}] \right) \nabla_\theta V_\theta(x_k)
\end{equation}
is Martingale.

\paragraph{6) Bounding the Remainder Term \( R_t \)}:
Under Hölder continuity, assume that:
\[
\|\mathbb{E}[\delta_k|\mathcal{F}_{k-1}]\nabla_\theta V_\theta(x_k)\| \leq L \|\theta_k - \theta^*\|^\gamma
\]
for some Hölder constant \( L \) and exponent \( \gamma \).

Then, the norm of the remainder can be bounded as:
\begin{equation}\label{eq:R_t}
\|R_t\| \leq \|\theta_0 - \theta^*\| + \sum_{k=1}^t \alpha_k \|\mathbb{E}[\delta_k|\mathcal{F}_{k-1}]\nabla_\theta V_\theta(x_k)\| \leq \|\theta_0 - \theta^*\| + \sum_{k=1}^t \alpha_k L \|\theta_k - \theta^*\|^\gamma
\end{equation}
This bound is crucial for applying stochastic approximation techniques, which require controlling the remainder term to ensure convergence of \( \theta_t \) to \( \theta^* \).

\paragraph{7) Conclusion:}
With \( M_t \) being a martingale and \( R_t \) appropriately bounded under the given assumptions, the decomposition:
\[
\theta_t - \theta^* = M_t + R_t
\]
holds, facilitating further analysis of the convergence properties of the TD learning algorithm.
\end{proof}
\subsection{Proof of Lemma \ref{lemma:bounded-variance-martingale}, Bounded Variance of Martingale Differences under Polynomial Mixing}
\label{app:proof-bounded-variance-martingale}

\paragraph{Lemma~\ref{lemma:bounded-variance-martingale} (Bounded Variance of Martingale Term)}
\textit{
Assuming polynomial ergodicity \ref{lemma:polynomial-ergodicity}, Hölder continuity of the feature mapping \ref{ass:holder-continuous-features}, bounded feature vectors \ref{ass:bounded-features}, and appropriate step size decay \ref{ass:step-size-decay}, the martingale term \( M_t \) satisfies:
\[
\mathbb{E}[\|M_t\|^2] \leq C t^{-\beta},
\]
where \( C \) is a constant dependent on the step size and mixing parameters, and \( \beta > 0 \) characterizes the rate at which the variance decays.
}

\begin{proof}
\;\newline
\paragraph{1. Assumptions and definitions:}
From equation \ref{eq:martingale}, we know that
\[
M_t = \sum_{k=1}^t \alpha_k \left( \delta_k - \mathbb{E}[\delta_k | \mathcal{F}_{k-1}] \right) \nabla_\theta V_\theta(x_k)
\]
is Martingale, where:
\begin{itemize}
    \item \( \alpha_k \) is the step size at iteration \( k \) as defined in Assumption \ref{ass:step-size-decay}
    \item \( \delta_k \) is the Temporal Difference (TD) error at iteration \( k \), as defined in Eq. \ref{eq:td-error-def}
    \item \( \mathcal{F}_{k-1} \) is the filtration representing all information up to time \( k-1 \)
    \item \( \nabla_\theta V_\theta(x_k) \) is the gradient of the value function with respect to the parameters \( \theta \) at state \( x_k \).
\end{itemize}

\paragraph{2. Orthogonality of Martingale Differences:}

Since \( M_t \) is martingale, the cross terms in the expectation \( \mathbb{E}[\|M_t\|^2] \) vanish. Specifically, for \( i \neq j \):
\[
\mathbb{E}\left[ \alpha_i \left( \delta_i - \mathbb{E}[\delta_i | \mathcal{F}_{i-1}] \right) \nabla_\theta V_\theta(x_i) \cdot \alpha_j \left( \delta_j - \mathbb{E}[\delta_j | \mathcal{F}_{j-1}] \right) \nabla_\theta V_\theta(x_j) \right] = 0
\]
This holds because:
\[
\mathbb{E}\left[ \delta_j - \mathbb{E}[\delta_j | \mathcal{F}_{j-1}] \Big| \mathcal{F}_{i} \right] = 0 \quad \text{for} \quad j > i
\]
due to the martingale difference property.

\paragraph{3. Computing the Second Moment:}
\begin{align*}
\mathbb{E}[\|M_t\|^2] &= \mathbb{E}\left[ \left\| \sum_{k=1}^t \alpha_k \left( \delta_k - \mathbb{E}[\delta_k | \mathcal{F}_{k-1}] \right) \nabla_\theta V_\theta(x_k) \right\|^2 \right]\\
&=\mathbb{E}\left[  \sum_{k=1}^t \left\|\alpha_k \left( \delta_k - \mathbb{E}[\delta_k | \mathcal{F}_{k-1}] \right) \nabla_\theta V_\theta(x_k) \right\|^2 \right]\quad\text{(Orthogonality)}\\
&=  \sum_{k=1}^t \mathbb{E}\left[\left\|\alpha_k \left( \delta_k - \mathbb{E}[\delta_k | \mathcal{F}_{k-1}] \right) \nabla_\theta V_\theta(x_k) \right\|^2 \right]\quad\text{(Linearity of Expectation)}\\
&=  \sum_{k=1}^t \alpha_k^2\mathbb{E}\left[
        \left(\delta_k - \mathbb{E}[\delta_k | \mathcal{F}_{k-1}] \right)^2
        \left\|\nabla_\theta V_\theta(x_k) \right\|^2
    \right]\quad\text{(Cauchy-Schwarz)}\\
&= \sum_{k=1}^t \alpha_k^2
        \mathbb{E}\left[\left( \delta_k - \mathbb{E}[\delta_k | \mathcal{F}_{k-1}] \right)^2\right]
        \mathbb{E}\left[\left\|\nabla_\theta V_\theta(x_k) \right\|^2\right]
    \quad\text{(Assuming Fast mixing and therefore approximate independence)}
\end{align*}

\paragraph{4. Boundedness of Gradients:}

Given bounded features (\textbf{Bounded Features}) in the Linear function approximation case, there exists a constant \( G > 0 \) such that:
\[
\left\| \nabla_\theta V_\theta(x_k) \right\| \leq G \quad \forall k
\]
Thus:
\[
\mathbb{E}\left[ \left\| \nabla_\theta V_\theta(x_k) \right\|^2 \right] \leq G^2
\]

\textbf{5. Applying Polynomial Mixing:}

Under polynomial mixing, the variance of the martingale differences satisfies:
\[
\mathbb{E}\left[ \left( \delta_k - \mathbb{E}[\delta_k | \mathcal{F}_{k-1}] \right)^2 \right] \leq C k^{-\beta}
\]
for some constant \( C > 0 \) and \( \beta > 1 \).

\paragraph{6. Combining the Bounds:}

Substituting the bounds from Steps 4 and 5 into the expression from Step 3:
\[
\mathbb{E}[\|M_t\|^2] \leq G^2 \sum_{k=1}^t \alpha_k^2 C k^{-\beta}
\]
\[
= C G^2 \sum_{k=1}^t \alpha_k^2 k^{-\beta}
\]

\paragraph{7. Bounding the Summation:}

Substitute \( \alpha_k = \frac{c}{k^a} \) into the summation:
\[
\sum_{k=1}^t \alpha_k^2 k^{-\beta} = c^2 \sum_{k=1}^t \frac{1}{k^{2a + \beta}}
\]
Since \( \beta > 1 \) and \( a > 0.5 \), the exponent \( 2a + \beta > 2 \), ensuring the convergence of the series. However, since we seek a bound that decays with \( t \), we need to analyze the partial sum up to \( t \).

For large \( t \), the partial sum can be approximated by:
\[
\sum_{k=1}^t \frac{1}{k^{2a + \beta}} \leq \int_{1}^{t} \frac{1}{x^{2a + \beta}} \, dx + 1 \leq \frac{1}{(2a + \beta - 1)} t^{-(2a + \beta - 1)} + 1
\]
Given \( 2a + \beta - 1 > 1 \) (since \( a > 0.5 \) and \( \beta > 1 \)), the summation behaves asymptotically as:
\[
\sum_{k=1}^t \frac{1}{k^{2a + \beta}} = O\left(t^{-(2a + \beta - 1)}\right)
\]

\paragraph{8. Final Bound on \( \mathbb{E}[\|M_t\|^2] \):}

Combining all the above:
\[
\mathbb{E}[\|M_t\|^2] \leq C G^2 c^2 \sum_{k=1}^t \frac{1}{k^{2a + \beta}} \leq C' t^{-(2a + \beta - 1)}
\]
where \( C' = \frac{C G^2 c^2}{2a + \beta - 1} + C G^2 c^2 \) (absorbing constants).

To satisfy the lemma's statement \( \mathbb{E}[\|M_t\|^2] \leq C t^{-\beta} \), we require:
\[
-(2a + \beta - 1) \leq -\beta \implies 2a + \beta - 1 \geq \beta \implies 2a \geq 1
\]
Given that \( a > 0.5 \), this inequality holds with equality when \( a = 0.5 \). Therefore, to ensure:
\[
\mathbb{E}[\|M_t\|^2] \leq C t^{-\beta}
\]
it suffices to choose \( a = 0.5 \). However, to strictly satisfy \( a > 0.5 \), we achieve a better decay rate:
\[
\mathbb{E}[\|M_t\|^2] = O\left(t^{-(2a + \beta - 1)}\right) \leq O\left(t^{-\beta}\right)
\]
since \( 2a + \beta - 1 > \beta \) for \( a > 0.5 \).

\paragraph{9. Conclusion:}

Therefore, under the given step size \( \alpha_k = \frac{c}{k^a} \) with \( a > 0.5 \) and polynomial mixing rate \( \beta > 1 \), the martingale difference sequence satisfies:
\[
\mathbb{E}[\|M_t\|^2] \leq C t^{-\beta}
\]
where \( C > 0 \) is a constant dependent on \( c \), \( G \), and the mixing rate parameters.
\end{proof}
\subsection{Proof of Lemma \ref{lemma:remainder-growth-term}, Remainder Growth Term}
\label{app:proof-remainder-growth-term}

\paragraph{Lemma~\ref{lemma:remainder-growth-term} (Remainder Term Convergence)}
\textit{
Under Hölder continuity \ref{ass:holder-continuous-features} and polynomial ergodicity \ref{lemma:polynomial-ergodicity}, step-condition \ref{ass:step-size-decay}, the remainder term satisfies:
\[
\|R_t\| \leq O(t^{-\gamma/2})
\]
where \( \gamma \) is the H\"older exponent.
}

\begin{proof}
\;\newline
\paragraph{1. Assumptions:}
\begin{enumerate}
    \item \emph{Hölder Continuity} \ref{ass:holder-continuous-features}: The function satisfies Hölder continuity with exponent \( \gamma \in (0, 1] \), i.e.,
    \[
    \|f(x) - f(y)\| \leq L \|x - y\|^\gamma \quad \forall x, y
    \]
    where \( L > 0 \) is the Hölder constant.
    
    \item \emph{Step Size Condition} \ref{ass:step-size-decay}: The step sizes are given by
    \[
    \alpha_k \;=\; \frac{C_\alpha}{k^\eta},\quad\sum_{k=1}^{\infty} \alpha_k = \infty,
\quad\text{and}\quad
\sum_{k=1}^{\infty} \alpha_k^2 < \infty.
    \]
    for some constant \( c > 0 \)

    \item \emph{TD Error Decomposition}: From Theorem \ref{theorem:td-error-decomposition}, the parameter estimate satisfies
    \[
    \theta_k - \theta^* = M_k + R_k.
    \]
    
    \item \emph{Martingale bounded Vartiance} from Lemma \ref{lemma:bounded-variance-martingale}: The martingale difference sequence satisfies
    \[
    \mathbb{E}[\|M_k\|^2] \leq C k^{-\beta}
    \]
    for some \( \beta > 1 \) and constant \( C > 0 \).
\end{enumerate}

\textbf{Proof Steps:}

\paragraph{2. Initial Bound on \( R_t \):}
From Eq. \ref{eq:R_t}
\[
\|R_t\| \leq \sum_{k=1}^t \alpha_k L \|\theta_k - \theta^*\|^\gamma
\]

\paragraph{3. Decomposition of \( \|\theta_k - \theta^*\| \):}
From the TD error decomposition:
\[
\theta_k - \theta^* = M_k + R_k
\]
Applying the triangle inequality:
\[
\|\theta_k - \theta^*\| \leq \|M_k\| + \|R_k\|
\]
Thus:
\[
\|\theta_k - \theta^*\|^\gamma \leq (\|M_k\| + \|R_k\|)^\gamma
\]
Using the inequality \( (a + b)^\gamma \leq a^\gamma + b^\gamma \) for \( a, b \geq 0 \) and \( \gamma \in (0, 1] \), we obtain:
\[
\|\theta_k - \theta^*\|^\gamma \leq \|M_k\|^\gamma + \|R_k\|^\gamma
\]

\paragraph{4. Bound on \( \|M_k\| \):}
From lemma \ref{lemma:bounded-variance-martingale} margingale variance bound:
\[
\mathbb{E}[\|M_k\|^2] \leq C k^{-\beta}
\]
Applying the Cauchy-Schwarz inequality:
\[
\|M_k\| \leq \sqrt{\mathbb{E}[\|M_k\|^2]} \leq \sqrt{C} k^{-\beta/2}
\]
Thus:
\[
\|M_k\|^\gamma \leq (\sqrt{C} k^{-\beta/2})^\gamma = C^{\gamma/2} k^{-\gamma\beta/2}
\]

\textbf{5. Substituting the Bounds into \( \|R_t\| \):}
\begin{align*}
\|R_t\| &\leq \sum_{k=1}^t \alpha_k L (\|M_k\|^\gamma + \|R_k\|^\gamma) \\
&= L \sum_{k=1}^t \alpha_k \|M_k\|^\gamma + L \sum_{k=1}^t \alpha_k \|R_k\|^\gamma \\
&\leq L \sum_{k=1}^t \frac{c}{k} C^{\gamma/2} k^{-\gamma\beta/2} + L \sum_{k=1}^t \frac{c}{k} \|R_k\|^\gamma \\
&= L c C^{\gamma/2} \sum_{k=1}^t k^{-1 - \gamma\beta/2} + L c \sum_{k=1}^t \frac{1}{k} \|R_k\|^\gamma \\
&\leq C' t^{-\gamma\beta/2} + L c \sum_{k=1}^t \frac{1}{k} \|R_k\|^\gamma
\end{align*}
where \( C' = L c C^{\gamma/2} \frac{1}{\gamma\beta/2 - 1} \) for \( \gamma\beta/2 > 1 \).

\textbf{6. Sequence Convergence Approach:}

To establish the convergence of \( \|R_t\| \) as \( t \to \infty \), we analyze the recursive inequality derived above using properties of convergent sequences.

\textbf{7. Recursive Inequality:}
From step 5, we have:
\[
\|R_t\| \leq C' t^{-\gamma\beta/2} + L c \sum_{k=1}^t \frac{1}{k} \|R_k\|^\gamma
\]
Define the sequence \( a_t = \|R_t\| \). The inequality becomes:
\[
a_t \leq C' t^{-\gamma\beta/2} + L c \sum_{k=1}^t \frac{1}{k} a_k^\gamma
\]

\textbf{8. Establishing a Bound for \( a_t \):}
Assume that \( a_t \leq D t^{-\gamma/2} \) for some constant \( D > 0 \) to be determined. Substituting this into the recursive inequality:
\[
a_t \leq C' t^{-\gamma\beta/2} + L c \sum_{k=1}^t \frac{1}{k} (D k^{-\gamma/2})^\gamma
\]
\[
= C' t^{-\gamma\beta/2} + L c D^\gamma \sum_{k=1}^t \frac{1}{k} k^{-\gamma^2/2}
\]
\[
= C' t^{-\gamma\beta/2} + L c D^\gamma \sum_{k=1}^t k^{-1 - \gamma^2/2}
\]

\textbf{9. Analyzing the Summation:}
The summation \( \sum_{k=1}^t k^{-1 - \gamma^2/2} \) converges as \( t \to \infty \) because \( \gamma^2/2 > 0 \). Specifically:
\[
\sum_{k=1}^t k^{-1 - \gamma^2/2} \leq \int_{1}^{t} x^{-1 - \gamma^2/2} \, dx + 1 = \frac{2}{\gamma^2} \left(1 - t^{-\gamma^2/2}\right) \leq \frac{2}{\gamma^2}
\]
Thus:
\[
a_t \leq C' t^{-\gamma\beta/2} + L c D^\gamma \cdot \frac{2}{\gamma^2}
\]

\textbf{10. Choosing \( D \) Appropriately:}
To satisfy \( a_t \leq D t^{-\gamma/2} \), we require:
\[
C' t^{-\gamma\beta/2} + L c D^\gamma \cdot \frac{2}{\gamma^2} \leq D t^{-\gamma/2}
\]
For sufficiently large \( t \), since \( \beta > 1 \), the term \( t^{-\gamma\beta/2} \) becomes negligible compared to \( t^{-\gamma/2} \). Therefore, it suffices to ensure:
\[
L c \cdot \frac{2}{\gamma^2} D^\gamma \leq D
\]
\[
\Rightarrow 2 L c \cdot \frac{1}{\gamma^2} D^\gamma \leq D
\]
\[
\Rightarrow 2 L c \cdot \frac{1}{\gamma^2} D^{\gamma - 1} \leq 1
\]
\[
\Rightarrow D^{1 - \gamma} \geq 2 L c \cdot \frac{1}{\gamma^2}
\]
Choosing:
\[
D = \left( 2 L c \cdot \frac{1}{\gamma^2} \right)^{\frac{1}{1 - \gamma}}
\]
ensures that the inequality holds. This choice is valid since \( \gamma \in (0, 1) \).

\textbf{11. Verifying the Bound for All \( t \):}
With the chosen \( D \), for sufficiently large \( t \), the bound \( a_t \leq D t^{-\gamma/2} \) holds. Additionally, the term \( C' t^{-\gamma\beta/2} \) becomes insignificant compared to \( D t^{-\gamma/2} \) as \( t \) increases because \( \gamma\beta/2 > \gamma/2 \).

\textbf{12. Conclusion:}
By the sequence convergence approach, we have established that:
\[
\|R_t\| = a_t \leq D t^{-\gamma/2}
\]
where \( D = \left( \frac{2 L c}{\gamma^2} \right)^{\frac{1}{1 - \gamma}} \).

Thus, under Hölder continuity and polynomial mixing with rate \( \beta > 1 \), the remainder term satisfies:
\[
\|R_t\| \leq O(t^{-\gamma/2})
\]
\end{proof}
\subsection{Proof of Theorem \ref{theorem:hom-linear-bounds}, High-Order Moment Bounds, Linear Function Approximation}
\label{app:proof-hom-linear-bounds}
\paragraph{Theorem~\ref{theorem:hom-linear-bounds} (High-Order Moment Bounds, Linear Function Approximation)}
\textit{
For any \( p \geq 2 \), the \( p \)-th moment of the TD(0) error satisfies:
\[
\left( \mathbb{E}\left[\|\theta_t - \theta^*\|^p\right] \right)^{1/p} \leq C t^{-\beta/2} + C' \alpha_t^{\gamma/p},
\]
where \( \alpha_t = \frac{\alpha}{t} \) for some constant \( \alpha > 0 \), \( \beta > 0 \) is the mixing-based rate parameter, and \( \gamma \) is the Hölder exponent from Lemma~\ref{lemma:ho-error-bounds}.
}
\begin{proof}
\;\newline
\paragraph{1. Summary of the Proof:}  
This proof establishes high-order moment bounds for the parameter error \( \|\theta_t - \theta^*\| \) in Temporal-Difference (TD) learning with linear function approximation. By leveraging Lemma~\ref{lemma:ho-error-bounds}, which provides high-order error bounds, and utilizing properties of martingale differences alongside an induction-based discrete Grönwall argument, we derive the convergence rates specified in Theorem~\ref{theorem:hom-linear-bounds}. The approach systematically bounds each component of the parameter error recursion, ensuring that the \( p \)-th moment of the TD(0) parameter error decays at a rate influenced by the mixing properties of the underlying Markov chain and the smoothness of the value function approximation.

\paragraph{2. Assumptions:}
\begin{itemize}
    \item \textbf{Polynomial mixing} The Markov process \((x_t)\) satisfies polynomial mixing: for any bounded measurable functions \(f\) and \(g\),
\(
|\mathbb{E}[f(x_t)g(x_{t+k})] - \mathbb{E}[f(x_t)]\mathbb{E}[g(x_{t+k})]| \leq C k^{-\beta},
\)
for some constants \(C > 0\) and \(\beta > 1\).
    \item \textbf{Bounded Features:} \ref{ass:bounded-features} The gradient \( \nabla_\theta V_\theta(x_t) \) is uniformly bounded, i.e., there exists a constant \( C > 0 \) such that \( \|\nabla_\theta V_\theta(x_t)\| \leq C \) for all \( t \).
    \item \textbf{Hölder Continuity} \ref{ass:holder-continuous-features}: The feature mapping \( f: S \to \mathbb{R}^d \) is Hölder continuous with exponent \( \gamma \in (0, 1] \) and constant \( C_\gamma > 0 \). Formally, for all \( s, s' \in S \), then we have \(\|f(s) - f(s')\| \leq C_\gamma \|s - s'\|^\gamma.\)
    \item \textbf{Step Size Conditions:} The learning rates \( \{\alpha_t\} \) satisfy \( \sum_{t=1}^\infty \alpha_t = \infty \) and \( \sum_{t=1}^\infty \alpha_t^2 < \infty \). 
\end{itemize}
\paragraph{3) TD Update Rule and Parameter Error Recursion:}  
The TD learning update rule is given by:
\[
\theta_{t+1} = \theta_t + \alpha_t \delta_t x_t,
\]

Subtracting the optimal parameter \( \theta^* \) from both sides:
\[
\theta_{t+1} - \theta^* = \theta_t - \theta^* + \alpha_t \delta_t x_t.
\]
Taking the norm, using triangular inequality, and raising it to the \( p \)-th power:
\begin{equation}
\label{eq:parameter-error-recursion}
\|\theta_{t+1} - \theta^*\|^p \leq \left( \|\theta_t - \theta^*\| + \alpha_t \|\delta_t x_t\| \right)^p.
\end{equation}

\paragraph{4) Applying Minkowski’s Inequality:}  
Minkowski’s inequality for \( p \geq 1 \) states that:
\[
\left( \mathbb{E}\left[ \left( a + b \right)^p \right] \right)^{1/p} \leq \left( \mathbb{E}\left[ a^p \right] \right)^{1/p} + \left( \mathbb{E}\left[ b^p \right] \right)^{1/p}.
\]
Applying this to the parameter error recursion~\eqref{eq:parameter-error-recursion}:
\[
\left( \mathbb{E}\left[\|\theta_{t+1} - \theta^*\|^p\right] \right)^{1/p} \leq \left( \mathbb{E}\left[\|\theta_t - \theta^*\|^p\right] \right)^{1/p} + \alpha_t \left( \mathbb{E}\left[\|\delta_t x_t\|^p\right] \right)^{1/p}.
\]
Define
\[
E_t = \left( \mathbb{E}\left[\|\theta_t - \theta^*\|^p\right] \right)^{1/p},
\]
so the recursion becomes
\begin{equation}
\label{eq:minkowski-application}
E_{t+1} \leq E_t + \alpha_t \left( \mathbb{E}\left[\|\delta_t x_t\|^p\right] \right)^{1/p}.
\end{equation}

\paragraph{5) Bounding \( \mathbb{E}\left[\|\delta_t x_t\|^p\right]^{1/p} \):}  
Assuming that the feature vectors \( x_t \) are uniformly bounded, i.e., \( \|x_t\| \leq C_x \) for all \( t \), we have:
\[
\|\delta_t x_t\| \leq \|\delta_t\| \cdot \|x_t\| \leq C_x \|\delta_t\|.
\]
Taking the \( p \)-th moment:
\begin{equation}
\label{eq:deltax-bound}
\left( \mathbb{E}\left[\|\delta_t x_t\|^p\right] \right)^{1/p} \leq C_x \left( \mathbb{E}\left[\|\delta_t\|^p\right] \right)^{1/p}.
\end{equation}

\paragraph{6) Applying Lemma~\ref{lemma:ho-error-bounds}, High-Order Error Bounds:}
\[
\left( \mathbb{E}\left[\|\delta_t\|^p\right] \right)^{1/p} \leq C_p^{1/p} \left( 1 + \|\theta_t - \theta^*\|^{\gamma} \right).
\]
Substituting this into Equation~\eqref{eq:deltax-bound}:
\begin{equation}
\label{eq:deltax-substitution}
\left( \mathbb{E}\left[\|\delta_t x_t\|^p\right] \right)^{1/p} \leq C_x C_p^{1/p} \left( 1 + E_t^{\gamma} \right).
\end{equation}

\paragraph{7) Substituting Back into the Recursive Inequality:}  
Substituting Equation~\eqref{eq:deltax-substitution} into Equation~\eqref{eq:minkowski-application}:
\[
E_{t+1} \leq E_t + \alpha_t C_x C_p^{1/p} \left( 1 + E_t^{\gamma} \right).
\]
\label{eq:recursive-inequality}

\paragraph{8) Incorporating the Martingale Difference Property:}  
The TD update involves stochastic noise \( \delta_t \), which forms a martingale difference sequence. Specifically, by Lemma~\ref{lemma:td-error-mds}, we have:
\[
\mathbb{E}\left[ \delta_t \mid \mathcal{F}_t \right] = 0,
\]
where \( \mathcal{F}_t = \sigma(x_0, x_1, \dots, x_t) \) is the filtration up to time \( t \). This implies that the stochastic updates do not introduce bias, allowing us to focus on bounding the expected norms.

\paragraph{9) Solving the Recursive Inequality via Induction:}  
Let \( E_t = \mathbb{E}\left[\|\theta_t - \theta^*\|^p\right]^{1/p} \). The recursive inequality becomes:
\[
E_{t+1} \leq E_t + \alpha_t C_x C_p^{1/p} \left(1 + E_t^{\gamma}\right).
\]
Assuming the learning rate \( \alpha_t = \frac{\alpha}{t} \) for some \( \alpha > 0 \), we substitute:
\[
E_{t+1} \leq E_t + \frac{\alpha C_x C_p^{1/p}}{t} \left(1 + E_t^{\gamma}\right).
\]
To solve this recursion, we employ an \emph{induction-based Grönwall argument}.

\paragraph{10) Bounding \( E_t \):}  
We aim to show that \( E_t \) satisfies:
\[
E_t \leq C t^{-\beta/2} + C' \alpha_t^{\gamma/p},
\]
for appropriate constants \( C \) and \( C' \). This involves balancing the recursive additions and ensuring that the growth rate of \( E_t \) does not exceed the desired bound.

\paragraph{11) Solving the Recursive Inequality via Induction:}

To establish that
\[
E_t \leq C t^{-\tfrac{\beta}{2}} + C' \alpha_t^{\tfrac{\gamma}{p}},
\]
we employ an \emph{induction-based Grönwall argument}. Assume that the learning rate is given by
\[
\alpha_t = \frac{\alpha}{t},
\]
for some constant \( \alpha > 0 \).

\textbf{Base Case (\( t = 1 \)):}  
For \( t = 1 \), the bound
\[
E_1 \leq C \cdot 1^{-\tfrac{\beta}{2}} + C' \left(\frac{\alpha}{1}\right)^{\tfrac{\gamma}{p}} = C + C' \alpha^{\tfrac{\gamma}{p}}
\]
holds provided \( C \) and \( C' \) are chosen to accommodate the initial error \( E_1 \). Specifically, set
\[
C \geq E_1,
\quad
C' \geq 1,
\]
to satisfy the inequality.

\textbf{Inductive Step:}  
Assume that for some \( t \geq 1 \),
\[
E_t \leq C t^{-\tfrac{\beta}{2}} + C' \alpha_t^{\tfrac{\gamma}{p}}.
\]
We aim to show that
\[
E_{t+1} \leq C (t+1)^{-\tfrac{\beta}{2}} + C' \alpha_{t+1}^{\tfrac{\gamma}{p}}.
\]

Starting from the recursive inequality~\eqref{eq:recursive-inequality}:
\[
E_{t+1} \leq E_t + \frac{\kappa}{t} \left(1 + E_t^{\gamma}\right),
\]
where \( \kappa = \alpha C_x C_p^{1/p} \).

Substituting the inductive hypothesis into the above:
\begin{align*}
E_{t+1} &\leq C t^{-\tfrac{\beta}{2}} + C' \alpha_t^{\tfrac{\gamma}{p}} + \frac{\kappa}{t} \left(1 + \left(C t^{-\tfrac{\beta}{2}} + C' \alpha_t^{\tfrac{\gamma}{p}}\right)^{\gamma}\right) \\
&\leq C t^{-\tfrac{\beta}{2}} + C' \left(\frac{\alpha}{t}\right)^{\tfrac{\gamma}{p}} + \frac{\kappa}{t} \left(1 + 2^{\gamma - 1} \left(C^{\gamma} t^{-\tfrac{\beta \gamma}{2}} + C'^{\gamma} \alpha_t^{\tfrac{\gamma^2}{p}}\right)\right) \quad \text{(Using $(a+b)^\gamma \leq 2^{\gamma-1}(a^\gamma + b^\gamma)$)} \\
&= C t^{-\tfrac{\beta}{2}} + C' \alpha^{\tfrac{\gamma}{p}} t^{-\tfrac{\gamma}{p}} + \frac{\kappa}{t} + \frac{\kappa 2^{\gamma - 1}}{t} \left(C^{\gamma} t^{-\tfrac{\beta \gamma}{2}} + C'^{\gamma} \left(\frac{\alpha}{t}\right)^{\tfrac{\gamma^2}{p}}\right).
\end{align*}

\textbf{Choosing Constants \( C \) and \( C' \):}  
To satisfy the inductive step, choose \( C \) and \( C' \) such that:
\[
C \geq \kappa + \kappa 2^{\gamma - 1} C^{\gamma},
\]
and
\[
C' \geq \kappa 2^{\gamma - 1} C'^{\gamma}.
\]
These inequalities can be satisfied by appropriately choosing \( C \) and \( C' \). For instance, set
\[
C \geq \frac{\kappa}{1 - \kappa 2^{\gamma - 1}},
\]
provided that \( \kappa 2^{\gamma - 1} < 1 \), and similarly
\[
C' \geq \frac{\kappa 2^{\gamma - 1}}{1 - \kappa 2^{\gamma - 1}}.
\]
Assuming \( \kappa 2^{\gamma - 1} < 1 \), such choices of \( C \) and \( C' \) are feasible.

\paragraph{12) Concluding the Induction:}  
With \( C \) and \( C' \) chosen to satisfy the above inequalities, the inductive step holds:
\[
E_{t+1} \leq C (t+1)^{-\tfrac{\beta}{2}} + C' \alpha_{t+1}^{\tfrac{\gamma}{p}}.
\]
Thus, by induction, the bound
\[
E_t \leq C t^{-\tfrac{\beta}{2}} + C' \alpha_t^{\tfrac{\gamma}{p}}
\]
holds for all \( t \geq 1 \).

\paragraph{13) Final Bound:}  
Therefore, the \( p \)-th moment of the TD(0) error satisfies
\[
\left( \mathbb{E}\left[\|\theta_t - \theta^*\|^p\right] \right)^{1/p} \leq C t^{-\tfrac{\beta}{2}} + C' \alpha_t^{\tfrac{\gamma}{p}},
\]
as required. This completes the proof of \emph{Theorem \ref{theorem:hom-linear-bounds}}.
\end{proof}
\begin{lemma}[TD Error as a Martingale-Difference Sequence]
\label{lemma:td-error-mds}
\textbf{Statement.}
Let \( \{x_t\}_{t \geq 0} \) be the Markov chain of states, and let \( \theta^* \) be such that \( V_{\theta^*} \) satisfies the Bellman equation:
\[
V_{\theta^*}(x) = \mathbb{E}\left[ r(x) + \gamma V_{\theta^*}(x') \mid x \right],
\]
for all states \( x \), where \( x' \) is the next state after \( x \), \( r(x) \) is the reward function, and \( \gamma \in (0,1) \) is the discount factor. Define
\[
\delta_t(\theta^*) = r_t + \gamma V_{\theta^*}(x_{t+1}) - V_{\theta^*}(x_t).
\]
Then, with respect to the filtration \( \mathcal{F}_t = \sigma(x_0, \dots, x_t) \), we have
\[
\mathbb{E}\left[ \delta_t(\theta^*) \mid \mathcal{F}_t \right] = 0,
\]
which shows that \( \{\delta_t(\theta^*)\} \) is a martingale-difference sequence.
\end{lemma}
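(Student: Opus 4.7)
The plan is to verify the martingale-difference property by a direct application of the tower property together with the Bellman equation assumed for $V_{\theta^*}$. The argument is essentially an unwinding of definitions, so I will focus on making the measurability considerations and the use of the Markov structure explicit.

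First I would observe that $V_{\theta^*}(x_t)$ is $\mathcal{F}_t$-measurable, since $x_t$ is $\mathcal{F}_t$-measurable by definition of the filtration. Hence, by linearity of conditional expectation,
\[
\mathbb{E}[\delta_t(\theta^*) \mid \mathcal{F}_t] = \mathbb{E}[r_t + \gamma V_{\theta^*}(x_{t+1}) \mid \mathcal{F}_t] - V_{\theta^*}(x_t).
\]
Next I would invoke the Markov property: conditionally on $\mathcal{F}_t$, the joint distribution of $(r_t, x_{t+1})$ depends only on $x_t$. This reduces the first term to $\mathbb{E}[r_t + \gamma V_{\theta^*}(x_{t+1}) \mid x_t]$, which is precisely the right-hand side of the Bellman equation satisfied by $V_{\theta^*}$.

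Finally, I would substitute in the Bellman identity $\mathbb{E}[r(x_t) + \gamma V_{\theta^*}(x_{t+1}) \mid x_t] = V_{\theta^*}(x_t)$ and observe that the two $V_{\theta^*}(x_t)$ terms cancel, yielding $\mathbb{E}[\delta_t(\theta^*) \mid \mathcal{F}_t] = 0$. Integrability of $\delta_t(\theta^*)$, needed to make the conditional expectation well-defined, follows from Assumption~\ref{ass:bounded-rewards} (bounded rewards) together with boundedness of $V_{\theta^*}$ implied by $\gamma < 1$ and the bounded-reward assumption.

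There is no real obstacle here; the statement is a textbook consequence of the Bellman fixed-point equation and the Markov property. The only thing worth flagging carefully is that the result hinges on $\theta^*$ being an \emph{exact} Bellman fixed point in the sense of the displayed equation. For a merely \emph{projected} Bellman fixed point — as arises generically with function approximation — this centering property fails and only the weaker $\mu$-stationarity condition~\eqref{eq:stationary-solution} holds. I would add a remark clarifying this distinction so the lemma is not misapplied downstream, where the martingale decomposition in Theorem~\ref{theorem:td-error-decomposition} in fact relies on centering with respect to the conditional expectation of the update rather than on $\delta_t(\theta^*)$ being a true martingale difference.
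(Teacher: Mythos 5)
Your proof is correct and follows essentially the same route as the paper's: condition on $\mathcal{F}_t$, pull out the $\mathcal{F}_t$-measurable term $V_{\theta^*}(x_t)$, reduce to conditioning on $x_t$ via the Markov property, and cancel using the Bellman fixed-point equation. Your added remarks on integrability and on the distinction between an exact Bellman fixed point and the projected/$\mu$-stationary solution of~\eqref{eq:stationary-solution} are sensible clarifications but do not change the argument.
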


\begin{proof}
\;\newline
\paragraph{Step 1: TD Error Definition and Fixed Point.}
\begin{align}
    \delta_t(\theta^*) &= r_t + \gamma V_{\theta^*}(x_{t+1}) - V_{\theta^*}(x_t),
    \tag{\textit{Definition of \( \delta_t(\theta^*) \)}}
    \label{eq:td-error-def} \\
    V_{\theta^*}(x) &= \mathbb{E}\left[ r(x) + \gamma V_{\theta^*}(x') \mid x \right],
    \tag{\textit{Bellman equation for \( V_{\theta^*} \)}}
    \label{eq:fixed-point}
\end{align}
\paragraph{Step 2: Conditional Expectation with Respect to \( \mathcal{F}_t \).}
\begin{align}
    \mathbb{E}\left[ \delta_t(\theta^*) \mid \mathcal{F}_t \right] &= \mathbb{E}\left[ r_t + \gamma V_{\theta^*}(x_{t+1}) - V_{\theta^*}(x_t) \mid \mathcal{F}_t \right]
    \tag{\textit{Condition on \( \mathcal{F}_t \)}}
    \label{eq:cond-exp-start} \\
    &= r_t - V_{\theta^*}(x_t) + \gamma \mathbb{E}\left[ V_{\theta^*}(x_{t+1}) \mid x_t \right],
    \tag{\textit{Pull out terms known at time \( t \)}}
    \label{eq:pullout-known} \\
    \intertext{Since \( V_{\theta^*}(x_t) \) and \( r_t \) are \( \mathcal{F}_t \)-measurable, and by the Bellman equation~\eqref{eq:fixed-point},}
    \mathbb{E}\left[ r_t + \gamma V_{\theta^*}(x_{t+1}) \mid x_t \right] &= V_{\theta^*}(x_t).
    \tag{\textit{Substitute \( V_{\theta^*} \) as fixed point}}
    \label{eq:bellman-sub} \\
    \intertext{Hence,}
    \mathbb{E}\left[ \delta_t(\theta^*) \mid \mathcal{F}_t \right] &= \left[ r_t + \gamma \mathbb{E}\left[ V_{\theta^*}(x_{t+1}) \mid x_t \right] \right] - V_{\theta^*}(x_t) \\
    &= V_{\theta^*}(x_t) - V_{\theta^*}(x_t) \\
    &= 0.
    \tag{\textit{Martingale-difference property}}
    \label{eq:mds-property}
\end{align}
\paragraph{3. Conclusion:}
Since
\[
    \mathbb{E}\left[ \delta_t(\theta^*) \mid \mathcal{F}_t \right] = 0
\]
for all \( t \), the process \( \{\delta_t(\theta^*)\} \) is a \emph{martingale-difference sequence} with respect to \( \{\mathcal{F}_t\} \). This completes the proof.
\end{proof}
\begin{lemma}[High-Order Error Bounds]
\label{lemma:ho-error-bounds}
For the Temporal-Difference (TD) error \( \delta_t \), under polynomial mixing conditions and assuming bounded rewards and true value function, the \( p \)-th moment satisfies:
\[
\mathbb{E}\left[\|\delta_t\|^p\right] \leq C_p \left( 1 + \|\theta_t - \theta^*\|^{p\gamma} \right),
\]
where \( C_p \) is a constant dependent on \( p \), and \( \gamma \) is the Hölder exponent.
\end{lemma}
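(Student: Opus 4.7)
The plan is to decompose the TD error $\delta_t$ around its fixed-point counterpart $\delta_t(\theta^*) = r_t + \gamma V_{\theta^*}(x_{t+1}) - V_{\theta^*}(x_t)$ and bound the two pieces separately. Writing
\[
\delta_t = \delta_t(\theta^*) + \gamma\bigl(V_{\theta_t}(x_{t+1}) - V_{\theta^*}(x_{t+1})\bigr) - \bigl(V_{\theta_t}(x_t) - V_{\theta^*}(x_t)\bigr),
\]
the first piece is bounded pointwise by a deterministic constant $D := R_{\max} + (1+\gamma)\|V_{\theta^*}\|_\infty$, using Assumption~\ref{ass:bounded-rewards} together with boundedness of $V_{\theta^*}$ (which follows from bounded features~\ref{ass:bounded-features} in the linear case, and from the bounded generalized gradient~\ref{ass:bounded-subgrad} integrated over a bounded region containing $\theta^*$ in the nonlinear case).

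For the two value-function differences, I would convert the feature-level or subgradient-level regularity assumptions into a $\theta$-Hölder bound on the scalar map $\theta \mapsto V_\theta(x)$. In the linear setting, $V_\theta(x) = \phi(x)^\top\theta$, so $|V_{\theta_t}(x) - V_{\theta^*}(x)| \leq C_\phi\|\theta_t - \theta^*\|$ is immediate from Assumption~\ref{ass:bounded-features}. In the nonlinear setting, integrating the bounded, Hölder-continuous generalized gradient (Assumptions~\ref{ass:subgrad-holder}–\ref{ass:bounded-subgrad}) along the segment from $\theta^*$ to $\theta_t$ yields $|V_{\theta_t}(x) - V_{\theta^*}(x)| \leq L\|\theta_t - \theta^*\|^\gamma$ for an appropriate effective Hölder exponent. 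Combining these gives the pointwise estimate $|\delta_t| \leq D + (1+\gamma)L\|\theta_t - \theta^*\|^\gamma$.

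Finally, raising to the $p$-th power and applying $(a+b)^p \leq 2^{p-1}(a^p + b^p)$ yields $\|\delta_t\|^p \leq 2^{p-1}D^p + 2^{p-1}(1+\gamma)^p L^p \|\theta_t - \theta^*\|^{p\gamma}$, and taking conditional expectation with respect to a filtration under which $\theta_t$ is measurable produces the claim with $C_p = 2^{p-1}\max\{D^p,(1+\gamma)^p L^p\}$. The main obstacle will be the nonlinear case: justifying the passage from Hölder continuity of the generalized subgradient to a Hölder bound on the function value itself, since the subgradient may only exist off a measure-zero set $\mathcal{N}$. I would handle this either via the Rademacher-type density of differentiability points for locally Lipschitz functions (allowing integration along almost every line segment between $\theta^*$ and $\theta_t$) or via a mollification argument that recovers the bound by a limiting procedure. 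A secondary subtlety is that the linear case is Lipschitz ($\gamma=1$) rather than strictly Hölder, but this is absorbed by reading $\gamma$ as the effective Hölder exponent of $V_\theta$ in $\theta$.
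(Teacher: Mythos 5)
Your proposal follows essentially the same route as the paper's proof: both split $\delta_t$ into the fixed-point TD error $\delta_t(\theta^*)$ (bounded via bounded rewards and bounded $V_{\theta^*}$) plus value-function differences controlled by a $\theta$-H\"older bound on $V_\theta$, and then combine via $(a+b)^p \le 2^{p-1}(a^p+b^p)$. If anything you are more careful than the paper: your decomposition correctly carries the factor $\gamma$ on the $x_{t+1}$ difference (the paper's displayed decomposition does not actually sum to $\delta_t$ as written), and you attempt to derive the $\theta$-H\"older continuity of $V_\theta$ from the stated feature/subgradient assumptions rather than simply asserting it as the paper does.
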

\begin{proof}
\;\newline
\paragraph{1) Decompose TD Error:}  
The TD error \( \delta_t \) can be decomposed into two components:
\begin{align*}
\delta_t = \underbrace{(r_t + \gamma V_{\theta^*}(x_{t+1}) - V_{\theta^*}(x_t))}_{\text{Stochastic Term}} + \underbrace{(V_\theta(x_t) - V_{\theta^*}(x_t) + V_{\theta^*}(x_{t+1}) - V_\theta(x_{t+1}))}_{\text{Bias Term}}.
\end{align*}
\paragraph{2) Apply Hölder Continuity:}  
Assuming that the value function \( V_\theta(x) \) is Hölder continuous with exponent \( \gamma \), we have:
\[
\|V_\theta(x) - V_{\theta^*}(x)\| \leq L \|\theta - \theta^*\|^\gamma,
\]
where \( L \) is the Hölder constant.
\paragraph{3) Bound the Stochastic Term:}  
For the first term in the decomposition:
\[
\mathbb{E}\left[\|r_t + \gamma V_{\theta^*}(x_{t+1}) - V_{\theta^*}(x_t)\|^p\right] \leq C_1,
\]
where \( C_1 \) is a constant that bounds the \( p \)-th moment of the stochastic term, assuming bounded rewards and value functions.
\paragraph{4) Bound the Bias Term:}  
For the second term:
\[
\mathbb{E}\left[\|V_\theta(x_t) - V_{\theta^*}(x_t)\|^p\right] \leq L^p \|\theta_t - \theta^*\|^{p\gamma}.
\]
This follows directly from the Hölder continuity established in step 2.
\paragraph{5) Apply Minkowski's Inequality:}  
Using Minkowski's inequality for \( p \geq 1 \):
\[
\left( \mathbb{E}\left[\|\delta_t\|^p\right] \right)^{1/p} \leq \left( \mathbb{E}\left[ \| \text{Stochastic Term} \|^p \right] \right)^{1/p} + \left( \mathbb{E}\left[ \| \text{Bias Term} \|^p \right] \right)^{1/p}.
\]
Substituting the bounds from steps 3 and 4:
\[
\left( \mathbb{E}\left[\|\delta_t\|^p\right] \right)^{1/p} \leq C_1^{1/p} + L \|\theta_t - \theta^*\|^{\gamma}.
\]
\paragraph{6) Final Bound on the \( p \)-th Moment:}  
Raising both sides to the power of \( p \):
\[
\mathbb{E}\left[\|\delta_t\|^p\right] \leq \left( C_1^{1/p} + L \|\theta_t - \theta^*\|^{\gamma} \right)^p.
\]
Using the inequality \( (a + b)^p \leq 2^{p-1}(a^p + b^p) \) for \( a, b \geq 0 \):
\[
\mathbb{E}\left[\|\delta_t\|^p\right] \leq 2^{p-1} \left( C_1 + L^p \|\theta_t - \theta^*\|^{p\gamma} \right).
\]
Letting \( C_p = 2^{p-1} \max\{C_1, L^p\} \), we obtain:
\[
\mathbb{E}\left[\|\delta_t\|^p\right] \leq C_p \left( 1 + \|\theta_t - \theta^*\|^{p\gamma} \right).
\]
This completes the proof of the lemma.
\end{proof}
\subsection{Proof of Theorem \ref{theorem:hp-linear-bounds}: High-Probability Bounds, Linear Function Approximation}
\label{app:proof-hp-linear-bounds}

\paragraph{Theorem \ref{theorem:hp-linear-bounds} (High-Probability Error Bounds, Linear Function Approximation)}
\textit{
Under the assumptions: polynomial ergodicity \ref{lemma:polynomial-ergodicity}, TD(0) error decomposition \ref{theorem:td-error-decomposition}, Bounded Variance of the Martingale Term \ref{lemma:bounded-variance-martingale}, Step Size Condition \ref{ass:step-size-decay}, Concentration Property for High-Probability Bounds \ref{theorem:concentration-block}. With probability at least \(1 - \delta\), the TD error satisfies:
\[
\|\theta_t - \theta^*\| \leq C_\delta t^{-\beta/2} + C'_\delta \alpha_t^\gamma,
\]
where \(C_\delta\) and \(C'_\delta\) depend logarithmically on \(1/\delta\).
}

\begin{proof}
\;\newline
\paragraph{Introduction and Overview.}
We prove high-probability bounds on the parameter-estimation error
\[
  \|\theta_t - \theta^*\|
\]
by decomposing it into two terms: (1) a martingale term $M_t$ that captures fluctuations from the stochastic process (and admits an Azuma--Hoeffding-type concentration inequality), and (2) a remainder term $R_t$ whose magnitude is controlled via H\"older continuity and polynomial mixing assumptions. We then set a failure probability target $\delta$ and apply a union bound to combine these two bounds into the final result.

\medskip
\noindent\newline
\textbf{Notation and Setup:}
\begin{itemize}
\item \(\theta^*\) is the true (or optimal) parameter we compare against.  
\item \(\theta_t\) is the estimate at time \(t\).  
\item Polynomial mixing ensures that the dependence between successive samples decays at a polynomial rate, thus bounding variance-like terms and aiding Azuma--Hoeffding concentration.  
\item H\"older continuity (with exponent \(\gamma\)) allows bounding the error of certain remainder terms by a factor like \(t^{-\gamma/2}\).  
\end{itemize}

\paragraph{1) Decompose the Error:}
From Theorem \ref{theorem:td-error-decomposition}, we can write
\[
  \theta_t - \theta^*
  \;=\;
  M_t \;+\; R_t,
\]
where
\[
  M_t
  \;=\;
  \sum_{k=1}^t d_k
  \quad
  \text{(a sum of martingale differences)}, 
  \quad
  \text{and}
  \quad
  R_t
  \;\text{is the remainder term.}
\]
Typically, $M_t$ collects the ``random fluctuations'' arising from stochastic updates (e.g., stochastic gradients or TD errors), while $R_t$ accounts for the deterministic error or residual that decays due to stepsize choices, H\"older continuity, or polynomial ergodicity.

\paragraph{2) Martingale Term \(\boldsymbol{M_t}\) and Azuma--Hoeffding:}
To apply concentration inequalities, we first establish that the martingale differences \(d_k\) are conditionally sub-Gaussian. Specifically, under Assumption \ref{lemma:bounded-variance-martingale}, we have:
\[
\mathbb{E}\left[\exp\left(\lambda d_k \mid \mathcal{F}_{k-1}\right)\right] \leq \exp\left(\frac{\lambda^2 \sigma^2}{2}\right) \quad \forall \lambda \in \mathbb{R},
\]
where \(\mathcal{F}_{k-1}\) denotes the filtration up to time \(k-1\), and \(\sigma^2\) bounds the conditional variance of \(d_k\). This sub-Gaussian property allows us to apply the Azuma-Hoeffding inequality for martingales.
Under polynomial mixing and bounded (or at most $\sigma$-sub-Gaussian) increments, the sequence $\{d_k\}$ satisfies bounded conditional variances or bounded increments. By Azuma--Hoeffding's inequality for martingales (or adapted sequences),
\[\displaystyle
  \mathbb{P}\left(
    \bigl\|M_t\bigr\| > \epsilon
  \right)
  \;\le\;
  2
  \,\exp\left(-\tfrac{\displaystyle \epsilon^2}{\displaystyle 2\sum_{k=1}^t \alpha_k^2 C_\delta^2\,C_\nabla^2}\right),
\]
where:
\begin{itemize}
  \item \(\alpha_k\) is a stepsize (or learning rate) that appears inside the increments $d_k$.  
  \item $C_\delta$ and $C_\nabla$ reflect bounds on the TD errors (or gradients) under polynomial mixing and H\"older continuity. Specifically, $C_\delta$ could represent a uniform bound on the one-step TD fluctuation, and $C_\nabla$ a bound on the gradient/features.  
  \item $\sum_{k=1}^t \alpha_k^2 C_\delta^2 C_\nabla^2$ plays the role of the sum of squared step-size norms bounding the variance of $M_t$.  
\end{itemize}

Assuming a step size schedule of \(\alpha_k = \frac{\alpha_0}{k}\) for some \(\alpha_0 > 0\), we have
\[
\sum_{k=1}^t \alpha_k^2 = \alpha_0^2 \sum_{k=1}^t \frac{1}{k^2} \leq \alpha_0^2 \frac{\pi^2}{6} \leq C_3,
\]
for a constant \(C_3 > 0\). Substituting this into the concentration inequality yields
\[
\mathbb{P}\left( \|M_t\| > \epsilon \right) \leq 2 \exp\left(-\frac{\epsilon^2}{2 C_3 C_\delta^2 C_\nabla^2}\right).
\]

\paragraph{3) Setting Failure Probability to $\boldsymbol{\delta/2}$:}
To achieve a failure probability of \(\delta/2\) for the martingale term, we set
\[
2 \exp\left(-\frac{\epsilon^2}{2 C_3 C_\delta^2 C_\nabla^2}\right) = \frac{\delta}{2}.
\]
Solving for \(\epsilon\), we obtain
\[
\epsilon = C_1 \sqrt{\log\left(\frac{4}{\delta}\right)},
\]
where \(C_1 = \sqrt{2 C_3} C_\delta C_\nabla\). Therefore, with probability at least \(1 - \frac{\delta}{2}\),
\[
\|M_t\| \leq C_1 \sqrt{\log\left(\frac{4}{\delta}\right)}.
\]

\paragraph{4) Remainder Term \(\boldsymbol{R_t}\) via H\"older Continuity:}
To bound \(R_t\), we leverage H\"older continuity and polynomial ergodicity. Specifically, under Assumptions \ref{lemma:polynomial-ergodicity} and H\"older continuity with exponent \(\gamma\), we have
\[
\|R_t\| \leq C_2 t^{-\gamma/2},
\]
where \(C_2 > 0\) is a constant derived from the combination of H\"older continuity and mixing rates.

\paragraph{5) Combine Bounds and Apply the Union Bound:}
Both events \(\{\|M_t\| \leq C_1 \sqrt{\log(4/\delta)}\}\) and \(\{\|R_t\| \leq C_2 t^{-\gamma/2}\}\) hold with probabilities at least \(1 - \frac{\delta}{2}\) each. Applying the union bound, the probability that both events hold simultaneously is at least
\[
1 - \left(\frac{\delta}{2} + \frac{\delta}{2}\right) = 1 - \delta.
\]
Hence, with probability at least \(1 - \delta\),
\[
\|\theta_t - \theta^*\| = \|M_t + R_t\| \leq \|M_t\| + \|R_t\| \leq C_1 \sqrt{\log\left(\frac{4}{\delta}\right)} + C_2 t^{-\gamma/2}.
\]
Since \(\log(4/\delta) = \log(1/\delta) + \log 4 \leq 2 \log(1/\delta)\) for \(\delta \in (0,1)\), we can simplify the bound to
\[
\|\theta_t - \theta^*\| \leq C_1' \sqrt{\log\left(\frac{1}{\delta}\right)} t^{-1/2} + C_2 t^{-\gamma/2},
\]
where \(C_1' = C_1 \sqrt{2}\) absorbs the constant terms. This completes the proof of the high-probability bound:
\[
\mathbb{P}\left( \|\theta_t - \theta^*\| \leq C_1' \sqrt{\frac{\log(1/\delta)}{t}} + C_2 t^{-\gamma/2} \right) \geq 1 - \delta.
\]
\paragraph{6. Conclusion:}
Combining the martingale concentration (Azuma--Hoeffding) bound for $M_t$ with the H\"older-based decay of $R_t$, and invoking the union bound, establishes the desired probability guarantee:
\[
  \mathbb{P}\!\Bigl(
     \|\theta_t - \theta^*\|
     \;\le\;
     C_1 \sqrt{\tfrac{\log(1/\delta)}{t}}
     \;+\;
     C_2 t^{-\gamma/2}
  \Bigr)
  \;\ge\;
  1-\delta.
\]
\end{proof}

\subsection{Proof of Theorem \ref{theorem:bounds-non-linear-td}:
Non-linear TD(0) under Generalized Gradients, Variance Control, Polynomial Mixing, and (Optionally) H\"older Continuity}
\label{app:proof-bounds-non-linear-td-merged}

\paragraph{Theorem~\ref{theorem:bounds-non-linear-td} (High-Probability Bounds for Non-linear TD(0))}
\textit{
Let $\{x_t\}$ be a Markov chain on a state space $\mathcal{X}$ that satisfies \textbf{polynomial ergodicity} (rate $\beta>1$). 
Consider a non-linear TD(0) update of the form
\begin{align}
F(\theta, x) &= -\delta_t \nabla_\theta f_\theta(s)\\
\theta_{t+1} &= \theta_t - \alpha_t F(\theta_t, x_t)
\end{align}
where $F(\theta,x)$ is a \emph{generalized} (sub-)gradient of the possibly non-smooth TD objective.  Assume:
\begin{enumerate}
\item \textbf{Boundedness:} $\|F(\theta,x)\|\le G$ for all $(\theta,x)$,
\item \textbf{Variance / Higher-Order Moments:} $\mathrm{Var}[F(\theta,x)]\le \sigma^2$ (or $\|F(\theta,x)\|^2 \le \sigma'^2$ a.s.),
\item \textbf{Polynomial Mixing / Coupling:} The chain $\{x_t\}$ “forgets” initial conditions at a rate $t^{-\beta}$, enabling Freedman-type concentration with a small polynomial overhead,
\item \textbf{Zero-Mean at Optimum:} If $\theta^*$ is the TD fixed point / minimizer, then $\mathbb{E}[\,F(\theta^*,x)\,] = 0$,
\item \textbf{(Optional Refinement) H\"older Continuity:} 
\(
  \|\overline{F}(\theta) - \overline{F}(\theta^*)\|
  ~\le~
  L\,\|\theta - \theta^*\|^\gamma
  \quad(\gamma\in(0,1]),
\)
where $\overline{F}(\theta)=\mathbb{E}[\,F(\theta,x)\,]$ under stationarity.
\end{enumerate}
Then, under a diminishing step-size schedule $\{\alpha_t\}$ (e.g.\ $\alpha_t\sim 1/t^\omega,\,\omega>1/2$), there exist constants $C,C'>0$ such that for any $\delta\in(0,1)$, with probability at least $1-\delta$,
\[
  \|\theta_t - \theta^*\|
  ~\le~
  C\,t^{-\tfrac{\beta}{2}}
  ~+~
  C'\,\alpha_t^\gamma
  \quad
  (\forall\,t\ge1).
\]
}

\begin{proof}
\;\newline
We organize the proof in four main steps, merging the variance-based Freedman concentration and the optional H\"older argument for the remainder term.
\paragraph{Step 1: Error Decomposition (Martingale + Remainder).}
From the TD error, we have
\begin{align*}
\theta_{t+1} - \theta^* 
&= (\theta_t - \theta^*) - \alpha_t F(\theta_t, x_t).
\end{align*}
Summing from $k=1$ to $t$, we get the standard telescoping form:
\begin{align}
\theta_t - \theta^*
&=
-\sum_{k=1}^t \alpha_k F(\theta_k, x_k) \\
&=
\underbrace{\sum_{k=1}^t \alpha_k \Bigl[F(\theta_k, x_k) 
   - \mathbb{E}(F(\theta_k, x_k) \mid \mathcal{F}_k) \Bigr]}_{\displaystyle M_t}
+ \underbrace{\sum_{k=1}^t \alpha_k \,
   \mathbb{E}(F(\theta_k, x_k) \mid \mathcal{F}_k)}_{\displaystyle R_t}.
\label{eq:theta-t-decomp}
\end{align}
\begin{proof}
Recall that
\[
  M_t 
  \;=\; 
  \sum_{k=1}^t 
  \alpha_k \Bigl[
    F(\theta_k, x_k) 
    \;-\;
    \mathbb{E}\bigl(F(\theta_k,x_k) \mid \mathcal{F}_k\bigr)
  \Bigr].
\]
Define the increments 
\[
  d_k 
  \;=\; 
  \alpha_k \Bigl[
    F(\theta_k, x_k) 
    \;-\;
    \mathbb{E}\bigl(F(\theta_k,x_k) \mid \mathcal{F}_k\bigr)
  \Bigr],
\]
so \(M_t = \sum_{k=1}^t d_k\). We claim \(\{M_t\}\) is a martingale with respect to the filtration \(\{\mathcal{F}_t\}\). Indeed,
\begin{align*}
M_{t+1} 
&=\; 
M_t + d_{t+1},\\[6pt]
\mathbb{E}\bigl[M_{t+1} \mid \mathcal{F}_t\bigr]
&=\; 
\mathbb{E}\bigl[M_t \mid \mathcal{F}_t\bigr]
\;+\;
\mathbb{E}\bigl[d_{t+1} \mid \mathcal{F}_t\bigr]
\;=\;
M_t 
\;+\;
\alpha_{t+1}\!\Bigl(
  \mathbb{E}\bigl[F(\theta_{t+1},x_{t+1}) \mid \mathcal{F}_t\bigr]
  \;-\;
  \mathbb{E}\bigl[
    \mathbb{E}[\,F(\theta_{t+1},x_{t+1}) \mid \mathcal{F}_{t+1}\,]
    \mid \mathcal{F}_t
  \bigr]
\Bigr)\\[2pt]
&=\; 
M_t 
\;+\;
\alpha_{t+1}\Bigl(
  \mathbb{E}\bigl[F(\theta_{t+1},x_{t+1}) \mid \mathcal{F}_t\bigr]
  \;-\;
  \mathbb{E}\bigl[F(\theta_{t+1},x_{t+1}) \mid \mathcal{F}_t\bigr]
\Bigr)
\;=\;
M_t.
\end{align*}
Hence \(M_t\) is a martingale.
\end{proof}
We will prove high-probability bounds on $\|M_t\|$ (variance-based Freedman) and on $\|R_t\|$ (mixing + optional H\"older arguments), then combine via union bound.
\paragraph{Step 2: Martingale Term \boldmath$M_t$ (Freedman / Higher-Order).}\;\\

\textbf{(a) Bounded Increments \& Conditional Variance.}\;\\
Define $d_k=\alpha_k[F(\theta_k,x_k)-\mathbb{E}(\cdot\mid\mathcal{F}_k)]$.  Then $\mathbb{E}[\,d_k\mid\mathcal{F}_k]=0$ and $\|d_k\|\le2\,\alpha_k G$.  Moreover,
\[
   \mathbb{E}[\|d_k\|^2 \mid \mathcal{F}_k]
   ~\le~
   \alpha_k^2(\sigma^2+G^2)
   \quad
   (\text{due to boundedness/variance of }F).
\]
\begin{proof}
Recall 
\[
  d_k \;=\; \alpha_k \Bigl[
      F(\theta_k,x_k) 
      \;-\; 
      \mathbb{E}\bigl(F(\theta_k,x_k) \mid \mathcal{F}_k\bigr)
  \Bigr].
\]
Then
\[
  \mathbb{E}\bigl[\|d_k\|^2 \mid \mathcal{F}_k\bigr]
  \;=\;
  \alpha_k^2 
  \,\mathbb{E}\Bigl[
    \bigl\|F(\theta_k,x_k) 
    \;-\; 
    \mathbb{E}\bigl(F(\theta_k,x_k) \mid \mathcal{F}_k\bigr)\bigr\|^2
    \,\Big|\,
    \mathcal{F}_k
  \Bigr].
\]
Since \(F(\theta_k,x_k)\) is almost surely bounded by \(G\) and has conditional variance at most \(\sigma^2\), we have
\[
  \mathbb{E}\Bigl[
    \bigl\|F(\theta_k,x_k) 
    - 
    \mathbb{E}\bigl(F(\theta_k,x_k) \mid \mathcal{F}_k\bigr)\bigr\|^2
    \,\Big|\,
    \mathcal{F}_k
  \Bigr]
  \;\le\;
  \sigma^2 + G^2.
\]
Thus
\[
  \mathbb{E}\bigl[\|d_k\|^2 \mid \mathcal{F}_k\bigr]
  \;\le\;
  \alpha_k^2 \,\bigl(\sigma^2 + G^2\bigr).
\]
\end{proof}
Hence the predictable variation $v_t=\sum_{k=1}^t \mathbb{E}[\|d_k\|^2\mid\mathcal{F}_{k-1}]$ is on the order of $\sum\alpha_k^2$, which is typically $O(\log t)$ or $O(1)$ if $\{\alpha_k\}$ decreases fast enough.

\textbf{(b) Freedman Concentration in a Polynomially Mixing Setting.}\;\\
While Freedman’s inequality typically requires \emph{martingale} increments, the Markov chain correlation means $\{d_k\}$ are not strictly i.i.d.  We handle this by a \emph{polynomial-ergodicity/coupling} argument from Lemma~\ref{lemma:coupling-argument-polynomial-mixing}]{...}.  Concretely:
\begin{itemize}
\item \textbf{Block Independence:} Partition time into blocks of size $b\approx t^\alpha$.  The distribution of each block is within $O(b^{-\beta})$ in total variation from stationarity.  This implies the partial sums of $d_k$ across blocks can be approximated by a block-wise martingale, introducing only a polynomially small $\approx t\,b^{-\beta}$ correction.
\item \textbf{Applying Freedman/Tropp:} Freedman’s tail bound for vector martingales \cite{tropp2011freedman,freedman1975tail} then yields
\[
  \mathbb{P}\!\bigl(\|M_t\|>\epsilon\bigr)
  \;\le\;
  d\,
  \exp\!\Bigl(
     -\tfrac{\epsilon^2}{\kappa\,v_t + b\,\epsilon}
  \Bigr)
  \;+\;
  \text{(polynomial mixing correction)}.
\]
Choosing $b\approx t^\alpha$ with $\alpha>0$ ensures the correction is $O(t\,t^{-\alpha\beta})=O(t^{1-\alpha\beta})$, which can be forced below $\delta/2$ for large $t$ if $\alpha\beta>1$.
\end{itemize}
Hence with probability at least $1-\tfrac{\delta}{2}$,
\[
  \|M_t\|
  ~\le~
  C_1\,\sqrt{\sum_{k=1}^t\alpha_k^2}\,\sqrt{\log\bigl(\tfrac{1}{\delta}\bigr)}
  ~+~
  \text{(possibly smaller correction terms)}.
\]
If $\sum_{k=1}^t\alpha_k^2=O(\log t)$, this is typically $O(\sqrt{\log t}\,\sqrt{\log(1/\delta)})$.  For many standard step-size choices, this remains bounded or slowly growing in $t$.  Denote that high-probability event by
\begin{align}
\label{eq:Mt-bound-merged}
\|M_t\|
~\le~
C_1
\quad
(\text{with probability }1-\tfrac{\delta}{2}).
\end{align}

\paragraph{Step 3: Remainder Term \boldmath$R_t$.}

Recall
\[
  R_t
  ~=~
  \sum_{k=1}^t \alpha_k\,\mathbb{E}[\,F(\theta_k,x_k)\bigr].
\]
Under stationarity (or near-stationarity), we write $\mathbb{E}[\,F(\theta_k,x_k)]\approx \overline{F}(\theta_k)$, where $\overline{F}(\theta)=\mathbb{E}_{x\sim\pi}[F(\theta,x)]$.  By definition, $\overline{F}(\theta^*)=0$ at the optimum $\theta^*$.  

\smallskip
\noindent
\begin{lemma}[Monotonicity/Negative Drift under Generalized Gradients]
\label{lemma:monotonicity}
Suppose there is a point $\theta^*$ such that $\overline{F}(\theta^*) = 0$, and assume $\overline{F}$ is \emph{monotone} in the sense that for all $\theta$,
\[
  \bigl\langle
    \overline{F}(\theta) - \overline{F}(\theta^*),\;
    \theta - \theta^*
  \bigr\rangle
  \;\ge\; 0,
  \quad\text{and}\quad
  \|\overline{F}(\theta)\|\;\le\; G.
\]
Then the ``remainder'' term
\[
  R_t
  \;=\;
  \sum_{k=1}^t \alpha_k \,\overline{F}\bigl(\theta_k\bigr)
\]
is $\mathcal{O}\!\bigl(\sum_{k=1}^t \alpha_k^2\bigr)$. In particular, if $\sum_{k=1}^\infty \alpha_k^2$ converges, then $R_t$ remains uniformly bounded as $t \to \infty$.
\end{lemma}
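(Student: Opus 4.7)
The plan is to prove a Robbins--Monro-style drift bound by studying the squared error $\|e_k\|^2 := \|\theta_k - \theta^*\|^2$ and exploiting the monotonicity hypothesis to cancel the first-order cross term. Concretely, I would expand
\[
  \|e_{k+1}\|^2
  \;=\;
  \|e_k\|^2
  \;-\;
  2\alpha_k \bigl\langle F(\theta_k,x_k),\, e_k \bigr\rangle
  \;+\;
  \alpha_k^2 \|F(\theta_k,x_k)\|^2,
\]
take conditional expectation with respect to $\mathcal{F}_k$, and replace $\mathbb{E}[F(\theta_k,x_k)\mid\mathcal{F}_k]$ by $\overline{F}(\theta_k)$ up to a mixing remainder (discussed below). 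Since $\overline{F}(\theta^*)=0$, the cross term equals $\bigl\langle \overline{F}(\theta_k) - \overline{F}(\theta^*),\, e_k\bigr\rangle$, which is \emph{nonnegative} by the monotonicity hypothesis. The quadratic term is bounded by $G^2\alpha_k^2$ because $\|F(\theta,x)\|\le G$.

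Next, I would telescope: summing the one-step inequality from $k=1$ to $t$ yields
\[
  2\sum_{k=1}^t \alpha_k\,
  \mathbb{E}\bigl[\bigl\langle \overline{F}(\theta_k), e_k\bigr\rangle\bigr]
  \;+\;
  \mathbb{E}[\|e_{t+1}\|^2]
  \;\le\;
  \|e_0\|^2
  \;+\;
  G^2 \sum_{k=1}^t \alpha_k^2,
\]
so both the scalar drift accumulation and $\mathbb{E}[\|e_{t+1}\|^2]$ are controlled by $\|e_0\|^2 + G^2\sum_k\alpha_k^2$. Under $\sum_k \alpha_k^2<\infty$, this immediately forces $\mathbb{E}[\|e_t\|^2]$ to stay uniformly bounded. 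To transfer this to the vector remainder $R_t=\sum_{k=1}^t\alpha_k\overline{F}(\theta_k)$, I would use the decomposition $R_t = -(\,e_t - e_0\,) - M_t$ inherited from Step~1 of the proof: since $\mathbb{E}\|e_t\|^2$ is bounded and $M_t$ has the Freedman bound established in Step~2, we obtain $\|R_t\|=\mathcal{O}\!\bigl(\sqrt{\sum_k\alpha_k^2}\bigr)$ in the appropriate norm, which matches the $\mathcal{O}(\sum_k \alpha_k^2)$ scaling when squared and telescoped.

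The main obstacle is the Markovian mismatch: strictly speaking, $\mathbb{E}[F(\theta_k,x_k)\mid\mathcal{F}_k]\neq \overline{F}(\theta_k)$ because $x_k$ is not drawn from the stationary distribution $\pi$ and because $\theta_k$ itself depends on past states. The plan here is to invoke the polynomial ergodicity of Lemma~\ref{lemma:polynomial-ergodicity} together with the coupling bound of Lemma~\ref{lemma:coupling-argument-polynomial-mixing} to write
\[
  \bigl\|
    \mathbb{E}[F(\theta_k,x_k)\mid\mathcal{F}_k]
    \;-\;
    \overline{F}(\theta_k)
  \bigr\|
  \;\le\;
  C\,G\,k^{-\beta},
\]
so that the aggregate mismatch contributes $\sum_k \alpha_k k^{-\beta}$, which is summable whenever $\beta>1$ and is absorbed into the $G^2\sum_k\alpha_k^2$ budget after a Young-inequality split $\alpha_k k^{-\beta}\le \tfrac12\alpha_k^2 + \tfrac12 k^{-2\beta}$. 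The remaining subtlety is that the monotonicity assumption is imposed on the stationary mean field $\overline{F}$, not on the noisy sample $F(\cdot,x_k)$, so the cancellation only occurs \emph{after} conditioning; handling this carefully (and ensuring the inductive control of $\mathbb{E}\|e_k\|^2$ does not itself blow up through the mixing slack) is the technically delicate step. Once this is in place, the telescoping argument closes and the uniform boundedness of $R_t$ follows.
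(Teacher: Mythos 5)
Your core mechanism is the same one the paper uses: expand $\|\theta_{k+1}-\theta^*\|^2$, let monotonicity kill the cross term $\langle \overline{F}(\theta_k),\theta_k-\theta^*\rangle$, bound the quadratic term by $G^2\alpha_k^2$, and telescope. Where you genuinely diverge is in what recursion you run this on and how you extract the bound on $R_t$ at the end. The paper applies the expansion to the purely deterministic surrogate $\theta_{k+1}=\theta_k-\alpha_k\overline{F}(\theta_k)$ and then simply asserts that $\bigl\|\sum_k\alpha_k\overline{F}(\theta_k)\bigr\|=\mathcal{O}(\sum_k\alpha_k^2)$; the telescoped inequality only controls the scalar accumulation $\sum_k\alpha_k\langle\overline{F}(\theta_k),e_k\rangle$, and the jump to the vector norm is left unjustified. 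You instead work with the actual stochastic iterates, explicitly quarantine the Markovian mismatch via polynomial ergodicity and a Young split, and recover $\|R_t\|$ through the identity $R_t=e_t-e_0-M_t$ combined with the second-moment bound on $e_t$ and the Freedman bound on $M_t$. This is more faithful to the setting and closes the scalar-to-vector gap that the paper's proof papers over; what it costs you is the sharper-looking rate, since your route delivers $\|R_t\|=\mathcal{O}\bigl(\|e_0\|+\sqrt{\sum_k\alpha_k^2}\bigr)$ rather than $\mathcal{O}(\sum_k\alpha_k^2)$. Under the hypothesis $\sum_k\alpha_k^2<\infty$ both statements reduce to uniform boundedness of $R_t$, which is the operative conclusion of the lemma, so this is a cosmetic rather than substantive loss.

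Two points to tighten. First, as written your mismatch bound $\bigl\|\mathbb{E}[F(\theta_k,x_k)\mid\mathcal{F}_k]-\overline{F}(\theta_k)\bigr\|\le CGk^{-\beta}$ is vacuous-or-false: if $x_k$ is $\mathcal{F}_k$-measurable then the conditional expectation is just $F(\theta_k,x_k)$ itself, which is not pointwise close to $\overline{F}(\theta_k)$. You need to condition on $\mathcal{F}_{k-\tau}$ for a lag $\tau$ (freezing $\theta$ at time $k-\tau$ and paying a $\tau^{-\beta}$ ergodicity cost plus a drift cost for moving $\theta$ over $\tau$ steps), or state the bound unconditionally for the law of $x_k$ started from $x_0$. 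The paper commits the same indexing slip, so this is not a disqualifying gap, but it is the technically delicate step you yourself flag and it should be executed rather than gestured at. Second, the phrase that $\mathcal{O}(\sqrt{\sum_k\alpha_k^2})$ ``matches the $\mathcal{O}(\sum_k\alpha_k^2)$ scaling when squared'' conflates a bound on $\|R_t\|^2$ with a bound on $\|R_t\|$; just state plainly that you prove uniform boundedness, which is what the lemma actually needs downstream.
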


\begin{proof}[Proof]
By stationarity or near-stationarity, we assume
\[
  \mathbb{E}\bigl[F(\theta_k,x_k)\bigr]
  \;=\;
  \overline{F}(\theta_k).
\]
Hence
\[
  R_t
  \;=\;
  \sum_{k=1}^t \alpha_k \,\overline{F}(\theta_k).
\]
Since $\overline{F}$ is monotone and $\overline{F}(\theta^*) = 0$, we have
\[
  \bigl\langle
    \overline{F}(\theta_k),\;
    \theta_k - \theta^*
  \bigr\rangle
  \;=\;
  \bigl\langle
    \overline{F}(\theta_k)
    - \overline{F}(\theta^*),\;
    \theta_k - \theta^*
  \bigr\rangle
  \;\ge\; 0.
\]
Consider the purely ``deterministic'' update $\theta_{k+1} = \theta_k - \alpha_k \,\overline{F}(\theta_k)$.  A standard expansion gives
\[
  \|\theta_{k+1} - \theta^*\|^2
  \;=\;
  \|\theta_k - \theta^*\|^2
  - 2\,\alpha_k\,\bigl\langle
        \overline{F}(\theta_k),\,
        \theta_k - \theta^*
      \bigr\rangle
  \;+\;
  \alpha_k^2 \,\|\overline{F}(\theta_k)\|^2.
\]
Since the inner product is nonnegative by monotonicity, we have
\[
  \|\theta_{k+1} - \theta^*\|^2
  \;\le\;
  \|\theta_k - \theta^*\|^2
  + \alpha_k^2\,G^2.
\]
Summing over $k$ implies
\(
  \|\theta_k - \theta^*\|^2
\)
is bounded by
\(
  \|\theta_0 - \theta^*\|^2 + G^2\,\sum_{j=1}^k \alpha_j^2.
\)
In turn, one can show
\[
  \sum_{k=1}^t \alpha_k\,\overline{F}(\theta_k)
  \;=\;
  \underbrace{
    \sum_{k=1}^t \alpha_k \,\bigl[\overline{F}(\theta_k) - \overline{F}(\theta^*)\bigr]
  }_{\text{nonnegative by monotonicity}}
  \;+\;
  \text{(a constant depending on $\theta_0$ and the $\alpha_k^2$ terms).}
\]
Consequently,
\(
  \bigl\|\sum_{k=1}^t \alpha_k\,\overline{F}(\theta_k)\bigr\|
  = \mathcal{O}\!\bigl(\sum_{k=1}^t \alpha_k^2\bigr),
\)
as claimed.
\end{proof}
\smallskip
\noindent
\textbf{(b) \underline{Optional} H\"older Continuity Refinement.}
If we do assume
\[
  \|\overline{F}(\theta) - \overline{F}(\theta^*)\|
  ~\le~
  L\,\|\theta-\theta^*\|^\gamma,
\]
then we can improve the remainder bound.  Specifically, if we have an inductive guess \(\|\theta_k-\theta^*\|\le D\,k^{-\tfrac{\beta}{2}}+D'\,\alpha_k^\gamma\), we get
\[
  \|\overline{F}(\theta_k)\|
  ~\le~
  L\,\|\theta_k-\theta^*\|^\gamma
  ~\le~
  L\bigl(D\,k^{-\tfrac{\beta}{2}}+D'\,\alpha_k^\gamma\bigr)^\gamma.
\]
\paragraph{Inductive Argument}
Assume 
\[
  \|\overline{F}(\theta) - \overline{F}(\theta^*)\|
  \;\;\le\;\;
  L\,\|\theta - \theta^*\|^\gamma,
  \quad
  \gamma \in (0,1],
\]
and that the Markov chain is (near) stationary so that
\[
  \mathbb{E}[\,F(\theta_k,x_k)\,]
  \;=\;
  \overline{F}(\theta_k)
  \;+\;
  \mathcal{O}\!\bigl(k^{-\beta}\bigr)
  \quad
  (\text{or exact if fully stationary}).
\]

\smallskip
\noindent
\textbf{Step 1: Inductive Hypothesis.}
Suppose we aim to show 
\[
  \|\theta_k - \theta^*\|
  \;\le\;
  D\,k^{-\tfrac{\beta}{2}}
  \;+\;
  D'\,\alpha_k^\gamma
  \quad
  \text{for each }k.
\]
For $k=1$, we pick $D,D'$ large enough so that
\(\|\theta_1 - \theta^*\|\le D + D'\,\alpha_1^\gamma\),
which is always possible if the initial distance
\(\|\theta_1-\theta^*\|\) is finite.

\smallskip
\noindent
\textbf{Step 2: Inductive Step.}
Assume the bound holds at time \(k\). Then
\[
  \theta_{k+1}
  \;=\;
  \theta_k \;-\;\alpha_k\,F(\theta_k,x_k).
\]
Taking expectation (or working directly under stationarity) gives
\[
  \|\theta_{k+1} - \theta^*\|
  \;\approx\;
  \Bigl\|\,
    \theta_k - \theta^*
    \;-\;
    \alpha_k\,\overline{F}(\theta_k)
  \Bigr\|
  \;+\;
  \text{(small mixing error)}.
\]
By H\"older continuity,
\[
  \|\overline{F}(\theta_k)\|
  \;\le\;
  L\,\|\theta_k - \theta^*\|^\gamma
  \;\le\;
  L\,\Bigl(
         D\,k^{-\tfrac{\beta}{2}}
         \;+\;
         D'\,\alpha_k^\gamma
       \Bigr)^\gamma.
\]
Since $\gamma \in (0,1]$, we can simplify
\(\bigl(a+b\bigr)^\gamma \leq a^\gamma + b^\gamma\) up to constant factors, giving us terms like $k^{-\tfrac{\beta\gamma}{2}}$ and $\alpha_k^{\gamma^2}$. Thus
\[
  \alpha_k\,\|\overline{F}(\theta_k)\|
  \;\le\;
  \alpha_k \,\Bigl(
     D\,k^{-\tfrac{\beta}{2}} + D'\,\alpha_k^\gamma
  \Bigr)^\gamma
  \;=\;
  \mathcal{O}\!\bigl(
     k^{-\tfrac{\beta\gamma}{2}}
  \bigr)
  \;+\;
  \mathcal{O}\!\bigl(
     \alpha_k^{\,1+\gamma^2}
  \bigr).
\]
One checks that (for appropriate choices of $D,D'$) this is sufficient to ensure
\[
  \|\theta_{k+1} - \theta^*\|
  \;\le\;
  D\,(k+1)^{-\tfrac{\beta}{2}}
  \;+\;
  D'\,\alpha_{k+1}^\gamma,
\]
closing the induction.

\smallskip
\noindent
\textbf{Step 3: Summation for the Remainder.}
Since
\[
  R_t
  \;=\;
  \sum_{k=1}^t 
    \alpha_k \,\overline{F}(\theta_k),
\]
we use the above bound 
$\|\overline{F}(\theta_k)\| \le L\,(\dots)$ to get
\[
  \bigl\|R_t\bigr\|
  \;\le\;
  \sum_{k=1}^t \alpha_k \,\|\overline{F}(\theta_k)\|
  \;=\;
  \sum_{k=1}^t
     \mathcal{O}\!\bigl(
       \alpha_k \,k^{-\tfrac{\beta\gamma}{2}}
     \bigr)
  \;+\;
  \sum_{k=1}^t
     \mathcal{O}\!\bigl(
       \alpha_k \,\alpha_k^{\gamma^2}
     \bigr).
\]
For typical step-size schedules (e.g.\ $\alpha_k \sim 1/k^\omega$) and $\gamma\le1$, each sum yields a polynomially decaying term in $k$. Concretely, one finds
\[
  \|R_t\|
  \;=\;
  \mathcal{O}\!\bigl(t^{-\tfrac{\beta\gamma}{2}}\bigr)
  \;+\;
  \text{(smaller terms if }\gamma\le1\text{)}.
\]
Thus the remainder $R_t$ remains small at a polynomial rate, yielding the high-probability bound
\[
  \|\theta_t - \theta^*\|
  \;\le\;
  C\,t^{-\tfrac{\beta}{2}}
  \;+\;
  C'\,\alpha_t^\gamma
\]
once we also control the martingale term $M_t$ by Freedman‐type concentration.
\smallskip
\noindent
\textbf{(c) Final Form of \boldmath$R_t$.}
Combining monotonicity or the optional H\"older step with polynomial mixing ensures that $R_t$ shrinks at a polynomial rate.  If \(\gamma=1\), we get $R_t=O(t^{-\frac{\beta}{2}})$; if $0<\gamma<1$, we get $R_t=O(\alpha_t^\gamma)$ or $t^{-\tfrac{\beta\gamma}{2}}$.  Denote such a bound as
\begin{align}
\label{eq:Rt-bound-merged}
\|R_t\|
~\le~
C_2\,t^{-\tfrac{\beta\gamma}{2}}
\quad\text{(w.h.p.)},
\end{align}
noting that $\gamma$ can be replaced by $1$ (or the relevant exponent) if no further continuity assumption is used.

\paragraph{Step 4: Combine Bounds via Union Bound.}

We have, from \eqref{eq:theta-t-decomp},
\[
  \theta_t - \theta^*
  ~=~
  M_t + R_t.
\]
From \eqref{eq:Mt-bound-merged}, with probability $\ge1-\frac{\delta}{2}$,
\[
  \|M_t\|
  ~\le~
  C_1.
\]
From \eqref{eq:Rt-bound-merged}, with probability $\ge1-\frac{\delta}{2}$,
\[
  \|R_t\|
  ~\le~
  C_2\,t^{-\tfrac{\beta\gamma}{2}}
  \quad(\text{or }C_2\,\alpha_t^\gamma\text{ in simpler forms}).
\]
By a union bound, with probability $\ge1-\delta$,
\begin{align}
\|\theta_t - \theta^*\|
~\le~
C_1 + C_2\,t^{-\tfrac{\beta\gamma}{2}}
\quad(\text{plus possibly $\alpha_t^\gamma$ terms}).
\label{eq:theta-final-merged}
\end{align}
If the step-size is chosen so that $M_t=O(\alpha_t^\gamma)$ (e.g.\ $\sum\alpha_t^2=O(\alpha_t^{2\gamma})$), we get a final statement:
\[
  \|\theta_t-\theta^*\|
  ~\le~
  C\,t^{-\tfrac{\beta}{2}}
  ~+~
  C'\,\alpha_t^\gamma
\]
with probability at least $1-\delta$.  
\end{proof}

\end{document}